\documentclass{article}

\PassOptionsToPackage{numbers,sort&compress}{natbib}
\usepackage[preprint]{neurips_2026}

\usepackage[T1]{fontenc}
\usepackage[utf8]{inputenc}

\usepackage{amsmath,amssymb,amsthm}
\usepackage{xcolor}
\usepackage{booktabs}
\usepackage{float}
\usepackage{graphicx}
\usepackage{algorithm}
\usepackage{algpseudocode}
\usepackage{multirow}
\usepackage[hidelinks]{hyperref}
\hypersetup{
  pdftitle={SHiPPO: Recurrent Memory with Transported Polynomial Projections},
  pdfauthor={Tomoya Mizuguchi and Bum Jun Kim},
  pdfsubject={},
  pdfkeywords={state space models, recurrent memory, HiPPO, structured sequence models, transported memory}
}
\usepackage{placeins}
\usepackage{array}
\usepackage{ragged2e}
\usepackage{makecell}

\newcolumntype{L}[1]{>{\RaggedRight\arraybackslash}p{#1}}
\newcolumntype{C}[1]{>{\Centering\arraybackslash}p{#1}}
\newcommand{\ms}[2]{\makecell[c]{$#1$\\[-0.25ex]{\scriptsize $\pm\,#2$}}}
\theoremstyle{definition}
\newtheorem{assumption}{Assumption}[section]
\newtheorem{definition}[assumption]{Definition}
\newtheorem{example}[assumption]{Example}
\newtheorem*{assumption*}{Assumption}
\newtheorem*{definition*}{Definition}
\newtheorem*{example*}{Example}

\theoremstyle{plain}
\newtheorem{lemma}[assumption]{Lemma}
\newtheorem{proposition}[assumption]{Proposition}
\newtheorem{theorem}[assumption]{Theorem}
\newtheorem{corollary}[assumption]{Corollary}
\newtheorem*{lemma*}{Lemma}
\newtheorem*{proposition*}{Proposition}
\newtheorem*{theorem*}{Theorem}
\newtheorem*{corollary*}{Corollary}

\theoremstyle{remark}
\newtheorem{remark}[assumption]{Remark}
\newtheorem*{remark*}{Remark}

\title{SHiPPO: Recurrent Memory with Transported Polynomial Projections}

\author{%
  Tomoya Mizuguchi\\
  Kyoto University, Japan\\
  \And
  Bum Jun Kim\thanks{Corresponding author: \texttt{bumjun.kim@weblab.t.u-tokyo.ac.jp}}\\
  The University of Tokyo, Japan
}

\begin{document}

\maketitle
\enlargethispage{2\baselineskip}

\begin{abstract}
HiPPO gives recurrent states memory semantics as coefficients of online
polynomial projections, but in fixed channel coordinates. Modern selective SSMs,
by contrast, rely on token-dependent control and channel interaction. We
introduce SHiPPO (Sylvester HiPPO), a transported projection-memory prior that
lifts HiPPO coefficient memories into a moving channel frame. For any fixed or
realized right-transport path, SHiPPO transports the approximation family and
channel metric together; conditional on that path, the state is ordinary HiPPO
in a tied moving frame and follows Sylvester coefficient dynamics, preserving
the left online-memory operator while adding right-action transport. For
selective-SSM execution, we derive a restricted group-local realization with
controller-compatible right actions, exponential-adjusted updates, exact
block-affine scan, and recurrent decoding. We also give a
simultaneous-reducibility criterion identifying when right transports collapse
to static mixing plus independent scalar or blockwise banks. Controlled
diagnostics show that larger current-token write rank improves ordinary
prediction error but cannot recover order-sensitive changes to already-written
memory; transported-memory variants recover this signal, which disappears when
the transport pathway is removed. A finite-field associative-recall diagnostic
with interleaved bindings, operations, and queries provides complementary
autoregressive evidence while leaving the preferred right-action realization
open. Taken together, these results support SHiPPO as a mechanistically
grounded transported-memory prior, with evidence focused on memory mechanisms
rather than broad sequence-modeling dominance.
\end{abstract}

\vspace{-0.8em}


\section{Introduction}
\label{sec:intro}

State space models (SSMs) have re-emerged as efficient sequence backbones,
combining recurrent inference with long-context scalability. We frame recurrent
sequence modeling as \emph{inductive-bias selection}: choosing an ODE,
recurrence family, parameterization, discretization, or initialization selects a
structured family of causal dynamics and favors particular spectra, timescales,
and optimization paths \citep{mitchell1980need,gordon1995bias,vardi2023implicit}.
For recurrent memory, a stronger model-based bias is possible: the hidden state
can be designed to represent an approximation of the revealed history rather
than an unconstrained latent vector \citep{vonrueden2023informed,shlezinger2023modelbased}.
The Legendre Memory Unit and HiPPO are canonical examples: LMU represents a
sliding history window in a Legendre basis, while HiPPO derives online
compression dynamics from projection onto polynomial bases
\citep{voelker2019lmu,gu2020hippo}. Throughout, we use ``prior'' in this
inductive- or modeling-bias sense, not necessarily as a Bayesian prior over
parameters \citep{fortuin2022priors}.

In HiPPO, the recurrent ODE is not merely an architectural template; it is the
coefficient dynamics induced by an online approximation problem, so the state
stores coefficients of a history approximation \citep{gu2020hippo}. This
objective-derived view has shaped structured state-space models, including S4,
generalized-basis interpretations of HiPPO, and diagonal simplifications such as
DSS and S4D \citep{gu2021s4,gu2022trainhippo,gupta2022dss,gu2022s4d}.
Recent work further studies how memory priors interact with initialization,
timescales, diagonalization, robust parameterization, and spectral or frequency
bias \citep{liu2025autocorrelation,lienen2025unhippo,yu2023ptd,yu2024hope,yu2024frequency,solozabal2025spectralbias}.
Our goal is to extend the underlying principle: an online approximation
objective should justify the recurrent memory dynamics, while parameterization
and initialization determine how a trainable model explores the resulting
family.

Modern SSM and linear-recurrence blocks have moved beyond the original
one-sided independent-memory setting. S5 replaces many independent SISO SSMs by
a MIMO state-space layer \citep{smith2023s5}. H3 introduces multiplicative
interactions between SSM outputs and input projections for language modeling
\citep{fu2023h3}. Mamba makes SSM parameters input-dependent and pairs the
resulting selective recurrence with a hardware-aware scan \citep{gu2023mamba}.
SSD/Mamba-2 analyzes selective recurrences through semiseparable structure and
efficient algorithms \citep{dao2024transformersssms}. Recent gated
linear-attention and linear-recurrence models combine gates, delta-style
updates, or state expansion for adaptive memory control
\citep{yang2024gla,yang2024deltanet,qin2024hgrn2,yang2025gateddeltanet}, and
recent variants such as Mamba-3 further broaden the expressive recurrence design
space \citep{lahoti2026mamba3}. These developments make clear that channel
interaction and token-dependent control are not themselves
the missing novelty. The open question for memory-prior design is different: can
channel interaction be made part of the online-approximation memory semantics
itself, rather than added only as an architectural mixer, projection, gate, or
controller?

We introduce \emph{SHiPPO} (\emph{Sylvester HiPPO}), a transported online
approximation framework for channel-interacting recurrent memory. At the
operator level, SHiPPO is defined by a projection problem, not by postulating a
recurrence in isolation. Given an online-memory basis and an admissible
right-transport path, SHiPPO jointly transports the channel metric and the
approximation family. The resulting coefficient matrix satisfies the Sylvester
dynamics
\[
    \dot C(t)=A_L(t)C(t)+B_L(t)f(t)^\top+C(t)A_R(t).
\]
Here \(A_L,B_L\) are supplied by the chosen online approximation problem, while
\(A_R\) describes how channel coordinates are transported along the history. The
inner projection problem minimizes only over the coefficient state \(C\); the
transport path may be fixed, parameterized, or generated by a causal controller
in an outer trainable model. Thus, when \(A_R\) is input-dependent, the semantics
is pathwise: conditional on the realized transport path, the state is the
coefficient of the corresponding transported approximation problem, even though
the full input--output map may be nonlinear. For a fixed realized path, SHiPPO is
ordinary HiPPO in a tied moving channel frame, with the history encoder,
coefficient decoder, and Sylvester gauge term determined by the same transport
path. 

This gives a lifting principle for deep SSM layers whose dynamics, structure, or
initialization are motivated by online-memory priors. The lift does not prescribe
a universal right generator or a particular right-transport initialization.
Rather, it replaces a one-sided online-memory coefficient equation by a
transported coefficient equation, separating the memory basis supplied by the
left operator from the channel-frame evolution supplied by the right transport.
For S4/S4D-style models, this corresponds to adding a right-action transport
term to an existing coefficient equation. For Mamba-style selective SSMs, the
constraint is stronger: the layer must preserve tokenwise parameter generation,
a lightweight exact scan, and recurrent decoding. A fully channelwise lift of a
diagonal selective recurrence does not, in general, preserve a small scan algebra
under nontrivial right transport. We therefore study a restricted
scan-compatible SHiPPO-derived realization: channels are partitioned into small
transport groups, the left dynamics remain diagonal in the state dimension and
tied within each group, and the right transport is controller-compatible and
group-local. These restrictions are not part of the abstract SHiPPO definition;
they are the computational price paid to obtain exponential-adjusted updates,
exact group-local block-affine scan, and recurrent decoding for the chosen right
action. We also identify a simultaneous-reducibility collapse criterion showing
when right transports reduce, up to static channel mixing, to independent scalar
or blockwise transported banks. This motivates non-reducible transport
parameterizations in the scan-compatible realization.

This distinction also has empirical content. High-rank current-step source/write
updates can increase what is written into memory at the current token, but they
do not by themselves implement a later right action on memory that has already
been written. We therefore use paired noncommutative transport diagnostics to
separate source/write rank from future transport of stored memory, rather than
treating broad associative recall as the main evidence. We then use
Transport-MQAR as a complementary autoregressive finite-field diagnostic in
which bindings, operations, and queries are interleaved.

\begin{figure}[t]
\centering
\includegraphics[width=0.95\linewidth]{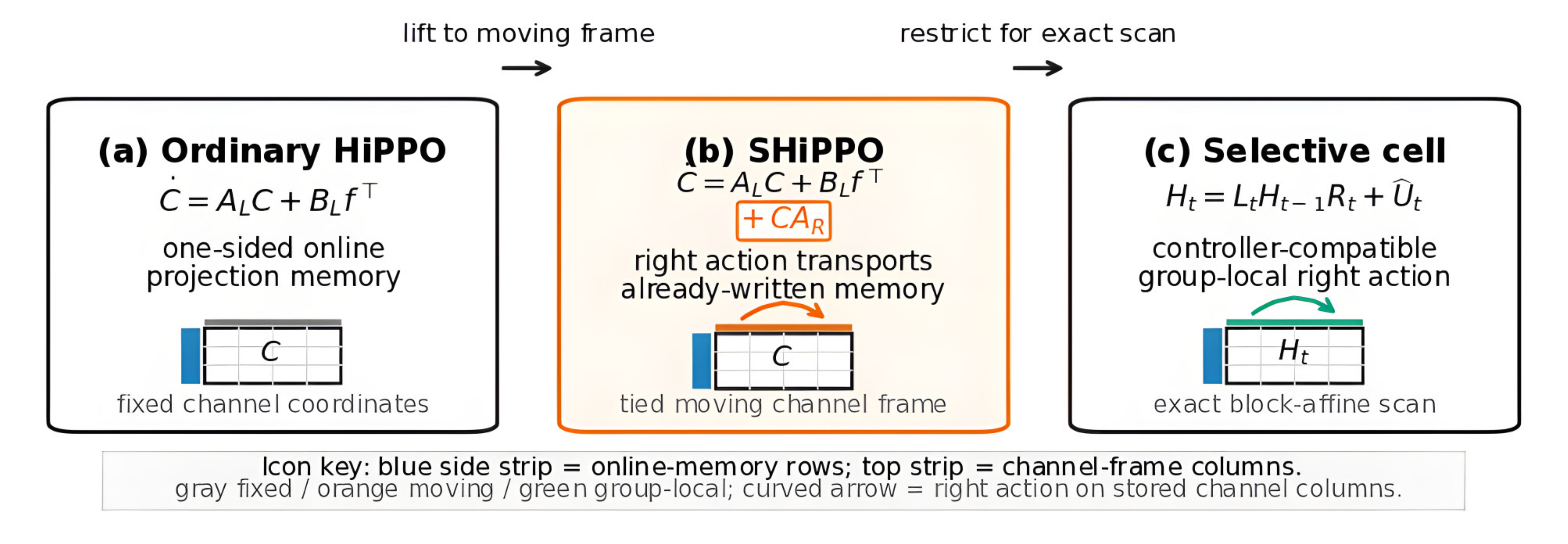}
\caption{Overview of the SHiPPO lift. Ordinary HiPPO gives one-sided online
projection memory in fixed channel coordinates. SHiPPO transports the channel
frame while preserving the left online-memory operator, yielding the right-action
term \(CA_R\) on stored memory. The selective cell is a scan-compatible
restriction with controller-compatible group-local right transport and exact
block-affine scan.}
\label{fig:shippo-overview}
\end{figure}

\paragraph{Contributions.}
(i) We formulate SHiPPO as a transported online approximation problem for
channel-interacting memory and prove that coupling the transported approximation
family with the transported channel metric yields Sylvester coefficient dynamics.
(ii) We derive a pathwise lifting principle for online-memory coefficient
equations: a one-sided projection memory is lifted by keeping the left
online-memory operator fixed and adding a right-action transport path as an
external modeling or learning choice.
(iii) We instantiate this prior in a restricted scan-compatible selective SSM
cell with group-tied diagonal-left dynamics and controller-compatible
group-local right transport, deriving exponential-adjusted discrete updates and
proving exact group-local block-affine scan closure for the resulting
recurrence.
(iv) We identify a simultaneous-reducibility collapse criterion under which
right transports are equivalent, up to static channel mixing, to independent
scalar or blockwise transported banks; this motivates non-reducible split-flow
transport designs.
(v) We provide paired noncommutative diagnostics separating high-rank
source/write updates from future right transport of already-written memory,
showing that source rank and transported memory are distinct mechanisms rather
than interchangeable forms of channel interaction.
(vi) We evaluate scan-compatible SHiPPO realizations on Transport-MQAR and a
controller-suffix intervention as complementary autoregressive diagnostics of
learned right-action pathways.


\section{SHiPPO (Sylvester HiPPO): Transported Online Projection Memories}
\label{sec:shippo}

SHiPPO (\emph{Sylvester HiPPO}) builds on the HiPPO framework of online polynomial projection operators \citep{gu2020hippo,gu2022trainhippo} by adding a chosen right-transport path to the channel frame.
We define SHiPPO as an online approximation problem for that realized transport path, rather than by postulating a recurrence in isolation.
Throughout this section, fix an admissible path \(A_R\in L^1([0,T];\mathbb R^{d\times d})\); all identities are conditional on this realized path, and we suppress this dependence in the notation.
If \(A_R\) is generated by a causal controller in a trainable model, the same projection semantics and coefficient dynamics hold \emph{pathwise} for each realized trajectory.
The inner optimization is always over the coefficient matrix \(C\), not over the transport path.

Let \(f:[0,\infty)\to\mathbb R^d\) be a \(d\)-channel signal.
For each \(t>0\), let \(\mu_t\) be a measure on \([0,t]\), and let \(\Phi_t:[0,t]\to\mathbb R^N\) be a basis with invertible Gram matrix
\[
G(t):=\int_0^t \Phi_t(\tau)\Phi_t(\tau)^\top\,d\mu_t(\tau).
\]
When \(d\mu_t(\tau)=w_t(\tau)d\tau\), define
\[
\psi(t,\tau):=w_t(\tau)\Phi_t(\tau),
\qquad
\partial_t\psi(t,\tau)=A_L(t)\psi(t,\tau)\quad(\tau<t),
\qquad
B_L(t):=\psi(t,t),
\]
where the middle identity is the usual HiPPO closure condition.
Ordinary vector-valued HiPPO approximates \(f|_{[0,t]}\) by \(C^\top\Phi_t(\tau)\) and has coefficient matrix \(C_H(t)\) satisfying
\[
G(t)C_H(t)
=
\int_0^t \Phi_t(\tau)f(\tau)^\top\,d\mu_t(\tau).
\]
In the orthonormalized case \(G(t)=I_N\), this gives the closed one-sided coefficient dynamics
\[
\dot C_H(t)=A_L(t)C_H(t)+B_L(t)f(t)^\top .
\]
Appendix~\ref{app:ordinary-hippo} recalls the variational derivation and the non-orthonormal form.

We now add right transport.
Let \(P(t,\tau)\in GL(d)\) be the state-transition family for the right generator,
\[
\partial_tP(t,\tau)=P(t,\tau)A_R(t),
\qquad
P(\tau,\tau)=I_d,
\]
which exists and is invertible under the integrability assumption on \(A_R\) \citep{coddington1955theory}.
Set
\[
M_P(t,\tau):=P(t,\tau)P(t,\tau)^\top,
\qquad
\|u\|_{M_P(t,\tau)}^2:=u^\top M_P(t,\tau)u.
\]

\begin{definition}[SHiPPO online approximation problem]
\label{def:shippo-projection}
For each \(t>0\), define the transported approximation family
\[
\mathcal G_t^{\mathrm{SH}}
=
\{\,\tau\mapsto P(t,\tau)^{-\top}C^\top\Phi_t(\tau):
C\in\mathbb R^{N\times d}\,\}.
\]
The SHiPPO coefficient matrix is
\[
C_S(t)
=
\arg\min_{C\in\mathbb R^{N\times d}}
\int_0^t
\left\|
 f(\tau)-P(t,\tau)^{-\top}C^\top\Phi_t(\tau)
\right\|_{M_P(t,\tau)}^2
\,d\mu_t(\tau).
\]
We write \(C_S(t)=\operatorname{shippo}_t(f)\).
\end{definition}

Definition~\ref{def:shippo-projection} transports the approximation family and the channel metric together.
This coupling is essential: transporting only the channel metric while keeping the ordinary HiPPO family generally destroys finite HiPPO-style coefficient closure when the metric depends on \(\tau\).
Appendix~\ref{app:metric-only} gives the stationary calculation and the \(\tau\)-independent special case.

The coupled construction is conjugate to ordinary HiPPO.
For fixed \(t\), define
\[
(\mathcal T_tu)(\tau):=P(t,\tau)^\top u(\tau).
\]
Then \(\mathcal T_t\) is an isometry from the SHiPPO metric to the Euclidean HiPPO metric and maps \(\mathcal G_t^{\mathrm{SH}}\) bijectively onto the ordinary HiPPO family:
\[
\int_0^t u(\tau)^\top M_P(t,\tau)v(\tau)\,d\mu_t(\tau)
=
\int_0^t (\mathcal T_tu)(\tau)^\top(\mathcal T_tv)(\tau)\,d\mu_t(\tau),
\qquad
\operatorname{shippo}_t(f)=\operatorname{hippo}_t(\mathcal T_tf).
\]
The final equality is an identity of coefficient matrices; Appendix~\ref{app:shippo-conjugacy} gives the full argument.

\begin{theorem}[Normal equation and Sylvester coefficient dynamics]
\label{thm:shippo-dynamics}
The SHiPPO coefficient matrix satisfies
\[
G(t)C_S(t)
=
\int_0^t \Phi_t(\tau)f(\tau)^\top P(t,\tau)\,d\mu_t(\tau).
\]
If, in addition, \(G(t)=I_N\), \(d\mu_t(\tau)=w_t(\tau)d\tau\), the HiPPO closure condition above holds, and the usual Leibniz-rule regularity assumptions are satisfied, then, at differentiability times,
\[
C_S(t)
=
\int_0^t \psi(t,\tau)f(\tau)^\top P(t,\tau)\,d\tau
\]
and
\[
\dot C_S(t)
=
A_L(t)C_S(t)+B_L(t)f(t)^\top+C_S(t)A_R(t).
\]
\end{theorem}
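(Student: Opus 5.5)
The plan is to first obtain the normal equation from the conjugacy with ordinary HiPPO stated just before the theorem, and then differentiate the resulting integral representation.

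\emph{Normal equation.} Since \(\mathcal T_t\) is an isometry from the \(M_P\)-weighted metric to the Euclidean one, the SHiPPO objective rewrites as
\[
\int_0^t \bigl\| P(t,\tau)^\top f(\tau) - C^\top\Phi_t(\tau)\bigr\|^2\,d\mu_t(\tau).
\]
This uses \(P^\top P^{-\top}=I\), so the transported family maps to \(C^\top\Phi_t\). The objective is the ordinary HiPPO problem for the signal \(\mathcal T_tf\). The ordinary normal equation recalled in Appendix~\ref{app:ordinary-hippo} then gives
\[
G(t)C_S(t)=\int_0^t\Phi_t(\tau)\,(P(t,\tau)^\top f(\tau))^\top\,d\mu_t(\tau)
=\int_0^t\Phi_t f^\top P\,d\mu_t.
\]
As a check, I would also verify this directly. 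The objective is a convex quadratic in \(C\), strictly convex because \(G\) is invertible. Setting its gradient to zero reproduces the same identity.

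\emph{Integral representation.} With \(G=I_N\) and \(d\mu_t=w_t\,d\tau\), the normal equation immediately gives
\[
C_S(t)=\int_0^t \psi(t,\tau)f(\tau)^\top P(t,\tau)\,d\tau,
\]
because \(w_t\Phi_t=\psi\).

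\emph{Dynamics.} Differentiate in \(t\) using the Leibniz rule. The boundary term is \(\psi(t,t)f(t)^\top P(t,t)=B_L(t)f(t)^\top\), since \(P(t,t)=I_d\). Inside the integral, apply the product rule to \(\psi(t,\tau)f(\tau)^\top P(t,\tau)\):
\begin{itemize}
\item \(\partial_t\psi=A_L\psi\) (the closure condition) contributes \(A_L(t)\) times the integral, which is \(A_L(t)C_S(t)\);
\item \(\partial_tP(t,\tau)=P(t,\tau)A_R(t)\) contributes the integral times \(A_R(t)\), which is \(C_S(t)A_R(t)\).
\end{itemize}
Adding these three pieces yields the Sylvester equation.

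\emph{Main obstacle.} The delicate step is justifying the Leibniz differentiation when \(A_R\) is only \(L^1\). In that case \(P(\cdot,\tau)\) is merely absolutely continuous in \(t\), and the derivative exists only almost everywhere. This is why the statement restricts to "differentiability times" and invokes the usual regularity assumptions.

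To handle it, I would factor \(P(t,\tau)=\Pi(\tau)^{-1}\Pi(t)\), where \(\Pi\) solves \(\dot\Pi=\Pi A_R\) with \(\Pi(0)=I\). Then
\[
C_S(t)=\Bigl(\int_0^t\psi(t,\tau)f(\tau)^\top\Pi(\tau)^{-1}\,d\tau\Bigr)\,\Pi(t).
\]
The bracketed factor is an ordinary HiPPO-type integral for the signal \(\Pi^{-\top}f\), so it is handled exactly as in the one-sided case. The derivative of the remaining factor is \(\dot\Pi(t)=\Pi(t)A_R(t)\) almost everywhere. Applying the product rule to the two factors then gives the result at every time where both are differentiable.
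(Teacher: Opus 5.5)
Your proposal is correct and is essentially the paper's proof: the normal equation from the transport isometry (the appendix runs the equivalent direct first-variation computation), then Leibniz differentiation of $C_S(t)=\int_0^t\psi f^\top P\,d\tau$, where the boundary term gives $B_Lf^\top$ via $P(t,t)=I_d$ and the interior terms give $A_LC_S$ and $C_SA_R$. Your factorization $P(t,\tau)=\Pi(\tau)^{-1}\Pi(t)$ for the $L^1$ case is a sound refinement that the paper does not use inside this proof, but it matches the moving-frame and gauge-equivalence results the paper proves separately (Lemma~\ref{lem:appB-frame-repr}, Proposition~\ref{prop:appB-gauge-dynamics}); it also usefully makes the set of differentiability times independent of $\tau$.
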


\begin{proof}[Proof sketch]
By the conjugacy above, \(C_S(t)\) is the ordinary HiPPO coefficient matrix of the transformed signal \((\mathcal T_tf)(\tau)=P(t,\tau)^\top f(\tau)\), which gives the normal equation.
In the orthonormalized case, differentiate the integral representation using Leibniz' rule, \(\partial_t\psi=A_L\psi\), \(\partial_tP=PA_R\), and \(P(t,t)=I_d\).
The boundary term gives \(B_Lf^\top\), while the two interior derivatives give \(A_LC_S\) and \(C_SA_R\).
Full regularity and first-variation details are in Appendix~\ref{app:shippo-theorem}.
\end{proof}

In the orthonormalized case, the coefficient ODE has the standard differential Sylvester form \citep{behr2019differential,simoncini2016computational}.
Theorem~\ref{thm:shippo-dynamics} therefore shows that SHiPPO is not an arbitrary modification of a state equation: the left operator \(A_L,B_L\) is inherited from the underlying online approximation problem, while the right action enters only through the chosen transport path.
This is the sense in which SHiPPO lifts one-sided online-memory dynamics to channel-interacting memory.

\begin{corollary}[Pathwise transported lift of closed online-memory dynamics]
\label{cor:shippo-lift}
Any closed one-sided online-memory coefficient equation arising from the projection setup above,
\[
\dot C(t)=A_L(t)C(t)+B_L(t)f(t)^\top,
\]
admits, for any admissible right-transport path \(A_R\), the pathwise transported dynamics
\[
\dot C_S(t)
=
A_L(t)C_S(t)+B_L(t)f(t)^\top+C_S(t)A_R(t).
\]
The left operator is inherited from the underlying online approximation problem, whereas the right transport is an external modeling choice.
Architectural restrictions on \(A_R\) are introduced only in the scan-compatible realization of Section~\ref{sec:selective-transport-cell}, not in the abstract SHiPPO definition.
\end{corollary}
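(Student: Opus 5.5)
The corollary is essentially a restatement of Theorem~\ref{thm:shippo-dynamics} in the language of lifting, so the plan is to reduce it to that theorem. Fix an online-memory problem of the type set up above: a measure \(d\mu_t=w_t\,d\tau\), a basis \(\Phi_t\) with \(G(t)=I_N\), and the closure condition \(\partial_t\psi=A_L\psi\) with \(B_L(t)=\psi(t,t)\). Together with the Leibniz-rule regularity, these are exactly the hypotheses under which the one-sided equation \(\dot C=A_LC+Bf^\top\) is derived. This is the meaning I will give to ``closed one-sided online-memory coefficient equation arising from the projection setup''. The first step is to make this identification explicit: the pair \((A_L,B_L)\) in the hypothesis is the pair produced by \(\psi\). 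For the non-orthonormal case, I would first orthonormalize the basis, replacing \(\Phi_t\) by \(G(t)^{-1/2}\Phi_t\). This changes the coefficients only by a time-dependent left change of basis, and I would either assume it away, as the theorem does, or note that it is absorbed into \(A_L\).

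Next, fix any admissible \(A_R\in L^1([0,T];\mathbb R^{d\times d})\). I construct \(P(t,\tau)\) from \(\partial_tP=PA_R\), \(P(\tau,\tau)=I_d\), and form the SHiPPO problem of Definition~\ref{def:shippo-projection} with the same \(\mu_t\) and \(\Phi_t\). Nothing about the left data changes, and by construction the transport path enters only through \(P\). Theorem~\ref{thm:shippo-dynamics} then applies verbatim. It gives \(C_S(t)=\int_0^t\psi(t,\tau)f(\tau)^\top P(t,\tau)\,d\tau\) and the Sylvester equation with the same \(A_L,B_L\), plus the extra term \(C_SA_R\).

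For the pathwise statement with a causal controller, I condition on a realized trajectory. Each realization of \(A_R\) is a fixed admissible \(L^1\) path, so the previous step applies realization by realization. No measurability or joint-in-\(f\) structure is needed, because the inner minimization is over \(C\) only. As a sanity check, taking \(A_R\equiv0\) gives \(P\equiv I\) and recovers \(C_H\). This confirms that the left operator is inherited unchanged.

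The main obstacle is the regularity under a merely integrable \(A_R\). In that case \(P(\cdot,\tau)\) is only absolutely continuous, so the Leibniz differentiation holds only at almost every \(t\), namely the differentiability times of the theorem. I would handle this by interpreting the ODE in the Carathéodory sense. Concretely, I would verify the integrated form
\[
C_S(t)=C_S(s)+\int_s^t\bigl(A_LC_S+B_Lf^\top+C_SA_R\bigr)\,dr,
\]
using Fubini and the absolute continuity of \(P\) in its first argument. This integrated form is what is needed for the statement to hold for every admissible path.
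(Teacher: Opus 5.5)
Your proposal is correct and follows the paper's route: you identify the corollary's hypotheses with the orthonormalized closure setting, use the transported normal equation to get $C_S(t)=\int_0^t\psi(t,\tau)f(\tau)^\top P(t,\tau)\,d\tau$, and read off the Sylvester dynamics as in Theorem~\ref{thm:shippo-dynamics}, applied pathwise to each realized $A_R$. Your Fubini-based check of the integrated Carath\'eodory form treats the almost-everywhere regularity for merely integrable $A_R$ more carefully than the paper's appeal to Leibniz' rule at differentiability times, but the argument is otherwise the same.
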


SHiPPO reduces exactly to ordinary HiPPO when \(A_R\equiv0\), equivalently
\(P(t,\tau)\equiv I_d\).
For a nontrivial realized transport path, SHiPPO is ordinary HiPPO in a tied
moving channel frame, not a static channel mixer or an arbitrary encoder--decoder
factorization.
Appendix~\ref{app:reductions-moving-frame} gives the exact moving-frame
factorization and non-reduction statements.
When \(A_R\) is input-dependent, the same interpretation remains valid pathwise,
but it does not yield a fixed input-independent encoder--HiPPO--decoder reduction.


\section{A Scan-Compatible SHiPPO Lift for Mamba-Style Selective SSMs}
\label{sec:selective-transport-cell}

Section~\ref{sec:shippo} defines SHiPPO as an operator-level transported projection memory. A selective SSM layer cannot use this full operator without additional computational restrictions. We therefore study one scan-compatible SHiPPO-derived realization: channels are partitioned into small transport groups, the left dynamics are diagonal in the state dimension and tied across channels within each group, and the right transport is group-local and controller-compatible. These restrictions are not part of the abstract SHiPPO definition; they are the price of exact scan and recurrent decoding.

Let $D$ denote the inner channel width of the selective branch, and partition the channels into $G$ groups of width $P$, so $D=GP$. We write formulas for one group and omit the group index. Thus the group-local input is $x_t\in\mathbb R^P$, the memory state is $H_t\in\mathbb R^{N\times P}$, and the right transport acts by $R_t\in GL(P)$. The ungrouped case is recovered by taking $P=D$.
A fully channelwise lift of a diagonal selective recurrence generally requires row-dependent right transition products and does not close in the small block-affine scan algebra used below; Appendix~\ref{app:direct-lift-obstruction} gives the two-step obstruction. The realization in this section should therefore be read as a restricted Mamba-style cell derived from the SHiPPO prior, not as the full operator-level SHiPPO construction and not as the original per-channel Mamba parameterization.

\subsection{From the SHiPPO prior to a scan-friendly selective recurrence}
\label{sec:scan-friendly-selective-recurrence}

Our restricted continuous-time selective lift is
\begin{equation}
\dot H(t)
=
\operatorname{Diag}(a(t))\,H(t)+b(t)x(t)^\top+H(t)A_R(t),
\label{eq:group-tied-shippo-ct}
\end{equation}
where $a(t)\in\mathbb R^N$, $b(t)\in\mathbb R^N$, $x(t)\in\mathbb R^P$, and $A_R(t)\in\mathbb R^{P\times P}$. The left dynamics remain diagonal in the state dimension, as in diagonal selective SSMs, but are tied across the $P$ channels in the transport group. The factorized source $b(t)x(t)^\top$ keeps the HiPPO-like interpretation that the left operator injects the current input into a temporal memory basis. More general additive sources $U(t)\in\mathbb R^{N\times P}$ are compatible with the same scan algebra; the factorized form is the lightweight selective-cell instance used below.

Equation~\eqref{eq:group-tied-shippo-ct} is Mamba-style only in this restricted sense. It preserves diagonal state-timescale dynamics on the left, while the SHiPPO right action supplies channel interaction inside the memory state itself. The right-transport family is a modeling restrictive bias; preferential choices such as analytic initialization are orthogonal to the scan-compatible realization.

\subsection{Controller-compatible transport and block-affine scan closure}
\label{sec:controller-compatible-transport}

The right transport may be token-dependent, but an exact finite scan summary requires it to be fixed with respect to the main memory variable being scanned. We encode this through a causal controller path $\xi_{1:T}$, computed either input-only, $\xi_t=\phi(x_t)$, or by an auxiliary causal module independent of the main memory recurrence. Conditional on this path, the main recurrence is a sequence of affine two-sided maps.

\begin{definition}[Controller-compatible right transport]
\label{def:controller-compatible-transport}
A right transport $R_t\in GL(P)$ is \emph{controller-compatible} if, conditional on a precomputed causal controller path $\xi_{1:T}$,
\[
R_t=R(\xi_t)
\]
and $R_t$ is independent of the main memory state $H_{t-1}$. A transport of the form $R_t=R(\xi_t,H_{t-1})$ is state-coupled and is not controller-compatible unless the dependence on $H_{t-1}$ is degenerate.
\end{definition}

\begin{proposition}[Exact block-affine scan closure]
\label{prop:affine-closure}
Consider the recurrence
\[
H_t=L_tH_{t-1}R_t+U_t,
\]
where $L_t\in\mathbb R^{N\times N}$, $R_t\in GL(P)$, and $U_t\in\mathbb R^{N\times P}$ are fixed conditional on a precomputed controller path. Then each step is affine in $H_{t-1}$, and summaries compose as
\[
(L_2,R_2,U_2)\star(L_1,R_1,U_1)
=
(L_2L_1,\;R_1R_2,\;L_2U_1R_2+U_2).
\]
Thus the recurrence admits an exact block-affine prefix scan. If $R_t$ depends directly on $H_{t-1}$, the step map is generically nonlinear in $H_{t-1}$, so this finite affine summary algebra is not closed without augmenting the state or imposing special degeneracies.
\end{proposition}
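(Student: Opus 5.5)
I would prove the three parts of Proposition~\ref{prop:affine-closure} in order: affinity of a single step, closure of the composition rule, and the consequences for the scan and for the state-coupled case.

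\textbf{Single step.} Identify a summary $(L,R,U)$ with the map
\[
\mathcal A_{(L,R,U)}(H):=LHR+U
\]
on $\mathbb R^{N\times P}$. Conditional on the controller path, $L_t,R_t,U_t$ do not depend on $H_{t-1}$, by Definition~\ref{def:controller-compatible-transport}. The map $H\mapsto L_tHR_t$ is linear, since matrix multiplication is bilinear, so $\mathcal A_t$ is affine.

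\textbf{Composition.} Apply step~1 and then step~2 and expand:
\[
\mathcal A_2(\mathcal A_1(H))=L_2(L_1HR_1+U_1)R_2+U_2=(L_2L_1)H(R_1R_2)+(L_2U_1R_2+U_2).
\]
This has the stated form $\mathcal A_{(L_2,R_2,U_2)\star(L_1,R_1,U_1)}$. Note the order reversal on the right factor, which is what makes $\star$ a valid algebra. Also $R_1R_2\in GL(P)$, so the summary set is closed under $\star$.

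\textbf{Scan.} The prefix scan needs $\star$ to be associative. Composition of maps is associative, but distinct triples can induce the same map, e.g.\ $(cL,c^{-1}R,U)$, so I would check associativity directly on the triples. The $L$ and $R$ slots are associative matrix products, in reversed order for $R$. For the $U$ slot, both bracketings give
\[
L_3L_2U_1R_2R_3+L_3U_2R_3+U_3.
\]
The identity element is $(I_N,I_P,0)$. With a monoid in hand, the standard Blelloch/Ladner--Fischer parallel prefix computes all $H_t=\mathcal A_{t}\circ\cdots\circ\mathcal A_1(H_0)$ exactly. Exactness holds because the triples are a finite-dimensional representation of the composites, of size $N^2+P^2+NP$ independent of $t$.

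\textbf{State-coupled case.} This is the main obstacle, because ``generically nonlinear'' needs a concrete witness. I would give one explicitly. Take $N=P=1$, $L=1$, $U=0$, and $R(H)=1+H$. Then
\[
H\mapsto H+H^2,
\]
which is not affine, so no triple $(L,R,U)$ represents it. Moreover, $k$-fold compositions have degree $2^k$, so any finite-dimensional summary closed under composition would have to parameterize polynomials of unbounded degree. That is impossible without augmenting the state, and it is avoided only in degenerate cases where $R$ is constant in $H$ on the relevant set.
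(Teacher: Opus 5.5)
Your proof is correct. For the affine step, the composition law and the scan it follows the paper's argument: the paper also expands $F_2(F_1(H))=(L_2L_1)H(R_1R_2)+L_2U_1R_2+U_2$ and cites associativity of matrix multiplication.

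Your remark that triples represent maps non-injectively, e.g.\ $(cL,c^{-1}R,U)$, is a fair refinement of the paper's other justification via associativity of function composition. On its own, that justification only shows that both bracketings induce the same map. This is already enough for exactness of the scan, since the triple-to-map correspondence is a homomorphism, but it does not give associativity of $\star$ on triples. Your direct check of the $U$ slot, together with the unit $(I_N,I_P,0)$, makes the monoid claim literal.

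The routes genuinely differ on the state-coupled clause. Instead of a witness, the paper argues generally. Any affine $F$ satisfies $F(sH)-F(0)=s\bigl(F(H)-F(0)\bigr)$, which here reduces to $L_tH\,R(\xi_t,sH)=L_tH\,R(\xi_t,H)$. When $L_tH$ has full column rank, right multiplication by it is injective, so $R(\xi_t,sH)=R(\xi_t,H)$ along the ray. This gives what a single example cannot: \emph{every} genuine dependence of $R$ on $H$ in that regime breaks affinity, which is what ``generically'' means. It also locates the ``special degeneracies.'' Your closing claim, that failure is avoided only when $R$ is constant in $H$, is therefore too strong. When $L_tH$ is column-rank-deficient (e.g.\ $N<P$), $R$ may vary by terms annihilated by $L_tH$ while $H\mapsto L_tHR(\xi_t,H)$ stays linear.

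Two small repairs are also needed. First, $R(H)=1+H$ leaves $GL(1)$ at $H=-1$. Use $R(H)=e^{H}$ instead, or the paper's unipotent $\exp(\kappa\langle W,H\rangle_F M)=I_P+\kappa\langle W,H\rangle_F M$ with $M^2=0$, which is invertible for every $H$. Second, your degree-$2^k$ argument against \emph{any} finite-dimensional summary is unnecessary, because the statement only concerns the triple algebra. It is also not rigorous as phrased, since finite dimensionality alone does not bound degree: the one-parameter family $\cos(\theta\arccos H)$ on $[-1,1]$ contains Chebyshev polynomials of every degree.
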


This is an associative prefix-scan computation \citep{blelloch1990prefix}; related parallel scans have been used to parallelize linear recurrent networks over sequence length \citep{martin2018parallelizing}. Appendix~\ref{app:scan-friendly-shippo} gives the proof and a concrete state-coupled failure mode. The scan above is not the original elementwise selective-scan algebra of Mamba. The original one-channel scan is recovered in the trivial limit $P=1$. If the right transport is diagonal or identity, the recurrence decouples across channels within each transport group, although the group-tied left restriction remains. In the nontrivial transported case, exact scan is retained as a group-local block-affine scan.

\subsection{Degenerate versus nontrivial transport families}
\label{sec:block-diagonal-transport-collapse}

Controller compatibility preserves computation, but it does not guarantee expressive transport. A common source of misleading adaptivity is token-dependent coefficients over a generator family that is simultaneously reducible in a fixed channel basis.

\begin{proposition}[Collapse under fixed simultaneous block reduction]
\label{prop:collapse}
Suppose all right-generator basis matrices preserve the same nontrivial block decomposition: for every generator $G_m$ there is a fixed $Q\in GL(P)$ such that
\[
G_m=Q\Lambda_mQ^{-1},
\]
where all $\Lambda_m$ are diagonal or block-diagonal with the same fixed nontrivial block partition. If
\[
A_{R,t}=\sum_m \rho_{t,m}G_m,
\]
then every dense exponential $R_t=\exp(\Delta_tA_{R,t})$ and every fixed-order split product formed from these generators is diagonal or block-diagonal in the same fixed basis. Consequently, the change of variables $\widetilde H_t=H_tQ$ decomposes
\[
H_t=L_tH_{t-1}R_t+U_t
\]
into independent scalar or blockwise transported banks, up to a static change of channel coordinates. If $U_t=b_tx_t^\top$, then $U_tQ=b_t(Q^\top x_t)^\top$, so the same fixed basis change can be absorbed into the group input coordinates.
\end{proposition}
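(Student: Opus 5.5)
The plan is to reduce everything to closure of the set of matrices that are block-diagonal in the fixed basis \(Q\). Let \(\mathcal B\subset\mathbb R^{P\times P}\) be the block-diagonal matrices for the fixed partition; for the diagonal case take the partition into singletons. Put \(\mathcal A_Q:=Q\mathcal B Q^{-1}\).

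The first observation is that \(\mathcal B\) is a unital subalgebra: it is closed under sums, scalar multiples and products. It is also topologically closed, being a linear subspace of a finite-dimensional space. Conjugation by \(Q\) is an algebra isomorphism, so \(\mathcal A_Q\) inherits all of these properties.

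Next I would handle the generators and their exponentials. Since each \(G_m\in\mathcal A_Q\) and \(\rho_{t,m}\) are scalars, \(A_{R,t}=Q\bigl(\sum_m\rho_{t,m}\Lambda_m\bigr)Q^{-1}\in\mathcal A_Q\). For the exponential I would use the power series. Each partial sum lies in the algebra \(\mathcal A_Q\), and closedness gives \(\exp(\Delta_tA_{R,t})=Q\exp\bigl(\Delta_t\sum_m\rho_{t,m}\Lambda_m\bigr)Q^{-1}\), where the middle factor is block-diagonal. One can check this directly: the exponential of a block-diagonal matrix is the block-diagonal matrix of blockwise exponentials.

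Fixed-order split products \(\prod_m\exp(\Delta_t\rho_{t,m}G_m)\) follow the same way. Each factor is \(Q\exp(\Delta_t\rho_{t,m}\Lambda_m)Q^{-1}\), the inner \(Q^{-1}Q\) cancel, and the product of block-diagonals is block-diagonal. So in every case \(R_t=Q\widetilde R_tQ^{-1}\) with \(\widetilde R_t\in\mathcal B\).

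For the change of variables, I would right-multiply the recurrence by \(Q\):
\[
H_tQ=L_tH_{t-1}Q\,Q^{-1}R_tQ+U_tQ,
\]
that is, \(\widetilde H_t=L_t\widetilde H_{t-1}\widetilde R_t+\widetilde U_t\) with \(\widetilde U_t=U_tQ\). Now partition the columns of \(\widetilde H_t\) according to the block partition. Because \(\widetilde R_t\) is block-diagonal, the column block \(J\) satisfies \(\widetilde H_t^{(J)}=L_t\widetilde H_{t-1}^{(J)}\widetilde R_t^{(J,J)}+\widetilde U_t^{(J)}\). Different blocks therefore evolve independently; they share only \(L_t\), which acts on the left and does not mix columns. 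In the diagonal case these are scalar banks.

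The source claim is the identity \((b_tx_t^\top)Q=b_t(Q^\top x_t)^\top\), which shows the basis change is absorbed into the input. Since \(Q\) is static, \(H_t=\widetilde H_tQ^{-1}\) recovers the original state by a fixed channel mixing.

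There is no real obstacle here. The only point needing care is stating the exponential closure cleanly, and the blockwise-exponential identity above settles it.
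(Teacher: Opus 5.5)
Your proposal is correct and follows essentially the same route as the paper: you conjugate $A_{R,t}$, its dense exponential, and each split factor by the fixed $Q$ to obtain block-diagonal middle factors, then right-multiply the recurrence by $Q$ so that $\widetilde R_t=Q^{-1}R_tQ$ is block-diagonal and the column blocks of $\widetilde H_t$ evolve independently, with $U_tQ=b_t(Q^\top x_t)^\top$ absorbing the basis change into the input. Your subalgebra and power-series framing is just a more explicit justification of the identity $\exp(Q\Lambda Q^{-1})=Q\exp(\Lambda)Q^{-1}$, which the paper uses directly.
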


This proposition is a sufficient collapse criterion, not a complete classification of all degeneracies. It rules out the case where token-dependent coefficients only move within a simultaneously reducible generator family. Noncommutativity alone is insufficient: noncommuting generators may still preserve a common block decomposition. The scan-compatible realization should therefore use right-action families whose realized transports admit no fixed nontrivial common block decomposition, if the goal is genuinely channel-interacting transported memory.


\subsection{Controlled split-flow transport and exponential-adjusted discrete cell}
\label{sec:discrete-shippo-cell}

For implementation, each token first fixes a group-local right action \(R_t\),
either by a dense frozen-ODE exponential or by a fixed-order split product of
structured factors. We use the structured generator family
\begin{equation}
A_{R,t}
=
-\operatorname{Diag}(d_t)
+\sum_m \theta_{t,m}\Omega_m
+\sum_n \eta_{t,n}N_n
+\sum_\ell \zeta_{t,\ell}p_\ell q_\ell^\top,
\qquad d_t\in\mathbb R_+^P,
\label{eq:split-flow-generator}
\end{equation}
where \(\Omega_m^\top=-\Omega_m\), \(N_n^2=0\), and optional low-rank terms add
additional channel interaction. The dense backend sets
\(R_t=\exp(\Delta_tA_{R,t})\); a split backend instead implements a fixed-order
product of cheap factor exponentials. Appendix~\ref{app:scan-friendly-shippo}
gives the factor actions, split-product error, and dense-backend local
quadrature analysis. The scan algebra below is exact for whichever discrete
right action \(R_t\) is implemented.

Let \(\Delta_t>0\), \(\lambda_t\in[0,1]\), and \(U_t=b_tx_t^\top\). We use
\begin{align}
L_t&:=\exp\!\bigl(\Delta_t\operatorname{Diag}(a_t)\bigr),
\qquad
R_t:=
\begin{cases}
\exp(\Delta_tA_{R,t}), & \text{dense backend},\\
R_t^{\mathrm{split}}, & \text{split backend},
\end{cases}
\nonumber\\
\widehat U_t
&:=(1-\lambda_t)\Delta_t L_tU_{t-1}R_t+
\lambda_t\Delta_t U_t,
\qquad
H_t:=L_tH_{t-1}R_t+\widehat U_t .
\label{eq:discrete-shippo-cell}
\end{align}
For the first step, implementations may set the boundary source \(U_0=0\).

\begin{theorem}[Exponential-adjusted SHiPPO cell]
\label{thm:discrete-cell}
For the dense backend \(R_t=\exp(\Delta_tA_{R,t})\), the update
\eqref{eq:discrete-shippo-cell} is a two-point exponential source
discretization of \eqref{eq:group-tied-shippo-ct} under step-frozen \(a_t\) and
\(A_{R,t}\). For a split backend, the same formula defines the exact discrete
recurrence for the implemented split right action. If \(A_{R,t}\equiv0\), then
\(R_t=I_P\) and the cell reduces to the corresponding one-sided diagonal-left
selective update; with \(P=1\) and \(\lambda_t=1\), this recovers the
corresponding one-channel exponential-Euler selective update.
\end{theorem}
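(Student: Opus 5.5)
The plan is to solve the frozen-coefficient Sylvester equation exactly over one step and then approximate only the source integral by a two-point rule. On the step interval $[t_{k-1},t_k]$ of length $\Delta_k$, freeze $a_k$ and $A_{R,k}$ in \eqref{eq:group-tied-shippo-ct}. The homogeneous part $\dot H=\operatorname{Diag}(a_k)H+HA_{R,k}$ has flow
\[
H\mapsto e^{s\operatorname{Diag}(a_k)}\,H\,e^{sA_{R,k}}.
\]
To check this, differentiate $e^{sA}He^{sB}$; the left and right generators act on opposite sides, so they commute as operators on $\mathbb R^{N\times P}$. Variation of constants then gives the exact representation
\[
H(t_k)=L_kH(t_{k-1})R_k+\int_0^{\Delta_k} e^{(\Delta_k-s)\operatorname{Diag}(a_k)}\,U(t_{k-1}+s)\,e^{(\Delta_k-s)A_{R,k}}\,ds ,
\]
with $L_k,R_k$ as in \eqref{eq:discrete-shippo-cell}. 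This is the same Leibniz-rule computation used for Theorem~\ref{thm:shippo-dynamics}, now applied on a single step.

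Next, treat the integrand $g(s)=e^{(\Delta_k-s)\operatorname{Diag}(a_k)}U(t_{k-1}+s)e^{(\Delta_k-s)A_{R,k}}$ as a function sampled at the two step endpoints. At $s=0$ we have $g(0)=L_kU_{k-1}R_k$, using the sampled source at the previous token. At $s=\Delta_k$ we have $g(\Delta_k)=U_k$. A weighted two-point quadrature $\Delta_k[(1-\lambda_k)g(0)+\lambda_k g(\Delta_k)]$ is then exactly $\widehat U_k$. The choices $\lambda_k=1/2$, $1$, and $0$ give the trapezoid rule, the right-endpoint rule, and the left-endpoint rule. This establishes the first claim: the update is a two-point exponential source discretization. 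If wanted, a local error statement follows: the error is $O(\Delta_k^2)$ for general $\lambda$ and $O(\Delta_k^3)$ for $\lambda=1/2$, under smoothness of $g$. This rests on standard quadrature error bounds and the boundedness of the frozen exponentials.

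For the split backend, no approximation claim is needed. The formula in \eqref{eq:discrete-shippo-cell} simply defines the recurrence with $R_k=R_k^{\mathrm{split}}$. It is of the form $H_k=L_kH_{k-1}R_k+\widehat U_k$, where $\widehat U_k$ is fixed conditional on the controller path, so Proposition~\ref{prop:affine-closure} applies verbatim. The reductions are direct substitution:
\begin{itemize}
\item If $A_{R,k}\equiv0$, then $R_k=\exp(0)=I_P$. The update becomes $H_k=L_kH_{k-1}+(1-\lambda_k)\Delta_kL_kU_{k-1}+\lambda_k\Delta_kU_k$, which is the one-sided diagonal-left update with columns decoupled.
\item If additionally $P=1$ and $\lambda_k=1$, it becomes $h_k=e^{\Delta_k a_k}\odot h_{k-1}+\Delta_k b_kx_k$, which is the exponential-Euler selective update.
\end{itemize}

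The main obstacle is the precise meaning of "two-point exponential source discretization" in the presence of step-frozen generators. The left endpoint $U_{k-1}$ is propagated with the current step's $L_k,R_k$ rather than the previous step's. I would make this explicit by defining the scheme as quadrature of the frozen-coefficient variation-of-constants integral, so that the identity is exact by construction. The only real verification is then the two-sided flow formula for the homogeneous Sylvester equation.
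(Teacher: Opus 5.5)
Your proposal is correct and follows essentially the same route as the paper's proof. Both freeze $a_t$ and $A_{R,t}$, write the exact two-sided Duhamel step, evaluate the integrand at the endpoints as $L_tU_{t-1}R_t$ and $U_t$, apply the $\lambda_t$-weighted two-point rule to recover $\widehat U_t$, and obtain the split-backend, $A_{R,t}\equiv 0$, and $P=1$, $\lambda_t=1$ claims by direct substitution; the paper additionally records that the first-step convention $U_0=0$ amounts to assuming a zero boundary source $U(0)$ when $\lambda_1\neq 1$.
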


Thus nontrivial right transport does not preserve the original elementwise
Mamba scan. Instead, the restricted SHiPPO realization yields an exact
group-local block-affine scan, with computation controlled by the group width
\(P\) and with the right action interpreted as learned transport of memory
coordinates rather than as a generic channel mixer.

\section{Experiments: Separating Source Writes from Right Transport}
\label{sec:experiments}

We evaluate SHiPPO as a transported-memory prior. The experiments ask a
mechanistic question: can high-rank current-step source/write updates substitute
for future right transport of memory that has already been written? We first
use a paired noncommutative diagnostic that blocks payload-dependent source
injection at operation tokens. We then use Transport-MQAR as a complementary
autoregressive recall diagnostic.

The no-right source/write baseline removes the right action,
\begin{equation}
  H_t = L_t H_{t-1} + \widehat U_t^{(r)},
  \qquad
  \widehat U_t^{(r)} = \frac{1}{\sqrt r} B_t X_t^\top,
  \label{eq:exp-no-right-rank}
\end{equation}
where increasing \(r\) increases the rank of the source written at the current
step. SHiPPO-style variants instead use
\begin{equation}
  H_t = L_t H_{t-1} R_t + \widehat U_t,
  \label{eq:exp-shippo-right}
\end{equation}
so that a future operation can act on the channel coordinates of memory that was
written earlier. We report means and sample standard deviations across random
seeds. Appendix~\ref{app:formal-diagnostics} gives the paired-diagnostic task
definitions, model families, metrics, training protocol, numerical summaries,
and the group-local audit; Appendix~\ref{app:transport-mqar-details} gives the
Transport-MQAR generator, controls, length sweeps, and controller
counterfactuals.

\subsection{Paired noncommutative transport diagnostic}
\label{subsec:paired-transport}
\label{subsec:e0-formal}

The primary diagnostic writes a payload vector \(v\) into memory and then applies
two operation tokens. The two paired examples share the same \(v\) and operation
parameters but reverse the order of the operations, \(ab\) versus \(ba\). The
target paired difference is
\begin{equation}
  \Delta_{\mathrm{true}}
  = v^\top (R_a R_b - R_b R_a).
  \label{eq:pair-delta-true}
\end{equation}
We measure the normalized mean-squared error of this paired difference,
\begin{equation}
  \mathrm{Pair}\,\Delta\mathrm{NMSE}
  = \frac{\lVert \Delta_{\mathrm{pred}} - \Delta_{\mathrm{true}} \rVert_2^2}
          {\max\{\lVert \Delta_{\mathrm{true}} \rVert_2^2,\epsilon_{\mathrm{den}}\}},
  \qquad \epsilon_{\mathrm{den}}=10^{-8}.
  \label{eq:pair-delta-nmse}
\end{equation}
Source injection is allowed at WRITE tokens, but operation tokens do not inject
payload-dependent source. Thus a no-right model can increase the rank of what it
writes, but it cannot make a later operation right-multiply previously written
channel coordinates. In contrast, a SHiPPO-style model can apply \(R_t\) to
already-written memory.

\subsection{Right-transport parameterizations and intervention}
\label{subsec:right-transport-parameterizations}
\label{subsec:rdesign}

The oracle-\(R\) variant tests whether the true right action can solve the
paired-transport diagnostic. A stronger test is whether learned and selective
right-transport parameterizations can do so as well. We therefore compare oracle
transport, learned Lie transports initialized from true, zero, or random
generators, and a selective right-transport controller. We also perform an
evaluation-time intervention replacing \(R_t\) by identity while leaving all
other learned weights fixed; this tests whether the paired-difference behavior
is mediated by the right action.

\begin{figure}[H]
\centering
\includegraphics[width=\linewidth]{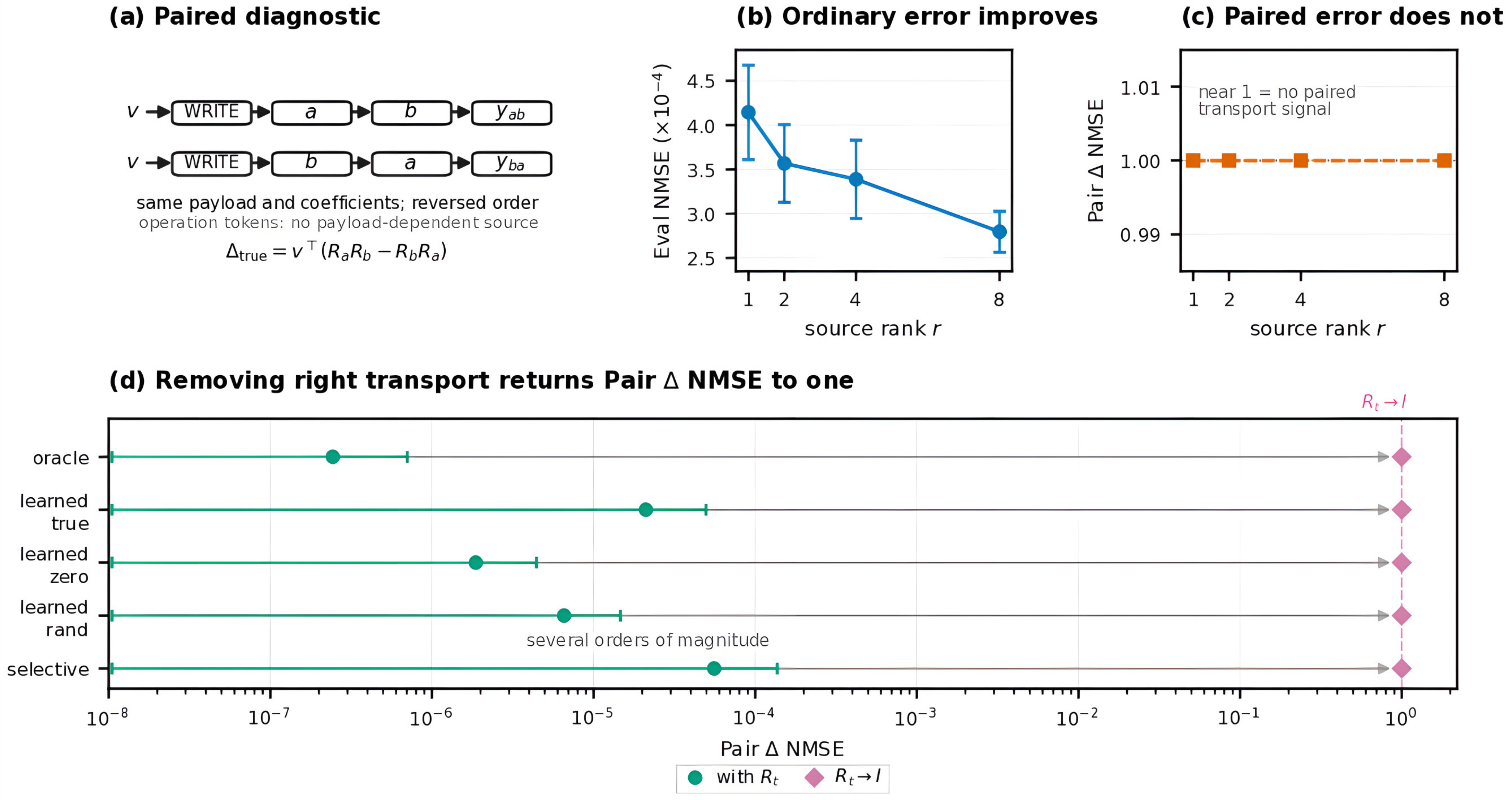}
\caption{Main paired-transport diagnostic. (a) Paired examples share the
same payload and operation coefficients but reverse order; operation tokens
inject no payload-dependent source. (b,c) Increasing no-right source rank
improves ordinary NMSE but leaves Pair \(\Delta\) NMSE near one. (d)
Right-transport variants recover the paired difference, while the evaluation-time
\(R_t\!\to I\) intervention returns Pair \(\Delta\) NMSE to one. Points show
means over seeds; error bars show sample standard deviations.}
\label{fig:main-evidence}
\end{figure}

The key comparison is the intervention in Figure~\ref{fig:main-evidence}(d):
the same trained models lose the paired-difference signal when \(R_t\) is
replaced by identity. This makes the diagnostic more than a capacity
comparison, because the recovered signal is tied to the presence of a future
right action on stored memory. Full numerical summaries are in
Appendix~\ref{app:full-formal-results}.

\subsection{Transport-MQAR as a complementary autoregressive diagnostic}
\label{subsec:transport-mqar-main}

The paired diagnostic isolates the mechanism in a minimal controlled setting. We
next ask whether scan-compatible SHiPPO realizations also help in an
autoregressive finite-field recall task where bindings, operations, and queries
are interleaved. Transport-MQAR inserts invertible operation tokens between
key--value bindings and queries, so the target must be recovered in the current
transported frame. Appendix~\ref{app:transport-mqar-details} gives the
generator, metrics, model geometry, training protocol, full length sweep, and
the controller-suffix counterfactual; the main text reports only the
length-4096 summary needed for the mechanistic comparison.
Transport-MQAR is a finite-field, right-transport modification of multi-query
associative recall (MQAR), a diagnostic introduced to study recall in efficient
language models \citep{arora2024zoology}. The comparison includes standard
GRU \citep{cho2014learning} and Transformer \citep{vaswani2017attention}
baselines.

\begin{table}[t]
\centering
\small
\setlength{\tabcolsep}{5pt}
\caption{Transport-MQAR summary at length 4096. Full length sweeps, standard
 deviations, model geometry, and training protocol are in
 Appendix~\ref{app:transport-mqar-details}.}
\label{tab:transport-mqar-summary}
\begin{tabular}{lcc}
\toprule
Model & Coord. & Exact \\
\midrule
GRU & 0.095 & 0.016 \\
Transformer & 0.042 & 0.002 \\
Free enc/dec & 0.092 & 0.009 \\
No-right & 0.102 & 0.016 \\
DirectGen-SingleExp & 0.103 & \textbf{0.036} \\
StructGen-Split & \textbf{0.110} & 0.033 \\
\midrule
StructGen-Split + suffix zeroing & 0.104 & 0.018 \\
\bottomrule
\end{tabular}
\end{table}

Table~\ref{tab:transport-mqar-summary} summarizes the length-4096
Transport-MQAR results. StructGen-Split modestly improves coordinate accuracy
over no-right and static-basis controls, while DirectGen-SingleExp obtains the
strongest exact accuracy. The suffix-zeroing counterfactual reduces exact
accuracy from 0.033 to 0.018, suggesting that the learned controller
coordinates are used, but not identifying the learned transport geometry.

\subsection{Scope of the evidence}
\label{subsec:experiment-scope}

The experiments support a mechanistic claim in controlled diagnostics:
high-rank current-step source/write updates and future right transport are
different operations on matrix memory. They do not establish broad
sequence-modeling superiority. The formal separation is clearest in the
full-transport paired diagnostic; Transport-MQAR provides complementary
autoregressive evidence but does not identify the learned transport geometry.
The group-local restriction is audited in Appendix~\ref{app:group-audit}.

\section{Discussion}
\label{sec:discussion}

SHiPPO should be viewed as a transported online-memory prior. Once a
right-transport path is realized, the state has ordinary HiPPO coefficients in a
tied moving channel frame, and the right-action term is the gauge term induced
by transporting the approximation family and metric together. This distinguishes
SHiPPO from untied matrix-state recurrences, high-rank source/write updates, or
encoder--decoder factorizations that are not tied by a common transport path.

The scan-compatible cell of Section~\ref{sec:selective-transport-cell} is one
restricted realization, chosen to retain exact block-affine scan and recurrent
decoding. The collapse criterion gives the complementary limitation:
simultaneously reducible right transports collapse to static mixing plus
independent scalar or blockwise banks.

The experiments support this distinction in controlled diagnostics rather than
claiming broad sequence-modeling dominance. Paired diagnostics separate
current-step writes from future right transport, while Transport-MQAR provides
complementary autoregressive evidence and leaves the preferred right-action
realization open.

\section*{Impact Statement}
This paper presents foundational work on recurrent memory priors for sequence
models. The empirical evaluation is primarily synthetic and diagnostic: we do
not release a large pretrained generative model, and we do not claim
deployment-ready language-modeling performance. Potential positive impacts
include better tools for understanding memorization, structured generalization,
and memory mechanisms in efficient sequence models. Potential negative impacts
are indirect and similar to those of improved sequence-modeling methods more
broadly, including possible use in more capable generative systems. We therefore
emphasize diagnostic scope, transparent limitations, and reproducibility.

\section*{Acknowledgements}
\paragraph{Funding.}
This work was supported by JSPS KAKENHI Grant Number JP26K21295.

\paragraph{Computing resources.}
This research used resources of the Argonne Leadership Computing Facility under
ALCF Allocation IDs 15652 and 15654, and resources of the Oak Ridge Leadership
Computing Facility under OLCF Project ID CSC704 through Director's
Discretionary allocation awards.

\bibliographystyle{plainnat}
\bibliography{references}

\newpage
\appendix


\section{Derivations for Section~\ref{sec:shippo}}
\label{app:sec2-derivations}

This appendix supports the operator-level claims of Section~\ref{sec:shippo}.
We first recall the ordinary vector-valued HiPPO variational equations, then derive the SHiPPO normal equation directly from the transported online approximation objective.
We next record the transport-isometry conjugacy, prove Theorem~\ref{thm:shippo-dynamics} and Corollary~\ref{cor:shippo-lift}, and finally explain why transporting only the channel metric generally fails to preserve a finite HiPPO-style coefficient closure.

\subsection{Notation, standing assumptions, and pathwise semantics}
\label{app:notation-assumptions}
\label{app:full-derivations:notation}

Fix a time \(t>0\).
Throughout, signals
\(u(\tau),v(\tau),f(\tau)\in\mathbb R^d\) are treated as column vectors, and the
basis \(\Phi_t(\tau)\in\mathbb R^N\) is also a column vector.
Coefficient matrices are \(C\in\mathbb R^{N\times d}\), so that
\(C^\top\Phi_t(\tau)\in\mathbb R^d\).

We use the Frobenius inner product
\[
\langle A,B\rangle_F:=\mathrm{tr}(A^\top B)
\qquad
(A,B\in\mathbb R^{N\times d}),
\]
so that first variations can be written as
\[
\delta J(C;\Delta)=\langle \nabla_C J(C),\Delta\rangle_F .
\]

\paragraph{Measures and closure.}
For each \(t\), let \(\mu_t\) be a measure on \([0,t]\), and define the Gram
matrix
\[
G(t):=\int_0^t \Phi_t(\tau)\Phi_t(\tau)^\top\,d\mu_t(\tau).
\]
We assume \(G(t)\) is invertible for all \(t\) of interest.
When \(d\mu_t(\tau)=w_t(\tau)d\tau\), define
\[
\psi(t,\tau):=w_t(\tau)\Phi_t(\tau),
\]
and assume the standard HiPPO closure condition
\[
\partial_t\psi(t,\tau)=A_L(t)\psi(t,\tau)\quad(\tau<t),
\qquad
B_L(t):=\psi(t,t).
\]
We also assume sufficient regularity to justify the first variations and
Leibniz-rule differentiations below, for instance the usual dominated
convergence hypotheses.
Under only weak regularity, the displayed coefficient dynamics are interpreted
at times where the relevant derivatives exist, or almost everywhere.

\paragraph{Right-transport paths and pathwise semantics.}
For the SHiPPO construction, let
\(A_R:[0,T]\to\mathbb R^{d\times d}\) be an admissible integrable right-generator path, and let
\(P(t,\tau)\) solve
\[
\partial_tP(t,\tau)=P(t,\tau)A_R(t),
\qquad
P(\tau,\tau)=I_d.
\]
Standard linear ODE theory gives \(P(t,\tau)\in GL(d)\).
The reduction and invertibility details are collected in
Appendix~\ref{app:reductions-moving-frame}.
We use the induced metric
\[
M_P(t,\tau):=P(t,\tau)P(t,\tau)^\top.
\]

SHiPPO is defined relative to such a chosen right-generator path.
For each chosen path, the transport \(P(t,\tau)\), the metric
\(M_P(t,\tau)\), and the transported approximation family are determined, and
the inner online projection problem minimizes only over the coefficient matrix
\(C\).
If \(A_R\) is generated by a causal controller, the variational identities,
normal equations, and coefficient dynamics below apply pathwise for the
realized transport path.
Learning or parameterizing \(A_R\) is an outer modeling problem, not part of
the inner projection optimization.

\subsection{Ordinary vector-valued HiPPO}
\label{app:ordinary-hippo}
\label{app:full-derivations:hippo}

For ordinary HiPPO at time \(t\), the approximation family is
\[
\mathcal G_t^{\mathrm H}
=
\{\,\tau\mapsto C^\top\Phi_t(\tau):
C\in\mathbb R^{N\times d}\,\},
\]
and the Euclidean-channel objective is
\[
J_t^{\mathrm H}(C)
:=
\int_0^t
\bigl\|f(\tau)-C^\top\Phi_t(\tau)\bigr\|_2^2
\,d\mu_t(\tau).
\]

\begin{proposition}[Ordinary HiPPO normal equation]
\label{prop:app-hippo-normal-eq}
Any minimizer \(C_H(t)\) of \(J_t^{\mathrm H}\) satisfies
\[
G(t)C_H(t)
=
\int_0^t
\Phi_t(\tau)f(\tau)^\top\,d\mu_t(\tau).
\]
If \(G(t)\) is invertible, then the minimizer is unique.
\end{proposition}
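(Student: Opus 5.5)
The plan is to treat $J_t^{\mathrm H}$ as a convex quadratic function of the coefficient matrix $C\in\mathbb R^{N\times d}$. First I would expand the objective using the Frobenius notation from Appendix~\ref{app:notation-assumptions}. Since $\|f-C^\top\Phi_t\|_2^2=\|f\|_2^2-2\Phi_t^\top C f+\Phi_t^\top CC^\top\Phi_t$, integrating gives
\[
J_t^{\mathrm H}(C)=\int_0^t\|f\|_2^2\,d\mu_t-2\,\mathrm{tr}\!\Bigl(C^\top\!\int_0^t\Phi_t f^\top d\mu_t\Bigr)+\mathrm{tr}\bigl(C^\top G(t)C\bigr).
\]
Finiteness of each term follows from the standing integrability assumptions, namely $f\in L^2(\mu_t)$ and $\Phi_t\in L^2(\mu_t)$.

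Next I would compute the first variation in a direction $\Delta$. It is
\[
\delta J(C;\Delta)=2\bigl\langle G(t)C-\textstyle\int_0^t\Phi_tf^\top d\mu_t,\;\Delta\bigr\rangle_F .
\]
Because $G(t)$ is symmetric, the quadratic term differentiates to $2G(t)C$. At any minimizer $C_H(t)$, the first variation must vanish for every $\Delta$. This yields the normal equation $G(t)C_H(t)=\int_0^t\Phi_t f^\top\,d\mu_t$.

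For uniqueness, note that $G(t)$ is positive semidefinite, since $x^\top G x=\int (\Phi_t^\top x)^2\,d\mu_t\ge 0$. If $G(t)$ is invertible, it is therefore positive definite, and the map $C\mapsto\mathrm{tr}(C^\top G C)$ is strictly convex. So $J_t^{\mathrm H}$ is strictly convex, and any stationary point is the unique global minimizer. Concretely, $C_H(t)=G(t)^{-1}\int_0^t\Phi_tf^\top d\mu_t$. Equivalently, one can complete the square:
\[
J(C)=J(C_H)+\mathrm{tr}\bigl((C-C_H)^\top G(C-C_H)\bigr).
\]
This identity proves minimality and uniqueness simultaneously, without any appeal to calculus.

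There is no real obstacle here. The only care needed is bookkeeping of transposes: $C^\top\Phi_t$ versus $\Phi_t^\top C$, and the trace identity $\Phi_t^\top Cf=\mathrm{tr}(C^\top\Phi_t f^\top)$. One should also make sure the integrals are finite so that differentiating under the integral is legitimate. Since the expansion is exact and polynomial in $C$, no dominated-convergence argument is actually required.
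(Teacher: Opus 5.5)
Your proposal is correct and follows essentially the paper's route: a first-variation (stationarity) computation yields the normal equation without needing invertibility, and invertibility of \(G(t)\) gives strict convexity and hence uniqueness. The differences are cosmetic. You expand \(J_t^{\mathrm H}\) into an explicit quadratic in \(C\) before differentiating, whereas the paper varies the residual directly. Your completing-the-square identity makes explicit the positive-definiteness argument that the paper simply asserts.
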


\begin{proof}
Fix \(t\), and abbreviate \(\Phi(\tau)=\Phi_t(\tau)\).
For a perturbation \(C\mapsto C+\varepsilon\Delta\), with arbitrary
\(\Delta\in\mathbb R^{N\times d}\), define the residual
\[
e(\tau;C):=f(\tau)-C^\top\Phi(\tau).
\]
Then
\[
e(\tau;C+\varepsilon\Delta)
=
e(\tau;C)-\varepsilon\,\Delta^\top\Phi(\tau).
\]
Using
\[
J_t^{\mathrm H}(C)
=
\int_0^t e(\tau;C)^\top e(\tau;C)\,d\mu_t(\tau),
\]
we obtain
\begin{align*}
\delta J_t^{\mathrm H}(C;\Delta)
&=
2\int_0^t
\bigl(\delta e(\tau;C)\bigr)^\top e(\tau;C)
\,d\mu_t(\tau)
\\
&=
-2\int_0^t
\bigl(\Delta^\top\Phi(\tau)\bigr)^\top e(\tau;C)
\,d\mu_t(\tau).
\end{align*}
Since
\[
\bigl(\Delta^\top\Phi\bigr)^\top e
=
\mathrm{tr}\!\left(\Delta^\top\Phi e^\top\right),
\]
we can write
\[
\delta J_t^{\mathrm H}(C;\Delta)
=
-2\,\mathrm{tr}\!\left[
\Delta^\top
\int_0^t
\Phi(\tau)e(\tau;C)^\top\,d\mu_t(\tau)
\right].
\]
Stationarity for all \(\Delta\) is equivalent to
\[
\int_0^t
\Phi(\tau)e(\tau;C)^\top\,d\mu_t(\tau)
=
0.
\]
Expanding
\[
e(\tau;C)^\top
=
f(\tau)^\top-\Phi(\tau)^\top C
\]
gives
\[
\int_0^t
\Phi(\tau)f(\tau)^\top\,d\mu_t(\tau)
-
\left(
\int_0^t
\Phi(\tau)\Phi(\tau)^\top\,d\mu_t(\tau)
\right)C
=
0.
\]
Thus
\[
G(t)C
=
\int_0^t
\Phi_t(\tau)f(\tau)^\top\,d\mu_t(\tau).
\]
Uniqueness follows because \(J_t^{\mathrm H}\) is a strictly convex quadratic in
\(C\) whenever \(G(t)\) is invertible.
\end{proof}

\begin{proposition}[Ordinary HiPPO coefficient dynamics under closure]
\label{prop:app-hippo-dynamics}
Assume \(d\mu_t(\tau)=w_t(\tau)d\tau\) and the closure condition above. If
\(G(t)=I_N\), then
\[
C_H(t)
=
\int_0^t
\psi(t,\tau)f(\tau)^\top\,d\tau,
\]
and
\[
\dot C_H(t)
=
A_L(t)C_H(t)+B_L(t)f(t)^\top.
\]
\end{proposition}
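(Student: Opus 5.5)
The plan is to obtain the integral representation from Proposition~\ref{prop:app-hippo-normal-eq} and then differentiate it in \(t\) with Leibniz' rule.

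\textbf{Integral representation.} With \(G(t)=I_N\), Proposition~\ref{prop:app-hippo-normal-eq} gives directly
\[
C_H(t)=\int_0^t\Phi_t(\tau)f(\tau)^\top\,d\mu_t(\tau).
\]
Substituting \(d\mu_t(\tau)=w_t(\tau)\,d\tau\) and absorbing the weight into \(\psi(t,\tau)=w_t(\tau)\Phi_t(\tau)\) yields
\[
C_H(t)=\int_0^t\psi(t,\tau)f(\tau)^\top\,d\tau.
\]
This settles the first claim with no further work.

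\textbf{Differentiation.} For the dynamics, I apply Leibniz' rule to an integral whose upper limit and integrand both depend on \(t\):
\[
\dot C_H(t)=\psi(t,t)f(t)^\top+\int_0^t\partial_t\psi(t,\tau)\,f(\tau)^\top\,d\tau .
\]
The two terms are handled separately.
\begin{itemize}
\item The boundary term equals \(B_L(t)f(t)^\top\) by the definition \(B_L(t):=\psi(t,t)\).
\item In the interior term, the closure condition \(\partial_t\psi(t,\tau)=A_L(t)\psi(t,\tau)\) for \(\tau<t\) lets me factor the \(\tau\)-independent matrix \(A_L(t)\) out of the integral. What remains is exactly \(C_H(t)\), so the term equals \(A_L(t)C_H(t)\).
\end{itemize}
The closure condition need only hold for \(\tau<t\), since the diagonal \(\{\tau=t\}\) is a null set for Lebesgue measure. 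Adding the two terms gives \(\dot C_H=A_LC_H+B_Lf^\top\).

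\textbf{Main obstacle.} The only delicate step is justifying the Leibniz differentiation. This needs three things:
\begin{itemize}
\item \(\psi\) is continuous up to the diagonal, so that \(\psi(t,t)\) makes sense as the boundary value;
\item \(f\) is continuous at \(t\), or else the identity is read at Lebesgue points of \(f\);
\item \(\partial_t\psi(\cdot,\tau)f(\tau)^\top\) is locally dominated by an integrable function, so the derivative can pass under the integral by dominated convergence.
\end{itemize}
I will invoke the standing regularity assumptions of Appendix~\ref{app:notation-assumptions} for these. Where they hold only weakly, the identity is stated at differentiability times or almost everywhere, exactly as those assumptions stipulate.
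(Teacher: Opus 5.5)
Your proposal is correct and follows essentially the same route as the paper: you read off the integral representation from the normal equation with \(G(t)=I_N\), then differentiate by Leibniz' rule, identifying the boundary term as \(B_L(t)f(t)^\top\) and the interior term as \(A_L(t)C_H(t)\) via the closure condition. Your explicit list of regularity requirements (continuity of \(\psi\) up to the diagonal, continuity or Lebesgue points of \(f\), and domination of \(\partial_t\psi\, f^\top\)) spells out the standing assumptions that the paper's proof only invokes implicitly.
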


\begin{proof}
With \(G(t)=I_N\), Proposition~\ref{prop:app-hippo-normal-eq} gives
\[
C_H(t)
=
\int_0^t
\Phi_t(\tau)f(\tau)^\top\,d\mu_t(\tau)
=
\int_0^t
\psi(t,\tau)f(\tau)^\top\,d\tau.
\]
Differentiate using Leibniz' rule:
\[
\dot C_H(t)
=
\psi(t,t)f(t)^\top
+
\int_0^t
\partial_t\psi(t,\tau)f(\tau)^\top\,d\tau.
\]
Substituting
\[
\partial_t\psi(t,\tau)=A_L(t)\psi(t,\tau)
\quad(\tau<t),
\qquad
B_L(t)=\psi(t,t),
\]
yields
\[
\dot C_H(t)
=
B_L(t)f(t)^\top
+
A_L(t)
\int_0^t
\psi(t,\tau)f(\tau)^\top\,d\tau
=
A_L(t)C_H(t)+B_L(t)f(t)^\top.
\]
\end{proof}

\paragraph{Non-orthonormal bases.}
If \(G(t)\neq I_N\), then
\[
C_H(t)=G(t)^{-1}b(t),
\qquad
b(t):=\int_0^t
\Phi_t(\tau)f(\tau)^\top\,d\mu_t(\tau).
\]
Differentiating gives
\[
\dot C_H(t)
=
G(t)^{-1}\dot b(t)
-
G(t)^{-1}\dot G(t)C_H(t).
\]
The main text focuses on the orthonormalized case \(G(t)=I_N\), where the
coefficient dynamics take the clean HiPPO form.

\subsection{Direct stationarity of the SHiPPO online approximation problem}
\label{app:shippo-stationarity}

For fixed \(t\), the SHiPPO approximation family is
\[
\mathcal G_t^{\mathrm{SH}}
=
\{\,\tau\mapsto P(t,\tau)^{-\top}C^\top\Phi_t(\tau):
C\in\mathbb R^{N\times d}\,\}.
\]
Equivalently, the SHiPPO coefficient matrix is the minimizer of
\[
J_t^{\mathrm{SH}}(C)
:=
\int_0^t
e_S(\tau;C)^\top M_P(t,\tau)e_S(\tau;C)
\,d\mu_t(\tau),
\]
where
\[
e_S(\tau;C)
:=
f(\tau)-P(t,\tau)^{-\top}C^\top\Phi_t(\tau).
\]

\begin{proposition}[SHiPPO normal equation from stationarity]
\label{prop:app-shippo-stationarity}
Any minimizer \(C_S(t)\) of \(J_t^{\mathrm{SH}}\) satisfies
\[
G(t)C_S(t)
=
\int_0^t
\Phi_t(\tau)f(\tau)^\top P(t,\tau)\,d\mu_t(\tau).
\]
If \(G(t)\) is invertible, then the minimizer is unique.
\end{proposition}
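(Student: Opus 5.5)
The plan is to mirror the proof of Proposition~\ref{prop:app-hippo-normal-eq} directly, with one simplification made first. Abbreviate \(P=P(t,\tau)\) and \(\Phi=\Phi_t(\tau)\). Rewrite the weighted residual as a Euclidean one:
\[
e_S^\top M_P e_S=\|P^\top e_S\|_2^2,
\qquad
P^\top e_S(\tau;C)=P^\top f(\tau)-C^\top\Phi .
\]
This uses \(P^\top P^{-\top}=I_d\), and \(P\) is invertible because \(P(t,\tau)\in GL(d)\). Consequently
\[
J_t^{\mathrm{SH}}(C)=\int_0^t\|\tilde f(\tau)-C^\top\Phi_t(\tau)\|_2^2\,d\mu_t(\tau),
\qquad
\tilde f(\tau):=P(t,\tau)^\top f(\tau).
\]
That is, \(J_t^{\mathrm{SH}}\) is exactly the ordinary HiPPO objective \(J_t^{\mathrm H}\) evaluated on the transported signal \(\tilde f=\mathcal T_tf\).

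Next, apply Proposition~\ref{prop:app-hippo-normal-eq} to \(\tilde f\). Any minimizer then satisfies
\[
G(t)C_S=\int_0^t\Phi\,\tilde f^\top d\mu_t=\int_0^t\Phi f^\top P\,d\mu_t,
\]
which is the claimed normal equation. Uniqueness follows from the same strict convexity argument: the objective is a quadratic in \(C\) whose Hessian is \(2\,G(t)\otimes I_d\), and this is positive definite whenever \(G(t)\) is invertible. Note that \(G(t)\) is PSD by construction, so invertibility makes it positive definite.

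Since the proposition is titled ``from stationarity'', I would also record the direct first variation as a cross-check, without relying on the conjugacy. Perturbing \(C\mapsto C+\varepsilon\Delta\) gives \(\delta e_S=-P^{-\top}\Delta^\top\Phi\). Then
\[
\delta J=-2\int_0^t\Phi^\top\Delta P^{-1}M_Pe_S\,d\mu_t=-2\,\mathrm{tr}\Big[\Delta^\top\int_0^t\Phi\,e_S^\top P\,d\mu_t\Big],
\]
using \(P^{-1}M_P=P^\top\). Setting this to zero for all \(\Delta\) gives \(\int\Phi e_S^\top P\,d\mu_t=0\). Expanding \(e_S^\top P=f^\top P-\Phi^\top C\) then yields the same equation.

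The only delicate point, and it is mild, is integrability. We need \(\tau\mapsto P(t,\tau)\) to be bounded on \([0,t]\), which holds since it is continuous for integrable \(A_R\). We also need \(\Phi f^\top P\) to be \(\mu_t\)-integrable so that differentiating under the integral is justified. Both follow from the standing regularity assumptions, as in the ordinary HiPPO case.
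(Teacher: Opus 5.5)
Your proof is correct. The paper proves this proposition with the direct first-variation computation that you record as your cross-check: perturb \(C\), use \(P^{-1}M_P=P^\top\) to reduce stationarity to \(\int_0^t\Phi_t\,e_S^\top P\,d\mu_t=0\), and expand \(e_S^\top P=f^\top P-\Phi_t^\top C\). It then argues uniqueness from positive definiteness of \(M_P\) together with injectivity of \(C\mapsto P^{-\top}C^\top\Phi_t\) modulo \(\mu_t\)-null sets.

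Your primary argument is different. You absorb the metric pointwise, \(e_S^\top M_Pe_S=\|P^\top f-C^\top\Phi_t\|_2^2\). Then \(J_t^{\mathrm{SH}}\) coincides with \(J_t^{\mathrm H}\) for the signal \(P(t,\cdot)^\top f\), and Proposition~\ref{prop:app-hippo-normal-eq} applies verbatim. The paper uses this route only in the main-text sketch of Theorem~\ref{thm:shippo-dynamics} and in the remark after Proposition~\ref{prop:app-transport-conjugacy}.

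Your route gives both conclusions at once and makes the structure visible. The quadratic part of the objective is \(\mathrm{tr}(C^\top G(t)C)\), which does not depend on \(P\), so transport changes only the data term. Uniqueness is then just \(G(t)\succ0\). Your Hessian \(2\,G\otimes I_d\) reads \(2\,I_d\otimes G\) under column-stacking \(\mathrm{vec}\), but positivity is unaffected.

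The paper's route is a self-contained calculation that does not presuppose the conjugacy. It shows the cancellation \(P^{-1}M_P=P^\top\) line by line, in parallel with the metric-only computation of Appendix~\ref{app:metric-only}. There no such cancellation occurs, and \(\int_0^t\Phi_t\Phi_t^\top C\,M(t,\tau)\,d\mu_t\) need not factor through \(G(t)\).
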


\begin{proof}
Fix \(t\), abbreviate
\[
P(\tau):=P(t,\tau),
\qquad
M_P(\tau):=M_P(t,\tau),
\qquad
\Phi(\tau):=\Phi_t(\tau).
\]
For a perturbation \(C\mapsto C+\varepsilon\Delta\),
\[
\delta e_S(\tau;C)
=
-P(\tau)^{-\top}\Delta^\top\Phi(\tau).
\]
Therefore
\begin{align*}
\delta J_t^{\mathrm{SH}}(C;\Delta)
&=
2\int_0^t
\bigl(\delta e_S(\tau;C)\bigr)^\top
M_P(\tau)e_S(\tau;C)
\,d\mu_t(\tau)
\\
&=
-2\int_0^t
\bigl(\Delta^\top\Phi(\tau)\bigr)^\top
P(\tau)^{-1}M_P(\tau)e_S(\tau;C)
\,d\mu_t(\tau).
\end{align*}
Since
\[
P(\tau)^{-1}M_P(\tau)
=
P(\tau)^{-1}P(\tau)P(\tau)^\top
=
P(\tau)^\top,
\]
we get
\[
\delta J_t^{\mathrm{SH}}(C;\Delta)
=
-2\int_0^t
\bigl(\Delta^\top\Phi(\tau)\bigr)^\top
P(\tau)^\top e_S(\tau;C)
\,d\mu_t(\tau).
\]
Using the trace identity
\[
\bigl(\Delta^\top\Phi\bigr)^\top P^\top e_S
=
\mathrm{tr}\!\left(\Delta^\top\Phi e_S^\top P\right),
\]
this becomes
\[
\delta J_t^{\mathrm{SH}}(C;\Delta)
=
-2\,\mathrm{tr}\!\left[
\Delta^\top
\int_0^t
\Phi(\tau)e_S(\tau;C)^\top P(\tau)
\,d\mu_t(\tau)
\right].
\]
Stationarity for all \(\Delta\) is equivalent to
\[
\int_0^t
\Phi(\tau)e_S(\tau;C)^\top P(\tau)
\,d\mu_t(\tau)
=
0.
\]
Now
\[
e_S(\tau;C)^\top P(\tau)
=
f(\tau)^\top P(\tau)
-
\Phi(\tau)^\top C,
\]
because
\[
\bigl(P(\tau)^{-\top}C^\top\Phi(\tau)\bigr)^\top P(\tau)
=
\Phi(\tau)^\top C P(\tau)^{-1}P(\tau)
=
\Phi(\tau)^\top C.
\]
Thus stationarity gives
\[
\int_0^t
\Phi(\tau)f(\tau)^\top P(\tau)\,d\mu_t(\tau)
-
\left(
\int_0^t
\Phi(\tau)\Phi(\tau)^\top\,d\mu_t(\tau)
\right)C
=
0,
\]
or equivalently
\[
G(t)C
=
\int_0^t
\Phi_t(\tau)f(\tau)^\top P(t,\tau)\,d\mu_t(\tau).
\]
This proves the displayed normal equation for \(C=C_S(t)\).

Uniqueness follows from strict convexity.
Indeed, the map
\(C\mapsto P(t,\tau)^{-\top}C^\top\Phi_t(\tau)\) is injective modulo
\(\mu_t\)-null sets when \(G(t)\) is invertible and \(P(t,\tau)\in GL(d)\), and
the metric \(M_P(t,\tau)\) is positive definite for each \(\tau\).
\end{proof}

\subsection{Transport isometry and SHiPPO--HiPPO conjugacy}
\label{app:shippo-conjugacy}
\label{app:full-derivations:conjugacy}

The direct stationarity calculation above gives the SHiPPO normal equation.
We now record the equivalent conjugacy view, which explains why the coupled
transport of the metric and approximation family preserves the HiPPO coefficient
structure.

For fixed \(t\), define the transport operator
\[
(\mathcal T_tu)(\tau):=P(t,\tau)^\top u(\tau),
\qquad
(\mathcal T_t^{-1}u)(\tau):=P(t,\tau)^{-\top}u(\tau).
\]
Also define the metric inner product
\[
\langle u,v\rangle_{t,M_P}
:=
\int_0^t
u(\tau)^\top M_P(t,\tau)v(\tau)\,d\mu_t(\tau),
\]
and the Euclidean-channel inner product
\[
\langle u,v\rangle_{t,I}
:=
\int_0^t
u(\tau)^\top v(\tau)\,d\mu_t(\tau).
\]

\begin{proposition}[Transport isometry and SHiPPO--HiPPO conjugacy]
\label{prop:app-transport-conjugacy}
For any \(u,v:[0,t]\to\mathbb R^d\),
\[
\langle u,v\rangle_{t,M_P}
=
\langle \mathcal T_tu,\mathcal T_tv\rangle_{t,I}.
\]
Moreover,
\[
\mathcal T_t
\Bigl(
P(t,\cdot)^{-\top}C^\top\Phi_t(\cdot)
\Bigr)
=
C^\top\Phi_t(\cdot),
\]
so \(\mathcal T_t\) maps
\(\mathcal G_t^{\mathrm{SH}}\) bijectively onto
\(\mathcal G_t^{\mathrm H}\).
Consequently,
\[
\operatorname{shippo}_t(f)
=
\operatorname{hippo}_t(\mathcal T_tf).
\]
\end{proposition}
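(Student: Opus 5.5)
The argument has three parts, each a pointwise identity in \(\tau\) that is then integrated against \(\mu_t\).

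\textbf{Isometry.} For fixed \(t\) and each \(\tau\), I would expand the metric as
\[
u(\tau)^\top M_P(t,\tau)v(\tau)=u(\tau)^\top P(t,\tau)P(t,\tau)^\top v(\tau)=\bigl(P(t,\tau)^\top u(\tau)\bigr)^\top\bigl(P(t,\tau)^\top v(\tau)\bigr).
\]
The right-hand side is exactly \((\mathcal T_tu)(\tau)^\top(\mathcal T_tv)(\tau)\). Integrating against \(\mu_t\) gives \(\langle u,v\rangle_{t,M_P}=\langle \mathcal T_tu,\mathcal T_tv\rangle_{t,I}\). I would also note that \(\mathcal T_t\) is a bijection on \(\mathbb R^d\)-valued functions on \([0,t]\), with inverse \(\mathcal T_t^{-1}\), because \(P(t,\tau)\in GL(d)\) for every \(\tau\) (Appendix~\ref{app:notation-assumptions}).

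\textbf{Family correspondence.} Pointwise, \(P(t,\tau)^\top P(t,\tau)^{-\top}C^\top\Phi_t(\tau)=C^\top\Phi_t(\tau)\). Hence \(\mathcal T_t\) sends the element of \(\mathcal G_t^{\mathrm{SH}}\) indexed by \(C\) to the element of \(\mathcal G_t^{\mathrm H}\) indexed by the same \(C\). This map is surjective by construction, and it is injective because \(\mathcal T_t\) is injective. It is therefore a bijection that preserves the coefficient label \(C\). This label preservation is what makes the final equality an identity of coefficient matrices rather than merely of approximating functions.

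\textbf{Conjugacy of minimizers.} Applying the isometry with \(u=v=e_S(\cdot;C)\), and using linearity of \(\mathcal T_t\) together with the previous step, gives for every \(C\)
\[
J_t^{\mathrm{SH}}(C)=\bigl\|\mathcal T_tf-C^\top\Phi_t\bigr\|_{t,I}^2=J_t^{\mathrm H}(C;\mathcal T_tf),
\]
where \(J_t^{\mathrm H}(\cdot\,;g)\) denotes the ordinary HiPPO objective with signal \(g\). The two objectives coincide as functions of \(C\), so they have the same minimizers. By Propositions~\ref{prop:app-hippo-normal-eq} and \ref{prop:app-shippo-stationarity}, each minimizer is unique when \(G(t)\) is invertible, and so \(\operatorname{shippo}_t(f)=\operatorname{hippo}_t(\mathcal T_tf)\). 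As a consistency check, the HiPPO normal equation for the signal \(\mathcal T_tf\) is \(GC=\int\Phi_t f^\top P\,d\mu_t\), which matches Proposition~\ref{prop:app-shippo-stationarity}.

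\textbf{Main obstacle.} There is no real difficulty in the algebra. The care needed is in integrability: \(\mathcal T_tf\) must be square-integrable against \(\mu_t\) whenever \(f\) is admissible. I would handle this by noting that \(P(t,\cdot)\) is continuous on the compact interval \([0,t]\), so \(P(t,\cdot)\) and \(P(t,\cdot)^{-1}\) are uniformly bounded there. This makes \(\mathcal T_t\) a bounded operator with bounded inverse between the two \(L^2\) spaces.
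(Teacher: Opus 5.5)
Your proposal is correct and follows the paper's proof essentially step for step: the pointwise identity \(u^\top PP^\top v=(P^\top u)^\top(P^\top v)\), the cancellation \(P^\top P^{-\top}=I\) giving a coefficient-preserving bijection \(\mathcal G_t^{\mathrm{SH}}\to\mathcal G_t^{\mathrm H}\), and transfer of the minimizer via \(\|f-g\|_{t,M_P}=\|\mathcal T_tf-\mathcal T_tg\|_{t,I}\). You write the last step as \(J_t^{\mathrm{SH}}(C)=J_t^{\mathrm H}(C;\mathcal T_tf)\) directly in the coefficient variable, which is a slightly tighter version of the paper's ``minimize over \(g\), coefficients are unchanged'' phrasing, and your integrability remark is an addition the paper omits rather than a needed fix (continuity of \(P(t,\cdot)\) in \(\tau\) follows from the composition rule, \(P(t,\tau)=P(\tau,0)^{-1}P(t,0)\)).
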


\begin{proof}
For the isometry, compute pointwise:
\[
u(\tau)^\top M_P(t,\tau)v(\tau)
=
u(\tau)^\top P(t,\tau)P(t,\tau)^\top v(\tau)
=
(\mathcal T_tu)(\tau)^\top(\mathcal T_tv)(\tau).
\]
Integrating over \([0,t]\) gives
\[
\langle u,v\rangle_{t,M_P}
=
\langle \mathcal T_tu,\mathcal T_tv\rangle_{t,I}.
\]

For the mapping of approximation families,
\[
\mathcal T_t
\left(
P(t,\tau)^{-\top}C^\top\Phi_t(\tau)
\right)
=
P(t,\tau)^\top P(t,\tau)^{-\top}C^\top\Phi_t(\tau)
=
C^\top\Phi_t(\tau).
\]
Thus
\[
\mathcal T_t(\mathcal G_t^{\mathrm{SH}})
=
\mathcal G_t^{\mathrm H}.
\]
Since \(P(t,\tau)\in GL(d)\), the inverse map is
\(\mathcal T_t^{-1}\), so this correspondence is bijective.

Finally, for any \(g\in\mathcal G_t^{\mathrm{SH}}\),
\[
\|f-g\|_{t,M_P}^2
=
\|\mathcal T_tf-\mathcal T_tg\|_{t,I}^2.
\]
Minimizing over \(g\in\mathcal G_t^{\mathrm{SH}}\) is therefore equivalent to
minimizing over
\[
\tilde g:=\mathcal T_tg\in\mathcal G_t^{\mathrm H}.
\]
This is precisely the ordinary HiPPO projection of \(\mathcal T_tf\).
The coefficient matrix is unchanged under the map between the two approximation
families, giving
\[
\operatorname{shippo}_t(f)
=
\operatorname{hippo}_t(\mathcal T_tf).
\]
\end{proof}

\paragraph{Relation to the normal equation.}
Proposition~\ref{prop:app-transport-conjugacy} gives another way to obtain
Proposition~\ref{prop:app-shippo-stationarity}: apply the ordinary HiPPO
normal equation to the transformed signal
\[
(\mathcal T_tf)(\tau)=P(t,\tau)^\top f(\tau).
\]
More importantly, the conjugacy shows that SHiPPO is ordinary HiPPO in a
transported channel frame.
This is the operator-level reason that
Definition~\ref{def:shippo-projection} transports the metric and the
approximation family together, rather than modifying the channel metric alone.

\subsection{Proof of Theorem~\ref{thm:shippo-dynamics} and Corollary~\ref{cor:shippo-lift}}
\label{app:shippo-theorem}
\label{app:full-derivations:shippo-dynamics}

\begin{proof}
By Proposition~\ref{prop:app-shippo-stationarity}, the SHiPPO minimizer
satisfies
\[
G(t)C_S(t)
=
\int_0^t
\Phi_t(\tau)f(\tau)^\top P(t,\tau)\,d\mu_t(\tau).
\]
This is the first displayed equation in Theorem~\ref{thm:shippo-dynamics}.

Now assume \(G(t)=I_N\) and \(d\mu_t(\tau)=w_t(\tau)d\tau\), so that
\(\psi(t,\tau)=w_t(\tau)\Phi_t(\tau)\).
Then the normal equation becomes
\[
C_S(t)
=
\int_0^t
\Phi_t(\tau)f(\tau)^\top P(t,\tau)\,d\mu_t(\tau)
=
\int_0^t
\psi(t,\tau)f(\tau)^\top P(t,\tau)\,d\tau.
\]
Differentiating this representation using Leibniz' rule gives
\begin{align*}
\dot C_S(t)
&=
\psi(t,t)f(t)^\top P(t,t)
\\
&\quad
+
\int_0^t
\partial_t\psi(t,\tau)f(\tau)^\top P(t,\tau)\,d\tau
\\
&\quad
+
\int_0^t
\psi(t,\tau)f(\tau)^\top \partial_tP(t,\tau)\,d\tau.
\end{align*}
Using
\[
P(t,t)=I_d,
\qquad
\psi(t,t)=B_L(t),
\]
the boundary term is
\[
B_L(t)f(t)^\top.
\]
Using the closure condition
\[
\partial_t\psi(t,\tau)=A_L(t)\psi(t,\tau),
\]
the second term is
\[
A_L(t)
\int_0^t
\psi(t,\tau)f(\tau)^\top P(t,\tau)\,d\tau
=
A_L(t)C_S(t).
\]
Using the transport equation
\[
\partial_tP(t,\tau)=P(t,\tau)A_R(t),
\]
the third term is
\[
\int_0^t
\psi(t,\tau)f(\tau)^\top P(t,\tau)A_R(t)\,d\tau
=
C_S(t)A_R(t),
\]
since \(A_R(t)\) does not depend on \(\tau\) inside the integral.
Therefore
\[
\dot C_S(t)
=
A_L(t)C_S(t)+B_L(t)f(t)^\top+C_S(t)A_R(t),
\]
which proves the Sylvester coefficient dynamics.
\end{proof}

\paragraph{Proof of Corollary~\ref{cor:shippo-lift}.}
The corollary is not a statement about arbitrary matrix ODEs. It applies to
closed one-sided coefficient equations obtained from the online projection setup
above. Suppose that, under the HiPPO closure condition and the orthonormalized
normal equation, the ordinary coefficient trajectory satisfies
\[
\dot C(t)=A_L(t)C(t)+B_L(t)f(t)^\top .
\]
For any admissible right-transport path \(A_R\), Definition~\ref{def:shippo-projection}
uses the same left approximation family and source injection, while replacing
the signal by its transported pathwise history in the normal equation. Hence
\[
C_S(t)
=
\int_0^t \psi(t,\tau)f(\tau)^\top P(t,\tau)\,d\tau .
\]
Differentiating this representation at differentiability times gives the same
left-memory contribution and boundary source as in the one-sided equation,
while the right transport contributes
\[
\int_0^t
\psi(t,\tau)f(\tau)^\top P(t,\tau)A_R(t)\,d\tau
=
C_S(t)A_R(t).
\]
Therefore
\[
\dot C_S(t)
=
A_L(t)C_S(t)+B_L(t)f(t)^\top+C_S(t)A_R(t).
\]
Thus the transported lift preserves the left online-memory operator inherited
from the projection problem and adds the chosen right-action transport as an
external modeling path. Architectural restrictions on \(A_R\), such as
group-local or controller-compatible forms, enter only in the scan-compatible
realization of Section~\ref{sec:selective-transport-cell}.

\paragraph{Non-orthonormal bases.}
If \(G(t)\neq I_N\), define
\[
b_S(t)
:=
\int_0^t
\Phi_t(\tau)f(\tau)^\top P(t,\tau)\,d\mu_t(\tau).
\]
Then
\[
C_S(t)=G(t)^{-1}b_S(t),
\]
and differentiating gives
\[
\dot C_S(t)
=
G(t)^{-1}\dot b_S(t)
-
G(t)^{-1}\dot G(t)C_S(t).
\]
Thus the clean Sylvester form in the main text is the orthonormalized
\(G(t)=I_N\) case, matching the usual HiPPO presentation.

\subsection{Metric-only modifications do not generally close}
\label{app:metric-only}
\label{app:full-derivations:metric-only}

This subsection justifies the coupled transport used in
Definition~\ref{def:shippo-projection} by analyzing a nearby but different
construction in which only the channel metric is changed.
This construction is not part of SHiPPO itself and, in general, does not
preserve a finite HiPPO-style coefficient closure.

Keep the ordinary HiPPO family
\[
\mathcal G_t^{\mathrm H}
=
\{\,\tau\mapsto C^\top\Phi_t(\tau):
C\in\mathbb R^{N\times d}\,\},
\]
and replace the Euclidean channel metric by a possibly \(t\)- and
\(\tau\)-dependent SPD matrix
\[
M(t,\tau)\in\mathbb S_{++}^d.
\]
Define the metric-only objective
\[
J_t^{M}(C)
:=
\int_0^t
\bigl(f(\tau)-C^\top\Phi_t(\tau)\bigr)^\top
M(t,\tau)
\bigl(f(\tau)-C^\top\Phi_t(\tau)\bigr)
\,d\mu_t(\tau).
\]

\begin{proposition}[Stationary condition for the metric-only extension]
\label{prop:app-metric-only-stationary}
Any minimizer \(C_M(t)\) of \(J_t^{M}\) satisfies
\[
\int_0^t
\Phi_t(\tau)
\left(
M(t,\tau)
\bigl(f(\tau)-C_M(t)^\top\Phi_t(\tau)\bigr)
\right)^\top
\,d\mu_t(\tau)
=
0.
\]
Equivalently,
\[
\int_0^t
\Phi_t(\tau)f(\tau)^\top M(t,\tau)\,d\mu_t(\tau)
=
\int_0^t
\Phi_t(\tau)\Phi_t(\tau)^\top C_M(t)M(t,\tau)\,d\mu_t(\tau).
\]
\end{proposition}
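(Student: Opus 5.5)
The plan is to repeat the first-variation argument of Proposition~\ref{prop:app-hippo-normal-eq} and Proposition~\ref{prop:app-shippo-stationarity}, with the weight \(M(t,\tau)\) kept inside the integral. Fix \(t\) and write \(\Phi=\Phi_t\), \(M=M(t,\tau)\), and \(e(\tau;C)=f(\tau)-C^\top\Phi(\tau)\). Under \(C\mapsto C+\varepsilon\Delta\), the residual changes by \(\delta e=-\Delta^\top\Phi\). Since \(M\) is symmetric, this gives
\[
\delta J_t^{M}(C;\Delta)=-2\int_0^t \bigl(\Delta^\top\Phi(\tau)\bigr)^\top M(t,\tau)e(\tau;C)\,d\mu_t(\tau).
\]

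Next, I convert the integrand to trace form using \((\Delta^\top\Phi)^\top Me=\mathrm{tr}(\Delta^\top\Phi (Me)^\top)\). This yields
\[
\delta J_t^{M}(C;\Delta)=-2\,\mathrm{tr}\Bigl[\Delta^\top\int_0^t\Phi(\tau)\bigl(M(t,\tau)e(\tau;C)\bigr)^\top d\mu_t(\tau)\Bigr].
\]
Any minimizer is a stationary point, because \(J_t^M\) is a differentiable quadratic in \(C\) (under the standing integrability assumptions). Stationarity must hold for every \(\Delta\in\mathbb R^{N\times d}\). Choosing \(\Delta\) equal to the bracketed matrix therefore forces that matrix to vanish, which is the first displayed identity.

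For the equivalent form, I expand \((Me)^\top=e^\top M=f^\top M-\Phi^\top C M\), again using \(M^\top=M\). Splitting the integral then gives
\[
\int\Phi f^\top M\,d\mu_t=\int \Phi\Phi^\top C_M M\,d\mu_t.
\]
The factor \(C_M\) cannot be pulled outside the integral, since \(M\) depends on \(\tau\). That is precisely why the statement keeps it inside, and it is the point the subsequent non-closure discussion relies on.

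The only step needing care is justifying the exchange of differentiation in \(\varepsilon\) with the integral, together with symmetry of \(M\). Both follow from \(M\in\mathbb S^d_{++}\) and the standing dominated-convergence assumptions. Existence of a minimizer is not claimed, so no coercivity argument is needed.
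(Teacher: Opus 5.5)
Your proposal is correct and follows essentially the same route as the paper's proof: the first variation under \(C\mapsto C+\varepsilon\Delta\) using symmetry of \(M(t,\tau)\), conversion to the trace form \(-2\,\mathrm{tr}\bigl[\Delta^\top\int_0^t\Phi(Me)^\top d\mu_t\bigr]\), stationarity for all \(\Delta\), and expansion via \(e^\top M=f^\top M-\Phi^\top CM\). Your added remarks (choosing \(\Delta\) equal to the bracketed matrix, and noting that a minimizer of a differentiable quadratic is stationary) only spell out steps the paper leaves implicit.
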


\begin{proof}
Fix \(t\), abbreviate
\[
\Phi(\tau)=\Phi_t(\tau),
\qquad
M(\tau)=M(t,\tau),
\]
and define
\[
e(\tau;C):=f(\tau)-C^\top\Phi(\tau).
\]
For \(C\mapsto C+\varepsilon\Delta\),
\[
e(\tau;C+\varepsilon\Delta)
=
e(\tau;C)-\varepsilon\,\Delta^\top\Phi(\tau).
\]
Since \(M(\tau)\) is symmetric,
\begin{align*}
\delta J_t^M(C;\Delta)
&=
2\int_0^t
\bigl(\delta e(\tau;C)\bigr)^\top M(\tau)e(\tau;C)
\,d\mu_t(\tau)
\\
&=
-2\int_0^t
\bigl(\Delta^\top\Phi(\tau)\bigr)^\top M(\tau)e(\tau;C)
\,d\mu_t(\tau).
\end{align*}
Using
\[
\bigl(\Delta^\top\Phi\bigr)^\top M e
=
\mathrm{tr}\!\left(\Delta^\top\Phi(Me)^\top\right),
\]
we get
\[
\delta J_t^M(C;\Delta)
=
-2\,\mathrm{tr}\!\left[
\Delta^\top
\int_0^t
\Phi(\tau)\bigl(M(\tau)e(\tau;C)\bigr)^\top
\,d\mu_t(\tau)
\right].
\]
Stationarity for all \(\Delta\) yields
\[
\int_0^t
\Phi(\tau)\bigl(M(\tau)e(\tau;C)\bigr)^\top
\,d\mu_t(\tau)
=
0.
\]
Expanding \(e(\tau;C)=f(\tau)-C^\top\Phi(\tau)\) gives
\[
\int_0^t
\Phi(\tau)f(\tau)^\top M(\tau)\,d\mu_t(\tau)
=
\int_0^t
\Phi(\tau)\Phi(\tau)^\top C M(\tau)\,d\mu_t(\tau).
\]
Substituting \(C=C_M(t)\) proves the claim.
\end{proof}

It is convenient to write this stationary equation as a linear operator equation.
Define
\[
\mathcal K_{M,t}[C]
:=
\int_0^t
\Phi_t(\tau)\Phi_t(\tau)^\top C M(t,\tau)
\,d\mu_t(\tau),
\]
and
\[
b_{M,t}
:=
\int_0^t
\Phi_t(\tau)f(\tau)^\top M(t,\tau)
\,d\mu_t(\tau).
\]
Then
\[
\mathcal K_{M,t}[C_M(t)]
=
b_{M,t}.
\]

\begin{proposition}[\(\tau\)-independent metrics reduce to ordinary HiPPO]
\label{prop:app-tau-indep-reduction}
If
\[
M(t,\tau)=\overline M(t)\in\mathbb S_{++}^d
\]
is independent of \(\tau\), then the metric-only stationary equation reduces to
\[
G(t)C_M(t)
=
\int_0^t
\Phi_t(\tau)f(\tau)^\top\,d\mu_t(\tau),
\]
the same normal equation as ordinary vector-valued HiPPO.
\end{proposition}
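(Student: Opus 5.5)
The plan is to start from the metric-only stationary equation of Proposition~\ref{prop:app-metric-only-stationary}, written in operator form as \(\mathcal K_{M,t}[C_M(t)]=b_{M,t}\), and substitute \(M(t,\tau)=\overline M(t)\). Since \(\overline M(t)\) does not depend on the integration variable \(\tau\), it factors out of both integrals on the right. This gives
\[
\mathcal K_{M,t}[C]
=
\Bigl(\int_0^t \Phi_t(\tau)\Phi_t(\tau)^\top\,d\mu_t(\tau)\Bigr)C\,\overline M(t)
=
G(t)\,C\,\overline M(t),
\]
and
\[
b_{M,t}
=
\Bigl(\int_0^t \Phi_t(\tau)f(\tau)^\top\,d\mu_t(\tau)\Bigr)\overline M(t).
\]
The stationary condition therefore becomes
\[
G(t)C_M(t)\overline M(t)
=
\Bigl(\int_0^t \Phi_t(\tau)f(\tau)^\top\,d\mu_t(\tau)\Bigr)\overline M(t).
\]

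The second step uses \(\overline M(t)\in\mathbb S_{++}^d\). Positive definite matrices are invertible, so right-multiplying both sides by \(\overline M(t)^{-1}\) gives exactly the ordinary HiPPO normal equation of Proposition~\ref{prop:app-hippo-normal-eq}. When \(G(t)\) is invertible, we conclude moreover that \(C_M(t)=C_H(t)\). The reason is that \(J_t^M\) is a strictly convex quadratic: its Hessian is the map \(\Delta\mapsto 2G(t)\Delta\overline M(t)\), which is positive definite under the Frobenius inner product because \(G(t)\succ0\) and \(\overline M(t)\succ0\). So the unique stationary point is the HiPPO coefficient.

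There is no real obstacle here. The only point needing care is to pull \(\overline M(t)\) out on the \emph{right} side of \(C\) in \(\mathcal K_{M,t}\), respecting matrix order, rather than commuting it past \(C\). Integrability of the integrands is inherited from the standing assumptions, since \(\overline M(t)\) is a constant matrix for fixed \(t\).
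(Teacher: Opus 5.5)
Your proposal is correct and follows the paper's proof essentially verbatim. The paper substitutes \(M(t,\tau)=\overline M(t)\) into \(\mathcal K_{M,t}[C_M(t)]=b_{M,t}\), factors \(\overline M(t)\) out on the right of \(C\) to obtain \(G(t)C_M(t)\overline M(t)=\bigl(\int_0^t\Phi_t(\tau)f(\tau)^\top\,d\mu_t(\tau)\bigr)\overline M(t)\), and right-multiplies by \(\overline M(t)^{-1}\), exactly as you do. Your extra strict-convexity remark identifying \(C_M(t)\) with \(C_H(t)\) goes beyond what the statement asks, but it is correct.
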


\begin{proof}
When \(M(t,\tau)=\overline M(t)\), the operator equation becomes
\[
\left(
\int_0^t
\Phi_t(\tau)\Phi_t(\tau)^\top\,d\mu_t(\tau)
\right)
C_M(t)\overline M(t)
=
\int_0^t
\Phi_t(\tau)f(\tau)^\top\overline M(t)\,d\mu_t(\tau).
\]
Since \(\overline M(t)\) is invertible, right-multiplying by
\(\overline M(t)^{-1}\) gives
\[
G(t)C_M(t)
=
\int_0^t
\Phi_t(\tau)f(\tau)^\top\,d\mu_t(\tau).
\]
\end{proof}

\paragraph{Why \(\tau\)-dependent metrics are obstructive.}
When \(M(t,\tau)\) genuinely depends on \(\tau\), the operator
\(\mathcal K_{M,t}\) does not, in general, factor as
\[
C\mapsto G(t)C\overline M(t)
\]
for some matrix \(\overline M(t)\).
Consequently, \(C_M(t)\) is not generally given by a simple HiPPO-style normal equation.
Recovering an online closed ODE for \(C_M(t)\) typically requires either solving a time-varying linear
operator equation at each \(t\), or augmenting the state with additional
metric-dependent moments.

SHiPPO avoids this generic obstruction by coupling the metric with the
approximation family.
In the transported construction,
\[
M_P(t,\tau)=P(t,\tau)P(t,\tau)^\top
\]
is paired with the transported family
\[
\tau\mapsto P(t,\tau)^{-\top}C^\top\Phi_t(\tau).
\]
This coupling is exactly what produces the stationarity identity
\[
G(t)C_S(t)
=
\int_0^t
\Phi_t(\tau)f(\tau)^\top P(t,\tau)\,d\mu_t(\tau),
\]
and hence the Sylvester coefficient dynamics of
Theorem~\ref{thm:shippo-dynamics}.


\section{Reductions, Identity Metrics, and Moving Frames}
\label{app:reductions-moving-frame}

This appendix supports the reduction, identity-metric, and moving-frame claims used at the end of Section~\ref{sec:shippo}.
Fix an admissible right-generator path \(A_R\in L^1([0,T];\mathbb R^{d\times d})\).
The results below are exact for such a chosen path, and apply pathwise when \(A_R\) is generated by a causal controller.
This appendix is not about optimizing over \(A_R\), but about the algebraic consequences of the transported approximation problem once a transport path has been specified.
Since \(A_R\) is only assumed integrable, differential identities involving \(A_R(t)\) hold almost everywhere; statements such as \(A_R\equiv0\) or \(A_R+A_R^\top\equiv0\) are interpreted in this a.e.\ sense unless additional continuity is assumed.

\subsection{Transport basics}
\label{app:transport-basics}
\label{app:reductions-moving-frame:transport-basics}

Throughout this appendix, assume
\[
A_R:[0,T]\to\mathbb R^{d\times d}
\]
is measurable and integrable. For each fixed \(\tau\in[0,T]\), define
\(P(\cdot,\tau)\) as the unique absolutely continuous solution of
\begin{equation}
P(t,\tau)
=
I_d+\int_\tau^t P(s,\tau)A_R(s)\,ds,
\qquad
0\le \tau\le t\le T.
\label{eq:appB-transport-integral}
\end{equation}
Equivalently,
\begin{equation}
\partial_tP(t,\tau)=P(t,\tau)A_R(t)
\quad\text{for a.e. }t\in[\tau,T],
\qquad
P(\tau,\tau)=I_d.
\label{eq:appB-transport-ode}
\end{equation}

\begin{lemma}[Invertibility and inverse dynamics]
\label{lem:transport-invertibility}
\label{lem:appB-invertibility}
For every \(0\le \tau\le t\le T\), \(P(t,\tau)\in GL(d)\). Moreover,
\[
\partial_tP(t,\tau)^{-1}
=
-A_R(t)P(t,\tau)^{-1}
\quad\text{for a.e. }t\in[\tau,T],
\qquad
P(\tau,\tau)^{-1}=I_d.
\]
The determinant satisfies Liouville's formula
\[
\det P(t,\tau)
=
\exp\!\left(\int_\tau^t \mathrm{tr}\,A_R(s)\,ds\right)
\neq 0.
\]
\end{lemma}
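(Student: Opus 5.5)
The plan rests on a single observation: the only thing the lemma needs is a candidate inverse $Q(t,\tau)$ that solves the left-multiplied adjoint equation. From there, invertibility follows by showing $Q(t,\tau)P(t,\tau)\equiv I_d$, and the determinant formula follows separately from the Jacobi identity.

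\textbf{Construction of the inverse.} Fix $\tau$ and let $Q(\cdot,\tau)$ be the unique absolutely continuous solution of
\[
Q(t,\tau)=I_d-\int_\tau^t A_R(s)Q(s,\tau)\,ds .
\]
Existence and uniqueness come from the same Carathéodory linear-ODE theory used for \eqref{eq:appB-transport-integral}, since $A_R\in L^1$. Both $P$ and $Q$ are absolutely continuous, so their product is absolutely continuous and the product rule holds a.e.:
\[
\partial_t\bigl(P(t,\tau)Q(t,\tau)\bigr)=P A_R Q - P A_R Q=0 .
\]
Together with the initial value $P(\tau,\tau)Q(\tau,\tau)=I_d$, this gives $PQ\equiv I_d$ on $[\tau,T]$. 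A right inverse of a square matrix is a two-sided inverse, so $P(t,\tau)\in GL(d)$ and $P^{-1}=Q$. The stated inverse dynamics are then exactly the defining equation of $Q$.

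\textbf{Liouville's formula.} The determinant is a polynomial in the matrix entries, so $t\mapsto\det P(t,\tau)$ is absolutely continuous. Jacobi's formula, $d\det X=\operatorname{tr}(\operatorname{adj}(X)\,dX)$, combined with $\operatorname{adj}(P)=\det(P)\,P^{-1}$ (valid now that invertibility is known), gives a.e.
\[
\partial_t\det P=\det P\cdot\operatorname{tr}(P^{-1}PA_R)=\det P\cdot\operatorname{tr}A_R(t).
\]
This is a scalar linear ODE with integrable coefficient and initial value $1$. By uniqueness its solution is $\exp\!\bigl(\int_\tau^t\operatorname{tr}A_R\bigr)$, which is nonzero and so confirms invertibility independently.

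\textbf{Main obstacle.} The substantive step is regularity: justifying the chain and product rules when $A_R$ is only $L^1$. I would handle this by noting that products and compositions of absolutely continuous functions with smooth (polynomial) maps remain absolutely continuous, and that their a.e. derivatives are given by the classical formulas. The statements "derivative zero a.e. implies constant" and the scalar uniqueness argument are valid for absolutely continuous functions.
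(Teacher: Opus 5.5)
Your proposal is correct and follows the paper's proof essentially step for step. The paper also builds the candidate inverse from $\partial_t Y=-A_R Y$ with $Y(\tau,\tau)=I_d$, shows $\partial_t(PY)=0$ a.e. so that $PY\equiv I_d$, and concludes with the square-matrix right-inverse argument; the only difference is in Liouville's formula, where the paper integrates $\frac{d}{dt}\log\det P=\mathrm{tr}\,A_R$ rather than solving the scalar linear ODE for $\det P$, and your version has the small advantage of not needing to first observe that $\det P>0$ before taking the logarithm.
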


\begin{proof}
Define \(Y(\cdot,\tau)\) as the absolutely continuous solution of
\[
\partial_tY(t,\tau)
=
-A_R(t)Y(t,\tau)
\quad\text{for a.e. }t\in[\tau,T],
\qquad
Y(\tau,\tau)=I_d.
\]
Then, using \(\partial_tP=P A_R\),
\[
\partial_t\bigl(P(t,\tau)Y(t,\tau)\bigr)
=
P(t,\tau)A_R(t)Y(t,\tau)
-
P(t,\tau)A_R(t)Y(t,\tau)
=
0
\]
for a.e. \(t\). Since \(P(\tau,\tau)Y(\tau,\tau)=I_d\), we get
\[
P(t,\tau)Y(t,\tau)=I_d
\]
for all \(t\ge \tau\). Because the matrices are square, the existence of this
right inverse implies \(P(t,\tau)\in GL(d)\), and hence
\[
Y(t,\tau)=P(t,\tau)^{-1}.
\]
This proves the inverse dynamics.

For the determinant, since \(P(t,\tau)\) is absolutely continuous and invertible,
Jacobi's formula gives, for a.e. \(t\),
\[
\frac{d}{dt}\log\det P(t,\tau)
=
\mathrm{tr}\!\left(P(t,\tau)^{-1}\partial_tP(t,\tau)\right)
=
\mathrm{tr}\!\left(P(t,\tau)^{-1}P(t,\tau)A_R(t)\right)
=
\mathrm{tr}\,A_R(t).
\]
Integrating from \(\tau\) to \(t\), and using \(\det P(\tau,\tau)=1\), yields
\[
\det P(t,\tau)
=
\exp\!\left(\int_\tau^t \mathrm{tr}\,A_R(s)\,ds\right).
\]
The right-hand side is nonzero, completing the proof.
\end{proof}

\begin{lemma}[Composition rule]
\label{lem:appB-composition-rule}
For \(0\le \tau\le \sigma\le t\le T\),
\[
P(t,\tau)
=
P(\sigma,\tau)P(t,\sigma).
\]
\end{lemma}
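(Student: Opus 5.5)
To prove the composition rule, fix \(\tau\le\sigma\) and compare the two sides as functions of \(t\in[\sigma,T]\) by uniqueness for the linear integral equation \eqref{eq:appB-transport-integral}. Define
\[
Q(t):=P(\sigma,\tau)P(t,\sigma),\qquad t\in[\sigma,T].
\]
Since \(P(\sigma,\tau)\) is a constant matrix in \(t\) and \(P(\cdot,\sigma)\) is absolutely continuous with \(\partial_tP(t,\sigma)=P(t,\sigma)A_R(t)\) a.e., the function \(Q\) is absolutely continuous and satisfies
\[
\partial_tQ(t)=P(\sigma,\tau)P(t,\sigma)A_R(t)=Q(t)A_R(t)\quad\text{a.e. } t\in[\sigma,T],
\qquad
Q(\sigma)=P(\sigma,\tau).
\]
The same two properties hold for \(t\mapsto P(t,\tau)\) restricted to \([\sigma,T]\): it solves \(\partial_tP(t,\tau)=P(t,\tau)A_R(t)\) a.e., and its value at \(t=\sigma\) is \(P(\sigma,\tau)\).

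The remaining step, and the only one needing care given that \(A_R\) is merely integrable, is uniqueness of absolutely continuous solutions of \(Z(t)=Z_0+\int_\sigma^t Z(s)A_R(s)\,ds\). Let \(D(t):=Q(t)-P(t,\tau)\). Then \(D(\sigma)=0\) and
\[
\|D(t)\|\le\int_\sigma^t\|D(s)\|\,\|A_R(s)\|\,ds .
\]
Because \(\|A_R\|\in L^1\), Grönwall's inequality in its integral form gives \(D\equiv0\).

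As a cleaner alternative that uses only Lemma~\ref{lem:appB-invertibility}, differentiate \(D(t)P(t,\sigma)^{-1}\). Using \(\partial_t P(t,\sigma)^{-1}=-A_R(t)P(t,\sigma)^{-1}\),
\[
\partial_t\bigl(D(t)P(t,\sigma)^{-1}\bigr)=D(t)A_R(t)P(t,\sigma)^{-1}-D(t)A_R(t)P(t,\sigma)^{-1}=0
\]
almost everywhere. The product rule is valid here because both factors are absolutely continuous on a compact interval. Hence \(D(t)P(t,\sigma)^{-1}\) is constant, and it equals \(D(\sigma)=0\). Since \(P(t,\sigma)\) is invertible, \(D\equiv0\). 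In either form this gives \(P(t,\tau)=P(\sigma,\tau)P(t,\sigma)\) for all \(t\in[\sigma,T]\), which is the claim. I expect no real obstacle beyond this a.e./absolute-continuity bookkeeping in the uniqueness step.
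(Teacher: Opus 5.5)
Your proof is correct and matches the paper's argument: set \(Q(t)=P(\sigma,\tau)P(t,\sigma)\), check that it satisfies \(\partial_tQ=QA_R\) a.e.\ on \([\sigma,T]\) with \(Q(\sigma)=P(\sigma,\tau)\), just as \(P(\cdot,\tau)\) does, and conclude by uniqueness. The only difference is that the paper simply says ``by uniqueness,'' while you prove that step, either by Gr\"onwall with an integrable kernel or by showing \(D(t)P(t,\sigma)^{-1}\) is constant via Lemma~\ref{lem:appB-invertibility}; both are valid and neither is circular.
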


\begin{proof}
Fix \(0\le\tau\le\sigma\le T\), and define
\[
Q(t,\tau):=P(\sigma,\tau)P(t,\sigma),
\qquad t\in[\sigma,T].
\]
Then
\[
\partial_tQ(t,\tau)
=
P(\sigma,\tau)P(t,\sigma)A_R(t)
=
Q(t,\tau)A_R(t)
\]
for a.e. \(t\in[\sigma,T]\), and
\[
Q(\sigma,\tau)
=
P(\sigma,\tau)P(\sigma,\sigma)
=
P(\sigma,\tau).
\]
On the other hand, \(P(t,\tau)\) restricted to \(t\in[\sigma,T]\) satisfies the
same differential equation and the same initial value at \(t=\sigma\). By
uniqueness,
\[
P(t,\tau)=P(\sigma,\tau)P(t,\sigma).
\]
\end{proof}

\paragraph{Order convention.}
The composition rule reflects our right-action convention. If a state evolves as
\(H(t)=H(\tau)P(t,\tau)\), then the segment from \(\tau\) to \(t\) factors as
first applying \(P(\sigma,\tau)\), then applying \(P(t,\sigma)\):
\[
H(t)
=
H(\tau)P(\sigma,\tau)P(t,\sigma).
\]

\subsection{Exact reduction to ordinary HiPPO}
\label{app:exact-reduction}
\label{app:reductions-moving-frame:reduction}

\begin{proposition}[Exact reduction to ordinary HiPPO]
\label{prop:appB-exact-reduction}
The following are equivalent:
\begin{enumerate}
\item \(A_R(t)=0\) for a.e. \(t\in[0,T]\);
\item \(P(t,\tau)=I_d\) for all \(0\le \tau\le t\le T\).
\end{enumerate}
In this case,
\[
M_P(t,\tau)=P(t,\tau)P(t,\tau)^\top=I_d,
\]
the transported approximation family equals the ordinary HiPPO approximation
family, and SHiPPO reduces exactly to ordinary vector-valued HiPPO.
\end{proposition}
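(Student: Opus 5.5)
The plan is to prove the two implications separately, using the integral form \eqref{eq:appB-transport-integral} and uniqueness of its absolutely continuous solution, and then read off the consequences for the metric and the approximation family.

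For (1)\(\Rightarrow\)(2), assume \(A_R(s)=0\) for a.e.\ \(s\). Then for each fixed \(\tau\), the constant function \(\tilde P(t,\tau):=I_d\) is absolutely continuous and satisfies
\[
\tilde P(t,\tau)=I_d+\int_\tau^t \tilde P(s,\tau)A_R(s)\,ds=I_d,
\]
because the integrand vanishes almost everywhere. By uniqueness of the absolutely continuous solution of \eqref{eq:appB-transport-integral}, which is standard for linear Carathéodory equations with integrable coefficients and was already used in Lemma~\ref{lem:appB-invertibility}, we get \(P(t,\tau)=I_d\) for all \(0\le\tau\le t\le T\).

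For (2)\(\Rightarrow\)(1), take \(\tau=0\). Then \(P(t,0)=I_d\) for all \(t\), so \eqref{eq:appB-transport-integral} gives \(\int_0^t A_R(s)\,ds=0\) for every \(t\in[0,T]\). The indefinite integral of an \(L^1\) function is therefore identically zero. By the Lebesgue differentiation theorem, applied entrywise, its a.e.\ derivative \(A_R(t)\) vanishes for a.e.\ \(t\). Equivalently, the a.e.\ identity \eqref{eq:appB-transport-ode} gives \(0=\partial_tP(t,0)=A_R(t)\) a.e.

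For the consequences, substitute \(P\equiv I_d\). This gives \(M_P=I_dI_d^\top=I_d\) and \(P^{-\top}C^\top\Phi_t=C^\top\Phi_t\), so \(\mathcal G_t^{\mathrm{SH}}=\mathcal G_t^{\mathrm H}\), and the objective of Definition~\ref{def:shippo-projection} coincides with \(J_t^{\mathrm H}\). Hence \(\operatorname{shippo}_t(f)=\operatorname{hippo}_t(f)\); this also follows from Proposition~\ref{prop:app-transport-conjugacy}, since \(\mathcal T_t\) is the identity. The normal equation of Theorem~\ref{thm:shippo-dynamics} then reduces to that of Proposition~\ref{prop:app-hippo-normal-eq}, and under closure the Sylvester term \(C_SA_R\) vanishes a.e.

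The only step needing care is the a.e.\ bookkeeping in (2)\(\Rightarrow\)(1), where one passes from a vanishing integral to a vanishing integrand. It is settled by Lebesgue differentiation, so I expect no real obstacle.
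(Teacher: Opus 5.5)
Your proposal is correct and follows the paper's proof essentially step for step. It uses the integral equation \eqref{eq:appB-transport-integral} for \((1)\Rightarrow(2)\), specializes to \(\tau=0\) and passes to the a.e.\ derivative for \((2)\Rightarrow(1)\), and substitutes \(P\equiv I_d\) for the metric, the family, and the reduction. The only cosmetic difference is that the paper needs no uniqueness appeal in \((1)\Rightarrow(2)\): because \(P(s,\tau)A_R(s)=0\) for a.e.\ \(s\) whatever \(P\) is, the integral equation satisfied by the actual solution already gives \(P(t,\tau)=I_d\) directly.
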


\begin{proof}
If \(A_R(t)=0\) for a.e. \(t\), then the integral equation
\eqref{eq:appB-transport-integral} gives
\[
P(t,\tau)
=
I_d+\int_\tau^t P(s,\tau)A_R(s)\,ds
=
I_d
\]
for all \(t\ge\tau\).

Conversely, suppose \(P(t,\tau)=I_d\) for all \(0\le\tau\le t\le T\). Taking
\(\tau=0\), we have \(P(t,0)=I_d\) for all \(t\in[0,T]\). Since
\(\partial_tP(t,0)=P(t,0)A_R(t)\) for a.e. \(t\), we obtain
\[
0=\partial_tP(t,0)=A_R(t)
\]
for a.e. \(t\in[0,T]\).

If \(P(t,\tau)=I_d\), then \(M_P(t,\tau)=I_d\) and
\[
P(t,\tau)^{-\top}C^\top\Phi_t(\tau)
=
C^\top\Phi_t(\tau).
\]
Thus the SHiPPO metric and approximation family are exactly the ordinary
Euclidean-channel HiPPO metric and approximation family.
\end{proof}

\subsection{Identity metric versus identity transport}
\label{app:identity-metric}
\label{app:reductions-moving-frame:identity-metric}

Recall the transport-induced metric
\[
M_P(t,\tau):=P(t,\tau)P(t,\tau)^\top.
\]
The condition \(M_P(t,\tau)=I_d\) does not imply \(P(t,\tau)=I_d\). It only
constrains the symmetric part of the right-transport generator.

\begin{proposition}[Identity metric iff skew transport generator]
\label{prop:appB-metric-identity}
The following are equivalent:
\begin{enumerate}
\item \(M_P(t,\tau)=I_d\) for all \(0\le\tau\le t\le T\);
\item \(A_R(t)+A_R(t)^\top=0\) for a.e. \(t\in[0,T]\).
\end{enumerate}
Consequently, \(M_P\equiv I_d\) does not imply \(P\equiv I_d\) unless the
skew transport is also trivial.
\end{proposition}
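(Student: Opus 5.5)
The plan is to differentiate \(M_P(t,\tau)=P(t,\tau)P(t,\tau)^\top\) in \(t\) and read off the symmetric part of the generator. Since \(P(\cdot,\tau)\) is absolutely continuous, so is \(M_P(\cdot,\tau)\). Using \(\partial_tP=PA_R\) from \eqref{eq:appB-transport-ode}, we get, for a.e. \(t\ge\tau\),
\[
\partial_tM_P(t,\tau)=P(t,\tau)\bigl(A_R(t)+A_R(t)^\top\bigr)P(t,\tau)^\top .
\]
Both directions of the equivalence will come from this single identity.

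For (2)\(\Rightarrow\)(1): if \(A_R+A_R^\top=0\) a.e., the derivative vanishes a.e. in \(t\). Absolute continuity gives \(M_P(t,\tau)=M_P(\tau,\tau)=I_d\), written as \(M_P(t,\tau)=I_d+\int_\tau^t \partial_sM_P(s,\tau)\,ds\), so that only a.e. information on \(A_R\) is used.

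For (1)\(\Rightarrow\)(2): take \(\tau=0\). Then \(M_P(t,0)\equiv I_d\), so its a.e. derivative is zero, giving \(P(t,0)\bigl(A_R(t)+A_R(t)^\top\bigr)P(t,0)^\top=0\) for a.e. \(t\). By Lemma~\ref{lem:appB-invertibility}, \(P(t,0)\in GL(d)\), so we may multiply on the left by \(P(t,0)^{-1}\) and on the right by \(P(t,0)^{-\top}\) to conclude \(A_R(t)+A_R(t)^\top=0\) a.e. on \([0,T]\). This mirrors the converse step in Proposition~\ref{prop:appB-exact-reduction}.

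For the consequence, I would exhibit a constant nonzero skew generator, e.g. \(A_R\equiv\Omega\) with \(\Omega^\top=-\Omega\neq0\) (possible whenever \(d\ge2\)). Then \(P(t,\tau)=\exp((t-\tau)\Omega)\) is orthogonal, so \(M_P\equiv I_d\), yet \(P\not\equiv I_d\) by Proposition~\ref{prop:appB-exact-reduction}, since \(A_R\neq0\) on a set of positive measure. Thus \(M_P\equiv I_d\) forces \(P\equiv I_d\) only when the skew generator itself vanishes a.e.

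The only delicate point I expect is regularity. The product rule for the product of two absolutely continuous matrix functions holds a.e. and the product is again absolutely continuous, which justifies both the derivative formula and the integral representation. Apart from that, the argument is routine.
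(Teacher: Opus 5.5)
Your proposal is correct and matches the paper's proof essentially step for step: both directions come from the single identity $\partial_t M_P(t,\tau)=P(t,\tau)\bigl(A_R(t)+A_R(t)^\top\bigr)P(t,\tau)^\top$, with $M_P(\tau,\tau)=I_d$ and absolute continuity giving (2)$\Rightarrow$(1), and $\tau=0$ together with the invertibility of $P(t,0)$ from Lemma~\ref{lem:appB-invertibility} giving (1)$\Rightarrow$(2). Your constant nonzero skew generator for the ``consequently'' clause is the same construction as the paper's rotation in Example~\ref{ex:appB-rotation}.
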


\begin{proof}
Fix \(\tau\). Since \(P(\cdot,\tau)\) is absolutely continuous,
\(M_P(\cdot,\tau)\) is absolutely continuous, and for a.e. \(t\ge\tau\),
\begin{align*}
\partial_tM_P(t,\tau)
&=
(\partial_tP(t,\tau))P(t,\tau)^\top
+
P(t,\tau)(\partial_tP(t,\tau))^\top
\\
&=
P(t,\tau)A_R(t)P(t,\tau)^\top
+
P(t,\tau)A_R(t)^\top P(t,\tau)^\top
\\
&=
P(t,\tau)\bigl(A_R(t)+A_R(t)^\top\bigr)P(t,\tau)^\top.
\end{align*}

If \(A_R(t)+A_R(t)^\top=0\) for a.e. \(t\), then
\[
\partial_tM_P(t,\tau)=0
\]
for a.e. \(t\ge\tau\). Since \(M_P(\tau,\tau)=I_d\), it follows that
\[
M_P(t,\tau)=I_d
\]
for all \(t\ge\tau\).

Conversely, suppose \(M_P(t,\tau)=I_d\) for all \(0\le\tau\le t\le T\). Taking
\(\tau=0\), we have \(M_P(t,0)=I_d\) for all \(t\). Hence
\[
0
=
\partial_tM_P(t,0)
=
P(t,0)\bigl(A_R(t)+A_R(t)^\top\bigr)P(t,0)^\top
\]
for a.e. \(t\). By Lemma~\ref{lem:transport-invertibility}, \(P(t,0)\) is
invertible, so
\[
A_R(t)+A_R(t)^\top=0
\]
for a.e. \(t\).
\end{proof}

\paragraph{Implication for the projection problem.}
Even when \(M_P\equiv I_d\), SHiPPO need not reduce to ordinary HiPPO because
the approximation family is still transported:
\[
\mathcal G_t^{\mathrm{SH}}
=
\{\,\tau\mapsto P(t,\tau)^{-\top}C^\top\Phi_t(\tau):
C\in\mathbb R^{N\times d}\,\}.
\]
Equivalently, by Proposition~\ref{prop:app-transport-conjugacy},
\[
\operatorname{shippo}_t(f)
=
\operatorname{hippo}_t(\mathcal T_tf),
\qquad
(\mathcal T_tf)(\tau)=P(t,\tau)^\top f(\tau).
\]
Thus, under \(M_P\equiv I_d\), SHiPPO is ordinary HiPPO applied to a moving-frame
history, not ordinary HiPPO applied to the original history.

\begin{example}[Orthogonal transport with nontrivial moving frame]
\label{ex:appB-rotation}
Let \(d=2\) and set
\[
A_R(t)
=
\omega
\begin{pmatrix}
0 & -1\\
1 & 0
\end{pmatrix},
\qquad
\omega\neq 0.
\]
Then
\[
A_R(t)^\top=-A_R(t),
\]
so Proposition~\ref{prop:appB-metric-identity} gives
\[
M_P(t,\tau)=I_2.
\]
However,
\[
P(t,\tau)
=
\exp\!\bigl((t-\tau)A_R\bigr)
=
\begin{pmatrix}
\cos(\omega(t-\tau)) & -\sin(\omega(t-\tau))\\
\sin(\omega(t-\tau)) & \cos(\omega(t-\tau))
\end{pmatrix},
\]
which is not equal to \(I_2\) for generic \(t\neq\tau\). Therefore SHiPPO does
not reduce to ordinary HiPPO. The right-action term
\[
C_S(t)A_R(t)
\]
in Theorem~\ref{thm:shippo-dynamics} remains present in the coefficient
 dynamics and is generically nonzero.
\end{example}

\subsection{Moving-frame factorization for a chosen transport path}
\label{app:moving-frame}
\label{app:reductions-moving-frame:moving-frame}

This subsection records the exact moving-frame factorization associated with a chosen admissible transport path \(A_R\).
When the transport is controller-generated, the same factorization applies pathwise to the realized trajectory.
Because the frame then depends on the input trajectory and model parameters, the factorization should not be interpreted as an input-independent reduction of SHiPPO to ordinary HiPPO.

\begin{definition}[Fundamental matrix and moving channel frame]
\label{def:appB-global-frame}
Let \(V:[0,T]\to GL(d)\) be the fundamental matrix solving
\[
\dot V(t)=V(t)A_R(t)
\quad\text{for a.e. }t\in[0,T],
\qquad
V(0)=I_d.
\]
Define the history encoder and coefficient decoder by
\[
(\mathcal E_V f)(\tau):=V(\tau)^{-T}f(\tau),
\qquad
\mathcal D_{V,t}(C):=C\,V(t).
\]
\end{definition}

\begin{lemma}[Frame representation of the transport]
\label{lem:appB-frame-repr}
For all \(0\le\tau\le t\le T\),
\[
P(t,\tau)=V(\tau)^{-1}V(t),
\qquad
P(t,\tau)^\top=V(t)^\top V(\tau)^{-T}.
\]
\end{lemma}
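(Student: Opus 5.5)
The plan is to exploit uniqueness for the linear transport equation \eqref{eq:appB-transport-ode}, much as in the proof of Lemma~\ref{lem:appB-composition-rule}. Fix \(\tau\in[0,T]\) and define the candidate
\[
Q(t,\tau):=V(\tau)^{-1}V(t),\qquad t\in[\tau,T].
\]
Here \(V(\tau)^{-1}\) exists because \(V(\tau)=P(\tau,0)\). To see this, note that both \(V\) and \(P(\cdot,0)\) solve \(\dot X=XA_R\) a.e. with \(X(0)=I_d\), so they coincide by uniqueness of absolutely continuous solutions of the integral equation \eqref{eq:appB-transport-integral}. Lemma~\ref{lem:transport-invertibility} then gives \(V(\tau)\in GL(d)\).

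Next we check that \(Q\) satisfies the defining problem for \(P\). The factor \(V(\tau)^{-1}\) is constant in \(t\), so
\[
\partial_tQ(t,\tau)=V(\tau)^{-1}\dot V(t)=V(\tau)^{-1}V(t)A_R(t)=Q(t,\tau)A_R(t)
\]
for a.e. \(t\ge\tau\). Also \(Q(\tau,\tau)=I_d\), and \(Q(\cdot,\tau)\) is absolutely continuous, being a constant matrix times an absolutely continuous one. Hence \(Q(\cdot,\tau)\) solves \eqref{eq:appB-transport-integral}. By uniqueness (the standard Grönwall argument for integrable coefficients, already used in Lemma~\ref{lem:appB-composition-rule}), \(Q(t,\tau)=P(t,\tau)\).

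As an alternative route, one could apply Lemma~\ref{lem:appB-composition-rule} with \((0,\tau,t)\) in place of \((\tau,\sigma,t)\). This gives \(P(t,0)=P(\tau,0)P(t,\tau)\), that is, \(V(t)=V(\tau)P(t,\tau)\). Left-multiplying by \(V(\tau)^{-1}\) yields the claim directly.

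The transposed identity is immediate from \((XY)^\top=Y^\top X^\top\) and \((V^{-1})^\top=V^{-\top}\):
\[
P(t,\tau)^\top=V(t)^\top V(\tau)^{-\top}.
\]
The only point needing care is the identification \(V=P(\cdot,0)\), which is what licenses invertibility of \(V(\tau)\). It rests on the same uniqueness statement as the rest of the proof, so I expect no real obstacle.
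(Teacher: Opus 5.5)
Your proposal is correct and follows the paper's proof: define \(Q(t,\tau)=V(\tau)^{-1}V(t)\), check that \(\partial_tQ=QA_R\) a.e. and \(Q(\tau,\tau)=I_d\), conclude \(Q=P\) by uniqueness, and then transpose. The paper does not justify invertibility of \(V(\tau)\) because the definition already requires \(V\) to take values in \(GL(d)\); your identification \(V=P(\cdot,0)\), and the alternative route through the composition rule, are valid extras.
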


\begin{proof}
Fix \(\tau\), and define
\[
Q(t,\tau):=V(\tau)^{-1}V(t),
\qquad t\in[\tau,T].
\]
Then
\[
\partial_tQ(t,\tau)
=
V(\tau)^{-1}\dot V(t)
=
V(\tau)^{-1}V(t)A_R(t)
=
Q(t,\tau)A_R(t)
\]
for a.e. \(t\ge\tau\), and
\[
Q(\tau,\tau)=I_d.
\]
By uniqueness of solutions to \eqref{eq:appB-transport-ode},
\[
Q(t,\tau)=P(t,\tau).
\]
Taking transposes gives
\[
P(t,\tau)^\top=V(t)^\top V(\tau)^{-T}.
\]
\end{proof}

\begin{lemma}[HiPPO equivariance to fixed channel mixing]
\label{lem:appB-hippo-equivariance}
Fix \(Q\in GL(d)\) independent of \(\tau\), and define
\[
(\mathcal M_{Q,t}u)(\tau):=Q^\top u(\tau).
\]
Then
\[
\operatorname{hippo}_t(\mathcal M_{Q,t}u)
=
\operatorname{hippo}_t(u)\,Q.
\]
\end{lemma}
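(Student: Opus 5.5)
The plan is to work directly from the ordinary HiPPO normal equation of Proposition~\ref{prop:app-hippo-normal-eq}, taking \(\operatorname{hippo}_t(u)\) to mean its unique solution when \(G(t)\) is invertible. By that proposition,
\[
G(t)\,\operatorname{hippo}_t(u)=\int_0^t \Phi_t(\tau)u(\tau)^\top\,d\mu_t(\tau).
\]
I will substitute the mixed signal \((\mathcal M_{Q,t}u)(\tau)=Q^\top u(\tau)\) into the right-hand side. Its transpose is \(u(\tau)^\top Q\), and since \(Q\) does not depend on \(\tau\), it can be pulled out of the integral on the right. This gives
\[
\int_0^t \Phi_t(\tau)u(\tau)^\top Q\,d\mu_t(\tau)=G(t)\,\operatorname{hippo}_t(u)\,Q .
\]

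So the candidate \(C:=\operatorname{hippo}_t(u)Q\) satisfies the normal equation for the mixed signal. Because \(G(t)\) is invertible, that normal equation has a unique solution. Hence \(\operatorname{hippo}_t(\mathcal M_{Q,t}u)=\operatorname{hippo}_t(u)Q\).

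As a cross-check, I could argue at the level of approximation families instead. Writing \(C^\top\Phi_t\) as \(Q^\top\) applied to \((CQ^{-1})^\top\Phi_t\) shows that \(\mathcal M_{Q,t}\) maps \(\mathcal G_t^{\mathrm H}\) bijectively onto itself by \(C\mapsto CQ\). This route, however, is not an isometry for the Euclidean-channel metric unless \(Q\) is orthogonal. That makes the normal-equation route the correct one: it needs only that \(Q\) is a constant matrix and that \(G(t)\) is invertible.

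There is no real obstacle here. The only points to handle carefully are the transpose bookkeeping, namely that \((Q^\top u)^\top=u^\top Q\) puts \(Q\) on the right, and the remark that the identity concerns the minimizer, not an isometric conjugacy.
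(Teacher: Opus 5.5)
Your proposal is correct and is essentially the paper's proof: substitute \(Q^\top u\) into the ordinary HiPPO normal equation, pull the constant \(Q\) out of the integral to obtain \(G(t)\operatorname{hippo}_t(u)Q\), and conclude by invertibility of \(G(t)\). Your side remark is also right that \(\mathcal M_{Q,t}\) is not an isometry for non-orthogonal \(Q\); the family-level route could still be completed via the \(\tau\)-independent metric reduction (Proposition~\ref{prop:app-tau-indep-reduction}), since the pulled-back metric \(QQ^\top\) is constant in \(\tau\).
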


\begin{proof}
Let \(C_u(t)=\operatorname{hippo}_t(u)\). By the ordinary HiPPO normal equation,
\[
G(t)C_u(t)
=
\int_0^t
\Phi_t(\tau)u(\tau)^\top\,d\mu_t(\tau).
\]
For \(\mathcal M_{Q,t}u\),
\begin{align*}
G(t)C_{\mathcal M_Q u}(t)
&=
\int_0^t
\Phi_t(\tau)(\mathcal M_{Q,t}u)(\tau)^\top\,d\mu_t(\tau)
\\
&=
\int_0^t
\Phi_t(\tau)u(\tau)^\top Q\,d\mu_t(\tau)
\\
&=
G(t)C_u(t)Q.
\end{align*}
Since \(G(t)\) is invertible,
\[
C_{\mathcal M_Q u}(t)=C_u(t)Q.
\]
\end{proof}

\begin{proposition}[Moving-frame factorization of SHiPPO]
\label{prop:appB-moving-frame-factorization}
For a chosen transport path \(A_R\) and its fundamental matrix \(V\), SHiPPO
admits the factorization
\[
\operatorname{shippo}_t
=
\mathcal D_{V,t}\circ\operatorname{hippo}_t\circ\mathcal E_V.
\]
Equivalently, for every signal \(f\),
\[
\operatorname{shippo}_t(f)
=
\operatorname{hippo}_t(\mathcal E_V f)\,V(t).
\]
\end{proposition}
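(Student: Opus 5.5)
The plan is to chain three earlier results: the SHiPPO--HiPPO conjugacy of Proposition~\ref{prop:app-transport-conjugacy}, the frame representation of Lemma~\ref{lem:appB-frame-repr}, and the equivariance of Lemma~\ref{lem:appB-hippo-equivariance}. Fix \(t\) and a signal \(f\).

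\textbf{Step 1 (conjugacy).} By Proposition~\ref{prop:app-transport-conjugacy},
\[
\operatorname{shippo}_t(f)=\operatorname{hippo}_t(\mathcal T_tf),
\qquad
(\mathcal T_tf)(\tau)=P(t,\tau)^\top f(\tau).
\]

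\textbf{Step 2 (frame representation).} Lemma~\ref{lem:appB-frame-repr} gives \(P(t,\tau)^\top=V(t)^\top V(\tau)^{-\top}\). Hence
\[
(\mathcal T_tf)(\tau)=V(t)^\top\bigl(V(\tau)^{-\top}f(\tau)\bigr)=V(t)^\top(\mathcal E_Vf)(\tau).
\]
The key structural point is that \(V(t)^\top\) does not depend on the integration variable \(\tau\). So \(\mathcal T_tf=\mathcal M_{Q,t}(\mathcal E_Vf)\), where \(Q:=V(t)\) is fixed for the time \(t\) under consideration.

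\textbf{Step 3 (equivariance).} Lemma~\ref{lem:appB-hippo-equivariance} only needs \(Q\) to be constant in \(\tau\), and \(Q=V(t)\) lies in \(GL(d)\) because \(V(t)=P(t,0)\), which is invertible by Lemma~\ref{lem:transport-invertibility}. Applying it with this \(Q\) yields
\[
\operatorname{hippo}_t(\mathcal T_tf)=\operatorname{hippo}_t(\mathcal E_Vf)\,V(t)=\mathcal D_{V,t}\bigl(\operatorname{hippo}_t(\mathcal E_Vf)\bigr).
\]
Combining this with Step 1 gives the stated factorization \(\operatorname{shippo}_t=\mathcal D_{V,t}\circ\operatorname{hippo}_t\circ\mathcal E_V\).

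\textbf{Well-posedness of \(\operatorname{hippo}_t(\mathcal E_Vf)\).} The only step needing care is this one. Since \(V\) is continuous and invertible on \([0,T]\), \(V(\tau)^{-\top}\) is bounded. Therefore \(\mathcal E_Vf\) lies in the same integrability class as \(f\) with respect to \(\mu_t\), and the normal equation applies with \(G(t)\) invertible.

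I expect no real obstacle. The main thing to check is that the transpose and ordering conventions match, namely \(V(\tau)^{-\top}\) on the encoder side and right multiplication by \(V(t)\) on the decoder side.
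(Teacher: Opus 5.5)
Your proposal is correct and follows the paper's proof essentially step for step. You rewrite \(\mathcal T_t=\mathcal M_{V(t),t}\circ\mathcal E_V\) via Lemma~\ref{lem:appB-frame-repr}, then combine the conjugacy of Proposition~\ref{prop:app-transport-conjugacy} with Lemma~\ref{lem:appB-hippo-equivariance} at \(Q=V(t)\); your extra remarks on \(V(t)\in GL(d)\) and on the boundedness of \(V(\tau)^{-\top}\) are valid details that the paper leaves implicit.
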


\begin{proof}
By Lemma~\ref{lem:appB-frame-repr},
\[
(\mathcal T_tf)(\tau)
=
P(t,\tau)^\top f(\tau)
=
V(t)^\top V(\tau)^{-T}f(\tau)
=
V(t)^\top(\mathcal E_V f)(\tau).
\]
Thus
\[
\mathcal T_t
=
\mathcal M_{V(t),t}\circ\mathcal E_V.
\]
Using the SHiPPO--HiPPO conjugacy
\[
\operatorname{shippo}_t
=
\operatorname{hippo}_t\circ\mathcal T_t
\]
from Proposition~\ref{prop:app-transport-conjugacy}, and applying
Lemma~\ref{lem:appB-hippo-equivariance} with \(Q=V(t)\), we obtain
\begin{align*}
\operatorname{shippo}_t(f)
&=
\operatorname{hippo}_t
\bigl(
\mathcal M_{V(t),t}(\mathcal E_V f)
\bigr)
\\
&=
\operatorname{hippo}_t(\mathcal E_V f)\,V(t)
\\
&=
\mathcal D_{V,t}
\bigl(
\operatorname{hippo}_t(\mathcal E_V f)
\bigr).
\end{align*}
\end{proof}

\paragraph{Interpretation.}
Proposition~\ref{prop:appB-moving-frame-factorization} shows that, for a chosen transport path, SHiPPO can be viewed as ordinary HiPPO in the moving channel frame \(V(\tau)^{-T}\), followed by decoding with the same frame \(V(t)\).
The same frame determines both the history encoder
\[
(\mathcal E_V f)(\tau)=V(\tau)^{-T}f(\tau)
\]
and the coefficient decoder
\[
\mathcal D_{V,t}(C)=CV(t).
\]
Thus this is a tied, time-varying encoder--decoder factorization, not an equivalence to arbitrary independent channel projections.
When the transport is controller-generated, the same factorization remains valid pathwise; because the frame then depends on the realized input trajectory and model parameters, it does not yield an input-independent encoder--HiPPO--decoder decomposition.

\subsection{Gauge equivalence of coefficient dynamics}
\label{app:gauge-dynamics}
\label{app:gauge-equivalence}

The moving-frame factorization also appears directly at the level of the
coefficient ODEs.

\begin{proposition}[Gauge equivalence of coefficient dynamics]
\label{prop:appB-gauge-dynamics}
Assume \(G(t)=I_N\) and the HiPPO closure condition holds. For a chosen
transport path \(A_R\), let \(V\) be the fundamental matrix from
Definition~\ref{def:appB-global-frame}, and define
\[
\widetilde f(t):=V(t)^{-T}f(t).
\]
Let \(\widetilde C(t)\) be the ordinary HiPPO coefficient trajectory of
\(\widetilde f\):
\[
\dot{\widetilde C}(t)
=
A_L(t)\widetilde C(t)+B_L(t)\widetilde f(t)^\top.
\]
Define
\[
C_S(t):=\widetilde C(t)V(t).
\]
Then \(C_S\) satisfies the SHiPPO Sylvester dynamics
\[
\dot C_S(t)
=
A_L(t)C_S(t)+B_L(t)f(t)^\top+C_S(t)A_R(t).
\]
Conversely, if \(C_S\) satisfies the Sylvester dynamics above, then
\[
\widetilde C(t):=C_S(t)V(t)^{-1}
\]
satisfies the ordinary HiPPO dynamics driven by
\(\widetilde f(t)=V(t)^{-T}f(t)\).
\end{proposition}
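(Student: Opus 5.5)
The argument is a direct product-rule computation in both directions, using only the fundamental-matrix equation \(\dot V=VA_R\) from Definition~\ref{def:appB-global-frame} and the inverse dynamics of Lemma~\ref{lem:transport-invertibility}. Since \(V\) has the same structure as \(P(\cdot,0)\), that lemma gives \(\frac{d}{dt}V(t)^{-1}=-A_R(t)V(t)^{-1}\) almost everywhere. All identities are read at differentiability times (a.e.), because \(A_R\) is only integrable and \(V\) is absolutely continuous.

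\emph{Forward direction.} Set \(C_S=\widetilde C V\) and differentiate with the product rule:
\[
\dot C_S=\dot{\widetilde C}\,V+\widetilde C\,\dot V
=\bigl(A_L\widetilde C+B_L\widetilde f^\top\bigr)V+\widetilde C VA_R .
\]
Three substitutions then finish this direction. The first term is \(A_L\widetilde CV=A_LC_S\). The source term is \(B_L\widetilde f^\top V=B_Lf^\top V^{-1}V=B_Lf^\top\), because \(\widetilde f^\top=(V^{-T}f)^\top=f^\top V^{-1}\). The last term is \(\widetilde CVA_R=C_SA_R\). Together these give the Sylvester equation.

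\emph{Converse.} Set \(\widetilde C=C_SV^{-1}\) and differentiate using the inverse dynamics:
\[
\dot{\widetilde C}=\dot C_SV^{-1}-C_SA_RV^{-1}
=\bigl(A_LC_S+B_Lf^\top+C_SA_R\bigr)V^{-1}-C_SA_RV^{-1}.
\]
The two \(C_SA_R V^{-1}\) terms cancel, leaving \(A_L\widetilde C+B_L f^\top V^{-1}=A_L\widetilde C+B_L\widetilde f^\top\). These are the ordinary HiPPO dynamics driven by \(\widetilde f\).

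The only delicate point is regularity, not algebra. The product rule must be valid for absolutely continuous factors, and the ODEs must be interpreted a.e. (or in integral form). I would handle this by noting that products of absolutely continuous matrix functions are absolutely continuous and obey the product rule a.e. As an optional consistency check, at \(t=0\) we have \(V(0)=I_d\), so \(C_S(0)=\widetilde C(0)\). Combined with uniqueness for the linear Sylvester ODE, this matches the integral representation in Theorem~\ref{thm:shippo-dynamics} and Proposition~\ref{prop:appB-moving-frame-factorization}.
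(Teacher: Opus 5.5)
Your proposal is correct and matches the paper's proof essentially line for line. The forward direction is the product rule on \(C_S=\widetilde C V\), using \(\dot V=VA_R\) and \(\widetilde f^\top V=f^\top\); the converse uses the inverse dynamics \(\partial_t V^{-1}=-A_RV^{-1}\) and the cancellation of the two \(C_SA_RV^{-1}\) terms, and your extra remarks (identifying \(V=P(\cdot,0)\) to justify those inverse dynamics, and reading the identities a.e.) are sound refinements the paper leaves implicit.
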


\begin{proof}
Differentiate \(C_S=\widetilde C V\):
\[
\dot C_S
=
\dot{\widetilde C}V+\widetilde C\dot V.
\]
Substituting
\[
\dot{\widetilde C}
=
A_L\widetilde C+B_L\widetilde f^\top,
\qquad
\dot V=VA_R,
\]
gives
\begin{align*}
\dot C_S
&=
(A_L\widetilde C+B_L\widetilde f^\top)V
+
\widetilde C V A_R
\\
&=
A_LC_S+B_L(\widetilde f^\top V)+C_SA_R.
\end{align*}
Since
\[
\widetilde f^\top V
=
(V^{-T}f)^\top V
=
f^\top V^{-1}V
=
f^\top,
\]
we obtain
\[
\dot C_S
=
A_LC_S+B_Lf^\top+C_SA_R.
\]

For the converse, define \(\widetilde C=C_SV^{-1}\). Since
\[
\partial_tV^{-1}=-A_RV^{-1},
\]
we have
\begin{align*}
\dot{\widetilde C}
&=
\dot C_SV^{-1}+C_S\partial_tV^{-1}
\\
&=
(A_LC_S+B_Lf^\top+C_SA_R)V^{-1}
-
C_SA_RV^{-1}
\\
&=
A_L\widetilde C+B_L f^\top V^{-1}.
\end{align*}
Finally,
\[
f^\top V^{-1}
=
(V^{-T}f)^\top
=
\widetilde f^\top,
\]
so
\[
\dot{\widetilde C}
=
A_L\widetilde C+B_L\widetilde f^\top.
\]
\end{proof}

\paragraph{What the equivalence does and does not say.}
Propositions~\ref{prop:appB-moving-frame-factorization}
and~\ref{prop:appB-gauge-dynamics} are exact for a chosen transport path, and pathwise for controller-generated transport.
They show that the Sylvester right-action term is precisely the gauge term induced by decoding ordinary HiPPO coefficients from a moving channel frame.

They do not reduce SHiPPO to a single static channel mixer. If \(V(t)\equiv V_0\) is constant, then
\[
0=\dot V(t)=V_0A_R(t),
\]
and since \(V_0\in GL(d)\), this implies \(A_R(t)=0\) almost everywhere. Thus any nontrivial transport requires a genuinely time-varying frame.

Nor do they justify arbitrary independent encoder and decoder maps. The factorization relies on the tied moving-frame relation
\[
(\mathcal E_V f)(\tau)=V(\tau)^{-T}f(\tau),
\qquad
\mathcal D_{V,t}(C)=CV(t),
\]
and this tied structure is exactly what produces the Sylvester term \(C_SA_R\).
In the input-dependent selective setting, the same calculation remains valid pathwise, but the frame itself depends on the realized input trajectory, so there is generally no fixed encoder--HiPPO--decoder decomposition independent of the input.


\section{Proofs for the Scan-Compatible SHiPPO Lift}
\label{app:scan-friendly-shippo}

This appendix supports Section~\ref{sec:selective-transport-cell}. It proves the
obstruction for the direct channelwise lift, the block-affine scan closure of
the restricted realization, the sufficient collapse criterion under fixed
simultaneous block reduction, and the split-flow and discretization claims used
to construct the scan-compatible SHiPPO cell. It does not re-derive the abstract
SHiPPO online approximation operator of Section~\ref{sec:shippo}; that role is
played by Appendices~A and~B.

Unless otherwise stated, we work with a single transport group and omit the
group index. The group-local memory state is
\[
H_t\in\mathbb R^{N\times P},
\]
where \(N\) is the left state size and \(P\) is the group-local channel width.
The left factor is
\[
L_t=\operatorname{Diag}(\alpha_t)\in\mathbb R^{N\times N},
\]
the group-local right action is
\[
R_t\in GL(P),
\]
and the additive source is
\[
U_t\in\mathbb R^{N\times P}.
\]
All statements in this appendix concern the restricted scan-compatible
realization of Section~\ref{sec:selective-transport-cell}. The group-tied
diagonal-left restriction, controller-compatible right transport, and
group-local right actions are computational restrictions of this realization,
not part of the abstract SHiPPO definition.

\subsection{Why the direct channelwise lift does not preserve a small block-affine scan algebra}
\label{app:direct-lift-obstruction}

This subsection justifies the statement in Section~\ref{sec:selective-transport-cell}
that a fully channelwise lift of a diagonal selective recurrence does not
generally close in the small block-affine scan algebra used by the restricted
realization. The direct groupwise lift would be
\begin{equation}
H^{\mathrm{full}}_t
=
(\bar A_t\odot H^{\mathrm{full}}_{t-1})R_t+U_t,
\qquad
\bar A_t\in\mathbb R^{N\times P}.
\label{eq:appC-direct-lift}
\end{equation}
It keeps channel-specific left decays, but nontrivial right transport generally
destroys the shared factorized right summary used in Proposition~\ref{prop:affine-closure}.

\begin{proposition}[Failure of closure for the direct channelwise factorization]
\label{prop:appC-direct-obstruction}
For the direct lift \eqref{eq:appC-direct-lift}, write \(\bar a_{t,n}\in\mathbb R^P\)
for the \(n\)-th row of \(\bar A_t\) and set
\[
D_{t,n}:=\operatorname{Diag}(\bar a_{t,n}).
\]
Then the \(n\)-th row satisfies
\[
h^{\mathrm{full}}_{t,n}=h^{\mathrm{full}}_{t-1,n}D_{t,n}R_t+u_{t,n}.
\]
After two steps,
\[
h^{\mathrm{full}}_{2,n}
=
h^{\mathrm{full}}_{0,n}D_{1,n}R_1D_{2,n}R_2+u_{1,n}D_{2,n}R_2+u_{2,n}.
\]
In general there need not exist a \emph{shared} right factor
\(R_\star\in GL(P)\) and rowwise diagonals \(\widetilde D_n\) such that
\[
D_{1,n}R_1D_{2,n}R_2=\widetilde D_nR_\star
\qquad\text{for all }n.
\]
Hence the direct lift does not, in general, preserve the small block-affine scan
algebra used by the restricted realization.
\end{proposition}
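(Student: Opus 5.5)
The first two displays follow by direct computation, and the non-existence claim by an explicit small counterexample.

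\textbf{Row recurrence.} Since \(\bar A_t\odot H\) scales the \((n,p)\) entry by \((\bar a_{t,n})_p\), the \(n\)-th row of \(\bar A_t\odot H^{\mathrm{full}}_{t-1}\) is \(h^{\mathrm{full}}_{t-1,n}D_{t,n}\). Right multiplication by \(R_t\) acts row by row, which gives
\[
h^{\mathrm{full}}_{t,n}=h^{\mathrm{full}}_{t-1,n}D_{t,n}R_t+u_{t,n}.
\]

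\textbf{Two-step formula.} Substitute the \(t=1\) expression into the \(t=2\) expression and expand:
\[
h^{\mathrm{full}}_{2,n}=h^{\mathrm{full}}_{0,n}D_{1,n}R_1D_{2,n}R_2+u_{1,n}D_{2,n}R_2+u_{2,n}.
\]

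\textbf{No shared right factor.} Take \(P=2\), \(N=2\), and \(R_2=I\). Set \(D_{1,n}=I\) for both rows, \(D_{2,1}=I\), and \(D_{2,2}=\operatorname{Diag}(1,\beta)\) with \(\beta\neq 1\). Let \(R_1=\begin{pmatrix}1&1\\0&1\end{pmatrix}\).

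Then \(M_1=R_1\) and \(M_2=R_1D_{2,2}=\begin{pmatrix}1&\beta\\0&\beta\end{pmatrix}\). Suppose \(M_n=\widetilde D_nR_\star\) for both rows. Then
\[
M_2M_1^{-1}=\widetilde D_2\widetilde D_1^{-1}
\]
must be diagonal. Compute \(R_1^{-1}=\begin{pmatrix}1&-1\\0&1\end{pmatrix}\), so
\[
M_2M_1^{-1}=\begin{pmatrix}1&\beta-1\\0&\beta\end{pmatrix}.
\]
This matrix is not diagonal because \(\beta\neq 1\), which is a contradiction. The \(\widetilde D_n\) must be invertible here because \(M_n\) is.

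More generally, the failure occurs whenever the row-dependent diagonals fail to commute past the non-diagonal \(R_1\). The only check needed is that a nontrivial transport is used; this is why the example takes \(R_1\) non-diagonal. If the row factors could be pushed through a shared right factor, then \(\prod_t D_{t,n}R_t\) would have to lie in \(\{\widetilde D R_\star\}\) with \(R_\star\) independent of \(n\).

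\textbf{Conclusion.} Because the two-step summary requires row-dependent right products, a constant-size summary \((L,R,U)\) as in Proposition~\ref{prop:affine-closure} does not capture the composition. Closure would need an \(N\)-fold family of \(P\times P\) right factors, so the small algebra is not preserved.
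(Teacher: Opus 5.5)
Your proof is correct and follows the paper's route. You use the same Hadamard-to-$\operatorname{Diag}$ row computation and two-step substitution, then a $P=2$ counterexample with $R_2=I$, $D_{1,n}=I$, $D_{2,1}=I$, $D_{2,2}=\operatorname{Diag}(1,\beta)$. The paper takes $\beta=2$ and $R_1=\begin{pmatrix}1&1\\1&-1\end{pmatrix}$, and notes that the first rows $(1,1)$ and $(1,2)$ are not proportional, which is just the row-by-row form of your test that $M_2M_1^{-1}$ be diagonal.

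Your shear has a small bonus: it equals $\exp(e_1e_2^\top)$, so it is realizable by the split-flow backend, whereas the paper's $R_1$ has negative determinant. However, drop the side claim that any non-diagonal $R_1$ suffices. For a permutation $R_1$ one has $D_{1,n}R_1D_{2,n}=\bigl(D_{1,n}R_1D_{2,n}R_1^{-1}\bigr)R_1$ with a diagonal prefactor, so a shared right factor does exist.
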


\begin{proof}
The rowwise formula is immediate from
\[
(\bar A_t\odot H)_{n,:}=H_{n,:}D_{t,n}.
\]
Composing two steps gives
\begin{align*}
h^{\mathrm{full}}_{2,n}
&=
\bigl(h^{\mathrm{full}}_{1,n}D_{2,n}\bigr)R_2+u_{2,n}
\\
&=
\bigl(h^{\mathrm{full}}_{0,n}D_{1,n}R_1+u_{1,n}\bigr)D_{2,n}R_2+u_{2,n}
\\
&=
h^{\mathrm{full}}_{0,n}D_{1,n}R_1D_{2,n}R_2+u_{1,n}D_{2,n}R_2+u_{2,n}.
\end{align*}
So closure in the same factorized family would require a common \(R_\star\) and
rowwise diagonals \(\widetilde D_n\) satisfying
\[
D_{1,n}R_1D_{2,n}R_2=\widetilde D_nR_\star
\qquad\text{for all }n.
\]
This need not hold. Consider the concrete example \(P=2\), \(N\ge 2\),
\[
R_1=
\begin{pmatrix}
1&1\\
1&-1
\end{pmatrix},
\qquad
R_2=I_2,
\qquad
D_{1,n}=I_2,
\]
and choose
\[
D_{2,1}=\mathrm{Diag}(1,1),
\qquad
D_{2,2}=\mathrm{Diag}(1,2).
\]
Then
\[
R_1D_{2,1}=
\begin{pmatrix}
1&1\\
1&-1
\end{pmatrix},
\qquad
R_1D_{2,2}=
\begin{pmatrix}
1&2\\
1&-2
\end{pmatrix}.
\]
If there were a shared \(R_\star\) and diagonals \(\widetilde D_1,\widetilde D_2\)
with
\[
R_1D_{2,j}=\widetilde D_jR_\star,
\qquad j\in\{1,2\},
\]
then the first rows of \(R_1D_{2,1}\) and \(R_1D_{2,2}\) would have to be scalar
multiples of the same row of \(R_\star\). But \((1,1)\) and \((1,2)\) are not
scalar multiples, so no such common \(R_\star\) exists.
\end{proof}

\paragraph{Interpretation.}
Proposition~\ref{prop:appC-direct-obstruction} does \emph{not} say that the
direct lift is invalid or unscannable. Rather, it says that the small
factorized scan algebra of the main text is lost. Exact scan is still possible
in larger summaries, for example by carrying rowwise \(P\times P\) transition
matrices. The group-tied diagonal-left realization in
Section~\ref{sec:selective-transport-cell} is introduced precisely to recover a
lightweight factorized block-affine scan.

\subsection{Controller-compatible block-affine scan closure}
\label{app:block-affine-closure-proof}

We now prove Proposition~\ref{prop:affine-closure}. The key point is that
the right action may be token-dependent, but it must be fixed conditional on a
precomputed controller path and independent of the main memory state.

\begin{proof}[Proof of Proposition~\ref{prop:affine-closure}]
Conditional on a causal controller path computed independently of the main
memory recurrence, \(L_t\), \(R_t\), and \(U_t\) are fixed with respect to the
main memory variable \(H_{t-1}\). Hence each step map has the affine two-sided
form
\[
F_t(H)=L_tHR_t+U_t.
\]
Consider two successive steps
\[
F_1(H)=L_1HR_1+U_1,
\qquad
F_2(H)=L_2HR_2+U_2.
\]
Their composition is
\begin{align*}
F_2(F_1(H))
&=
L_2(L_1HR_1+U_1)R_2+U_2
\\
&=
(L_2L_1)H(R_1R_2)+L_2U_1R_2+U_2.
\end{align*}
Therefore the composite update is again of the same affine two-sided form, with
summary
\[
(L_2,R_2,U_2)\star(L_1,R_1,U_1)
=
(L_2L_1,\;R_1R_2,\;L_2U_1R_2+U_2).
\]
Associativity of \(\star\) follows from associativity of function composition,
or directly from associativity of matrix multiplication. The resulting parallel
execution is an associative prefix-scan computation in the standard sense
\citep{blelloch1990prefix,martin2018parallelizing}.
\end{proof}

\paragraph{Why direct state coupling generically breaks the block-affine algebra.}
Suppose instead that the right action depends directly on the main memory,
\[
F_t(H)=L_tH\,R(\xi_t,H)+U_t.
\]
Any affine map \(F\) satisfies, for all scalars \(s\),
\[
F(sH)-F(0)=s\bigl(F(H)-F(0)\bigr).
\]
Here \(F(0)=U_t\), so affine-ness would require
\[
L_t(sH)R(\xi_t,sH)=sL_tHR(\xi_t,H)
\qquad\text{for all }s.
\]
Equivalently,
\[
L_tH\,R(\xi_t,sH)=L_tH\,R(\xi_t,H)
\qquad\text{for all }s.
\]
If there exists an \(H\) such that \(L_tH\) has full column rank, then right
multiplication by \(R\) is injective:
\[
L_tHR_1=L_tHR_2
\quad\Longrightarrow\quad
R_1=R_2.
\]
Thus affine-ness forces
\[
R(\xi_t,sH)=R(\xi_t,H)
\qquad\text{for all }s
\]
along that ray. Therefore any genuine dependence on \(H\), meaning a dependence
for which some \(H\) with \(\mathrm{rank}(L_tH)=P\) and some \(s\neq 1\) satisfy
\[
R(\xi_t,sH)\neq R(\xi_t,H),
\]
makes the step map non-affine. In the group-tied diagonal-left cell, this
full-rank condition holds generically whenever all diagonal entries of \(L_t\)
are nonzero, \(N\ge P\), and \(H\) has full column rank.

\paragraph{Concrete failure mode.}
Let \(W\in\mathbb R^{N\times P}\) and let \(M\in\mathbb R^{P\times P}\) be any
nonzero nilpotent matrix, for example \(M=e_ie_j^\top\) with \(i\neq j\). Define
\[
R(\xi_t,H):=\exp\!\bigl(\kappa\langle W,H\rangle_F M\bigr)
=I_P+\kappa\langle W,H\rangle_F M,
\qquad
\kappa\neq 0,
\]
where the second equality uses \(M^2=0\). Then the update
\[
H_t=L_tH_{t-1}R(\xi_t,H_{t-1})+U_t
\]
contains the quadratic term
\[
\kappa\langle W,H_{t-1}\rangle_F L_tH_{t-1}M,
\]
so it cannot be represented by a finite block-affine summary triple of the form
\((L,R,U)\). This is a limitation of finite block-affine scan summaries, not a
limitation of SHiPPO as an online approximation framework: state-coupled
transport can be defined pathwise, but it does not generically preserve the
finite summary algebra used by the scan-compatible realization.

\subsection{Collapse under fixed simultaneous block reduction}
\label{app:collapse-proof}

We prove Proposition~\ref{prop:collapse}. The result is a sufficient algebraic
collapse criterion: it shows that right-generator families preserving a fixed
common nontrivial block decomposition reduce to independent scalar or blockwise
transported banks after a static change of channel basis. It is not a complete
classification of all possible degenerate transport families.

\begin{proof}[Proof of Proposition~\ref{prop:collapse}]
Assume there exists a fixed \(Q\in GL(P)\) such that
\[
G_m=Q\Lambda_mQ^{-1}
\qquad (m=1,\dots,M)
\]
for all \(m\), where each \(\Lambda_m\) is either diagonal or shares a
common block-diagonal structure with the same fixed nontrivial block partition.
Define
\[
A_{R,t}=
\sum_{m=1}^M \rho_{t,m}G_m.
\]
Then
\[
A_{R,t}
=
Q\left(\sum_{m=1}^M\rho_{t,m}\Lambda_m\right)Q^{-1}
=
Q\Lambda_tQ^{-1},
\qquad
\Lambda_t:=\sum_{m=1}^M\rho_{t,m}\Lambda_m.
\]
By construction, \(\Lambda_t\) is diagonal, or has the same fixed block-diagonal
structure.

For the dense exponential choice,
\[
R_t=\exp(\Delta_tA_{R,t}),
\]
we obtain
\[
R_t=Q\exp(\Delta_t\Lambda_t)Q^{-1}.
\]
The matrix \(\exp(\Delta_t\Lambda_t)\) is diagonal in the diagonal case and
block-diagonal with the same fixed blocks in the block case.

The same conclusion holds for any fixed-order split product formed from the same
generators. Indeed, each factor satisfies
\[
\exp(\Delta_t\rho_{t,m}G_m)
=
Q\exp(\Delta_t\rho_{t,m}\Lambda_m)Q^{-1}.
\]
Thus any fixed-order product of such factors has the form
\[
\prod_{m=1}^M \exp(\Delta_t\rho_{t,m}G_m)
=
Q\left(\prod_{m=1}^M \exp(\Delta_t\rho_{t,m}\Lambda_m)\right)Q^{-1},
\]
with the products taken in the implemented fixed order. The middle product
remains diagonal or block-diagonal with the same fixed block structure.

Now define the transformed state
\[
\widetilde H_t:=H_tQ,
\qquad
\widetilde U_t:=U_tQ.
\]
The recurrence
\[
H_t=L_tH_{t-1}R_t+U_t
\]
then becomes
\[
\widetilde H_t
=
L_t\widetilde H_{t-1}\widetilde R_t+\widetilde U_t,
\qquad
\widetilde R_t:=Q^{-1}R_tQ,
\]
where \(\widetilde R_t\) is diagonal or block-diagonal with the same fixed block
partition.

If \(\widetilde R_t\) is diagonal, the columns of \(\widetilde H_t\) evolve
independently, so the recurrence decomposes into independent scalar transported
banks. If \(\widetilde R_t\) is block-diagonal with a fixed partition, then
columns evolve independently across blocks, with mixing only within each block,
yielding independent blockwise transported banks.

This proves the recurrence-level collapse. The fixed basis change \(Q\) appears
only as a static change of channel coordinates: it can be absorbed into
surrounding static projections or treated as a one-time change of channel basis.
In particular, if the source is factorized as \(U_t=b_t x_t^\top\), then
\[
U_tQ=b_t(Q^\top x_t)^\top,
\]
so the change of basis is absorbed into the group-local input coordinates. Thus
the recurrence collapses, after the static basis change \(Q\), to independent
scalar or fixed-block transported banks. This proves the sufficient collapse
criterion in Proposition~\ref{prop:collapse}: token-dependent coefficients over a
simultaneously reducible generator family do not yield genuinely coupled
transported memory across the fixed blocks.
\end{proof}

\subsection{Closed-form right actions for split-flow factors}
\label{app:cheap-right-actions}

This subsection records closed-form exponentials and right-actions for the
structured factors used in the split-flow generator
\eqref{eq:split-flow-generator}. We use matrix exponentials in the standard
matrix-function sense \citep{higham2008functions}. These formulas define the
optional structured backend used to construct the per-token action \(R_t\). The
dense frozen-ODE backend uses
\[
R_t^{\mathrm{den}}=\exp(\Delta_tA_{R,t})
\]
and is always invertible. A split backend defines the implemented right action
\(R_t^{\mathrm{split}}\) as a product of factor exponentials and also preserves
invertibility because each factor below is invertible.

For the dissipative diagonal component in \eqref{eq:split-flow-generator}, the
diagonal factor is
\[
D_t^{\mathrm{diag}}
=
-\Delta_t\operatorname{Diag}(d_t),
\qquad
d_t\in\mathbb R_+^P.
\]
Hence
\[
\exp(D_t^{\mathrm{diag}})
=
\operatorname{Diag}\bigl(e^{-\Delta_t d_t}\bigr),
\]
whose entries lie in \((0,1]\) for \(\Delta_t\ge0\). This is the sense in which
the diagonal component is dissipative. The proposition below records the more
general diagonal scaling formula used by the implementation.

\begin{proposition}[Closed forms and cheap right-actions]
\label{prop:cheap-actions}
Let \(H\in\mathbb R^{N\times P}\). The following right-actions have closed
forms.
\begin{enumerate}
\item \textbf{Diagonal scalings.}
If \(D=\mathrm{Diag}(\delta)\), then
\[
\exp(D)=\mathrm{Diag}(\exp(\delta)).
\]
Thus \(H\exp(D)\) costs \(O(NP)\), and the inverse is \(\exp(-D)\).

\item \textbf{Two-plane rotations.}
Fix distinct indices \(i\neq j\), and define the skew-symmetric generator
\[
\Omega_{ij}:=e_je_i^\top-e_ie_j^\top.
\]
Then \(\Omega_{ij}^\top=-\Omega_{ij}\), and
\(\exp(\phi\Omega_{ij})\) acts nontrivially only on columns \(i\) and \(j\):
\[
\begin{pmatrix}
H_{:,i} & H_{:,j}
\end{pmatrix}
\mapsto
\begin{pmatrix}
H_{:,i} & H_{:,j}
\end{pmatrix}
\begin{pmatrix}
\cos\phi & -\sin\phi\\
\sin\phi & \cos\phi
\end{pmatrix}.
\]
The right-action costs \(O(N)\). The inverse is
\[
\exp(-\phi\Omega_{ij})=\exp(\phi\Omega_{ij})^\top.
\]

\item \textbf{Nilpotent shears.}
Fix \(i\neq j\), and define
\[
N_{ij}:=e_ie_j^\top.
\]
Then \(N_{ij}^2=0\), so
\[
\exp(\eta N_{ij})=I+\eta N_{ij}.
\]
Moreover,
\[
H\exp(\eta N_{ij})=H+\eta HN_{ij},
\qquad
HN_{ij}=He_i e_j^\top,
\]
which adds \(\eta\) times column \(i\) into column \(j\). The cost is \(O(N)\),
and the inverse is \(I-\eta N_{ij}\).

\item \textbf{Rank-one exponentials.}
Let \(A=uv^\top\in\mathbb R^{P\times P}\) be rank one. Since
\[
A^2=(v^\top u)A,
\]
for any scalar \(s\),
\[
\exp(suv^\top)=I+\varphi(s\,v^\top u)\,suv^\top,
\]
where
\[
\varphi(\kappa)
:=
\begin{cases}
\dfrac{e^\kappa-1}{\kappa}, & \kappa\neq 0,\\[0.75em]
1, & \kappa=0.
\end{cases}
\]
Consequently,
\[
H\exp(suv^\top)=H+\bigl(\varphi(s\,v^\top u)\,s\bigr)(Hu)v^\top.
\]
The right-action costs \(O(NP)\). The inverse is obtained by replacing \(s\)
with \(-s\).
\end{enumerate}
\end{proposition}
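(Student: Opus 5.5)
The four items of Proposition~\ref{prop:cheap-actions} are independent matrix-function facts, and I will verify each one separately. In every case the closed form comes from the power series \(\exp(X)=\sum_k X^k/k!\) once \(X^k\) has a simple form. The right-action cost and inverse then follow from \(\exp(X)\exp(-X)=I\) together with the sparsity of the factor.

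\textbf{Diagonal scalings.} For \(D=\mathrm{Diag}(\delta)\), I will use \(D^k=\mathrm{Diag}(\delta^k)\) entrywise. Summing the series gives \(\mathrm{Diag}(\exp(\delta))\). Right multiplication by this matrix scales each column of \(H\), so it costs \(O(NP)\). The inverse is \(\exp(-D)\) because \(D\) commutes with \(-D\).

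\textbf{Two-plane rotations.} Skewness of \(\Omega_{ij}=e_je_i^\top-e_ie_j^\top\) is immediate. I will compute \(\Omega_{ij}^2=-(e_ie_i^\top+e_je_j^\top)=:-E_{ij}\), the negative of the projector onto \(\mathrm{span}(e_i,e_j)\), and note \(\Omega_{ij}E_{ij}=\Omega_{ij}\). Splitting the series into even and odd powers then gives
\[
\exp(\phi\Omega_{ij})=I-E_{ij}+\cos\phi\,E_{ij}+\sin\phi\,\Omega_{ij}.
\]
This matrix is the identity outside the \((i,j)\) block. On that block, \(\Omega_{ij}\) has entry \((j,i)=1\) and entry \((i,j)=-1\), so the block reads \(\begin{pmatrix}\cos\phi&-\sin\phi\\ \sin\phi&\cos\phi\end{pmatrix}\) in the \((i,j)\) ordering. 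Right multiplication therefore only touches columns \(i\) and \(j\) and costs \(O(N)\). The inverse is \(\exp(-\phi\Omega_{ij})=\exp(\phi\Omega_{ij}^\top)=\exp(\phi\Omega_{ij})^\top\).

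\textbf{Nilpotent shears.} Since \(e_j^\top e_i=0\) for \(i\ne j\), we get \(N_{ij}^2=e_i(e_j^\top e_i)e_j^\top=0\), so the series truncates to \(I+\eta N_{ij}\). Writing \(HN_{ij}=(He_i)e_j^\top\) shows that this term places column \(i\) into column \(j\), so the right action costs \(O(N)\). The inverse is \(I-\eta N_{ij}\), because the cross term vanishes by nilpotency.

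\textbf{Rank-one exponentials.} This is the only item needing care. From \(A^2=u(v^\top u)v^\top=\kappa_0A\) with \(\kappa_0=v^\top u\), induction gives \(A^k=\kappa_0^{k-1}A\) for \(k\ge1\). Hence
\[
\exp(sA)=I+\Bigl(\sum_{k\ge1}s^k\kappa_0^{k-1}/k!\Bigr)A .
\]
The main obstacle is identifying this scalar series uniformly, including the case \(\kappa_0=0\). I will write it as \(s\sum_{k\ge1}(s\kappa_0)^{k-1}/k!=s\,\varphi(s\kappa_0)\). This holds because \(\varphi(\kappa)=\sum_{m\ge0}\kappa^m/(m+1)!\) is entire, equals \((e^\kappa-1)/\kappa\) for \(\kappa\ne0\), and equals \(1\) at \(\kappa=0\).

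It then follows that \(H\exp(suv^\top)=H+s\varphi(s\kappa_0)(Hu)v^\top\). Computing \(Hu\) and forming the outer product each cost \(O(NP)\). For the inverse, \(\exp(-sA)\) inverts \(\exp(sA)\), and it is obtained from the same formula by replacing \(s\) with \(-s\).
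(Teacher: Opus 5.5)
Your proof is correct and follows essentially the same route as the paper: each item comes from the exponential series once the powers of the generator are identified, and cost and inverse are read off from sparsity together with $\exp(X)\exp(-X)=I$. The differences are cosmetic. You derive the rotation globally via $\Omega_{ij}^2=-(e_ie_i^\top+e_je_j^\top)$ rather than restricting to the $(i,j)$-plane, and you treat the rank-one case uniformly through the entire series $\varphi(\kappa)=\sum_{m\ge0}\kappa^m/(m+1)!$ instead of splitting into $v^\top u\neq0$ and $v^\top u=0$.
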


\begin{proof}
The diagonal case is immediate from entrywise exponentiation.

For the two-plane rotation, the restriction of \(\Omega_{ij}\) to the
\((i,j)\)-plane is
\[
\begin{pmatrix}
0 & -1\\
1 & 0
\end{pmatrix},
\]
whose exponential is
\[
\begin{pmatrix}
\cos\phi & -\sin\phi\\
\sin\phi & \cos\phi
\end{pmatrix}.
\]
All other coordinates are fixed.

For the nilpotent shear, \(N_{ij}^2=0\), so the exponential series truncates:
\[
\exp(\eta N_{ij})=I+\eta N_{ij}.
\]
Right multiplication by \(N_{ij}\) maps column \(i\) into column \(j\), giving
the stated update.

For the rank-one exponential, the identity
\[
(uv^\top)^k=(v^\top u)^{k-1}uv^\top
\qquad (k\ge 1)
\]
gives the desired formula. If \(v^\top u\neq 0\), then
\[
\exp(suv^\top)
=
I+
\sum_{k=1}^{\infty}\frac{s^k(v^\top u)^{k-1}}{k!}uv^\top
=
I+\frac{e^{sv^\top u}-1}{v^\top u}uv^\top
=
I+\varphi(sv^\top u)\,suv^\top.
\]
If \(v^\top u=0\), then \((uv^\top)^2=0\), so
\[
\exp(suv^\top)=I+suv^\top,
\]
which is the same formula with the continuous extension \(\varphi(0)=1\).
Right multiplying \(H\) gives the displayed formula.
\end{proof}

\paragraph{Numerical note.}
For small \(|\kappa|\), the scalar
\[
\varphi(\kappa)=\frac{e^\kappa-1}{\kappa}
\]
should be evaluated with a numerically stable \(\mathrm{expm1}\)-style
implementation.

\subsection{Lie--Trotter split products}
\label{app:lie-trotter}

A structured backend may define the discrete right action \(R_t\) as a fixed-order
product of cheap factor exponentials. This product is the actual right action
used by the discrete recurrence \eqref{eq:discrete-shippo-cell}. Hence the
block-affine scan algebra remains exact for that chosen recurrence. The lemma
below quantifies only the local approximation error of the split product
relative to the dense exponential \(\exp(\Delta_tA_{R,t})\), as in standard
first-order splitting methods for differential equations
\citep{hairer2006geometric}.

\begin{lemma}[First-order Lie--Trotter local error]
\label{lem:lie-trotter}
Let
\[
A=\sum_{k=1}^K A_k,
\]
and define the fixed-order product
\[
\widetilde R(\Delta)
:=
\exp(\Delta A_1)\exp(\Delta A_2)\cdots\exp(\Delta A_K).
\]
If \(\|A_k\|\le M\) for all \(k\), then for sufficiently small \(\Delta\),
\[
\bigl\|\widetilde R(\Delta)-\exp(\Delta A)\bigr\|
\le
C(K,M)\Delta^2,
\]
where \(C(K,M)\) depends only on \(K\) and \(M\).
\end{lemma}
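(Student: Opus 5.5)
The plan is to compare Taylor expansions of the two matrix functions to second order and bound the remainders uniformly in terms of \(K\) and \(M\). Throughout, \(\|\cdot\|\) is a submultiplicative matrix norm. Restrict to \(0\le\Delta\le\Delta_0\), for instance \(\Delta_0=1\); this is what ``sufficiently small'' will mean. Then every quantity \(\|\Delta A_k\|\) is at most \(M\), and \(\|\Delta A\|\le KM\).

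\textbf{Step 1: expansion of each factor.} For any matrix \(X\), the series gives
\[
\exp(X)=I+X+E_2(X),\qquad \|E_2(X)\|\le \tfrac12\|X\|^2 e^{\|X\|}.
\]
Apply this to each factor \(X=\Delta A_k\). Then \(\exp(\Delta A_k)=I+\Delta A_k+E_k\) with \(\|E_k\|\le \tfrac12 M^2e^{M}\Delta^2\). Apply it also to the dense exponential: \(\exp(\Delta A)=I+\Delta A+E_A\) with \(\|E_A\|\le \tfrac12K^2M^2e^{KM}\Delta^2\).

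\textbf{Step 2: expand the ordered product.} Multiply out \(\prod_k(I+\Delta A_k+E_k)\). The zeroth-order part is \(I\) and the first-order part is \(\Delta\sum_k A_k=\Delta A\); these match the dense exponential exactly. Every other term in the expansion contains either at least one \(E_k\) or at least two factors \(\Delta A_k\). Such a term is a product whose factors each have norm at most \(1+M+\tfrac12M^2e^M\) (using \(\Delta\le1\)), and it carries at least a factor \(\Delta^2\). There are fewer than \(3^K\) terms, so
\[
\Bigl\|\widetilde R(\Delta)-I-\Delta A\Bigr\|\le 3^K\bigl(1+M+\tfrac12M^2e^M\bigr)^{K}\Delta^2 .
\]

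\textbf{Step 3: combine.} By the triangle inequality,
\[
\|\widetilde R(\Delta)-\exp(\Delta A)\|\le \|\widetilde R(\Delta)-I-\Delta A\|+\|E_A\|.
\]
Both pieces are bounded by a constant times \(\Delta^2\), so we can take \(C(K,M)=3^K\bigl(1+M+\tfrac12M^2e^M\bigr)^K+\tfrac12K^2M^2e^{KM}\).

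\textbf{Main obstacle.} The only delicate point is the bookkeeping in Step 2. We must check that no term linear in \(\Delta\) survives beyond \(\Delta\sum_kA_k\); this holds because cross terms \(\Delta^2A_iA_j\) and the remainders \(E_k\) are all at least quadratic. We must also keep the constant independent of the particular matrices, which the crude bound \(3^K\) on the number of terms handles.

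For a sharper constant, one could instead use a telescoping identity. Writing \(S_j=\prod_{k\le j}\exp(\Delta A_k)\exp(\Delta\sum_{k>j}A_k)\), consecutive differences are each \(O(\Delta^2)\) because they reduce to commutator-type errors. The expansion argument above suffices for the lemma as stated.
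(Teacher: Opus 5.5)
Your proposal is correct and is essentially the paper's argument: expand each factor and the dense exponential as $I+X+O(\Delta^2)$ for $\Delta\le 1$, multiply out the ordered product so that only $I+\Delta A$ survives below second order, and finish with the triangle inequality; the only difference is that you make the paper's constants $c(M)$, $c_1(K,M)$, $c_2(K,M)$ explicit. One cosmetic fix is needed in Step 2: drop the identity factors before bounding norms, rather than bounding them by $1+M+\tfrac12M^2e^M$, because for a general submultiplicative norm (e.g.\ Frobenius) $\|I\|$ can exceed that quantity.
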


\begin{proof}
Use a submultiplicative matrix norm. For \(\Delta\le 1\), Taylor's theorem gives
\[
\exp(\Delta A_k)=I+\Delta A_k+E_k(\Delta),
\qquad
\|E_k(\Delta)\|\le c(M)\Delta^2
\]
for a constant \(c(M)\) depending only on \(M\). Multiplying the \(K\) factors
in this fixed order, all first-order terms add and all products containing at
least two first-order or remainder terms are
\(O_{K,M}(\Delta^2)\). Hence
\[
\widetilde R(\Delta)
=
I+\Delta\sum_{k=1}^K A_k+E_{\mathrm{prod}}(\Delta),
\qquad
\|E_{\mathrm{prod}}(\Delta)\|\le c_1(K,M)\Delta^2.
\]
Similarly,
\[
\exp(\Delta A)=I+\Delta A+E_{\mathrm{exp}}(\Delta),
\qquad
\|E_{\mathrm{exp}}(\Delta)\|\le c_2(K,M)\Delta^2,
\]
because
\[
\|A\|\le \sum_{k=1}^K\|A_k\|\le KM.
\]
Since \(A=\sum_k A_k\), subtracting the two expansions gives
\[
\bigl\|\widetilde R(\Delta)-\exp(\Delta A)\bigr\|
\le
\bigl(c_1(K,M)+c_2(K,M)\bigr)\Delta^2.
\]
Taking \(C(K,M)=c_1(K,M)+c_2(K,M)\) proves the claim.
\end{proof}

\subsection{Proof of Theorem~\ref{thm:discrete-cell}: exponential-adjusted discrete cell}
\label{app:discrete-cell-proof}

We now prove Theorem~\ref{thm:discrete-cell}. The derivation below establishes
only the dense-backend frozen-ODE interpretation, corresponding to the choice
\[
R_t=R_t^{\mathrm{den}}=\exp(\Delta_t A_{R,t}).
\]
If a structured backend instead defines \(R_t=R_t^{\mathrm{split}}\) by a
Lie--Trotter product, the same algebraic update is exact for that chosen
split-flow recurrence, while Lemma~\ref{lem:lie-trotter} describes only the
product's local approximation to the dense exponential flow. A comparison of
the split-backend update with the dense frozen ODE must therefore add the
split-backend replacement error recorded in
Remark~\ref{rem:split-backend-replacement-error}.

\begin{proof}[Proof of Theorem~\ref{thm:discrete-cell}]
Consider one step of length \(\Delta_t>0\). For the local discretization
analysis, freeze the left drift \(a_t\in\mathbb R^N\) and the right generator
\(A_{R,t}\in\mathbb R^{P\times P}\) on the interval \([t-\Delta_t,t]\). Let the
source vary over the step:
\[
U(\tau)\in\mathbb R^{N\times P}.
\]
The step-frozen restricted dynamics are
\begin{equation}
\dot H(\tau)=\operatorname{Diag}(a_t)H(\tau)+U(\tau)+H(\tau)A_{R,t},
\qquad
\tau\in[t-\Delta_t,t].
\label{eq:appC-step-frozen-shippo}
\end{equation}
Define
\[
L_t:=\exp\!\bigl(\Delta_t\operatorname{Diag}(a_t)\bigr)
=\operatorname{Diag}(\alpha_t),
\qquad
R_t^{\mathrm{den}}:=\exp(\Delta_tA_{R,t}),
\]
where \(\alpha_t:=\exp(\Delta_t a_t)\) entrywise. By variation of constants for
the two-sided linear equation \eqref{eq:appC-step-frozen-shippo}, the exact dense
frozen-ODE step is
\begin{equation}
H_t^\star
=
L_tH_{t-1}R_t^{\mathrm{den}}
+
\int_0^{\Delta_t}
\exp\!\bigl((\Delta_t-s)\operatorname{Diag}(a_t)\bigr)
U(t-\Delta_t+s)
\exp\!\bigl((\Delta_t-s)A_{R,t}\bigr)
\,ds,
\label{eq:appC-duhamel-sylvester}
\end{equation}
where
\[
H_{t-1}:=H(t-\Delta_t),
\qquad
H_t^\star:=H(t).
\]
Indeed, this follows by multiplying on the left by
\(\exp(- (\tau-(t-\Delta_t))\operatorname{Diag}(a_t))\) and on the right by
\(\exp(- (\tau-(t-\Delta_t))A_{R,t})\), differentiating the transformed
quantity, and integrating over the interval.

Define the Duhamel integrand
\[
G(s)
:=
\exp\!\bigl((\Delta_t-s)\operatorname{Diag}(a_t)\bigr)
U(t-\Delta_t+s)
\exp\!\bigl((\Delta_t-s)A_{R,t}\bigr),
\qquad
s\in[0,\Delta_t].
\]
Using endpoint source samples
\[
U_{t-1}=U(t-\Delta_t),
\qquad
U_t=U(t),
\]
or discrete approximations interpreted as these endpoint source values, we have
\[
G(0)=L_tU_{t-1}R_t^{\mathrm{den}},
\qquad
G(\Delta_t)=U_t.
\]
For \(t=1\) with \(\beta_1\neq 0\), the left endpoint requires an explicit
boundary source value representing \(U(0)\); the implementation convention
\(U_0:=0\) corresponds to the assumption \(U(0)=0\) or to an intended zero
boundary source.

Approximate the integral in \eqref{eq:appC-duhamel-sylvester} by the two-point
family
\[
\int_0^{\Delta_t}G(s)\,ds
\approx
(1-\lambda_t)\Delta_t G(0)+\lambda_t\Delta_t G(\Delta_t),
\qquad
\lambda_t\in[0,1].
\]
This gives
\[
\int_0^{\Delta_t}G(s)\,ds
\approx
(1-\lambda_t)\Delta_t L_tU_{t-1}R_t^{\mathrm{den}}
+
\lambda_t\Delta_t U_t.
\]
Define the dense-backend source update
\[
\widehat U_t^{\mathrm{den}}
:=
(1-\lambda_t)\Delta_t L_tU_{t-1}R_t^{\mathrm{den}}+
\lambda_t\Delta_t U_t.
\]
The corresponding dense discrete update is
\[
H_t^{\mathrm{den}}
:=
L_tH_{t-1}R_t^{\mathrm{den}}+\widehat U_t^{\mathrm{den}},
\]
which is exactly the dense-backend instance of the discrete cell
\eqref{eq:discrete-shippo-cell}. For a split backend, the same algebraic cell is
executed with the implemented right action \(R_t^{\mathrm{split}}\), but that is
a statement about the chosen discrete recurrence rather than the dense
frozen-ODE Duhamel step.

If \(\lambda_t=1\), then
\[
\widehat U_t^{\mathrm{den}}=\Delta_t U_t,
\]
so the source is evaluated at the right endpoint. This is the exponential-Euler
source rule. If \(\lambda_t=\frac12+O(\Delta_t)\), the local quadrature behavior
is the generalized exponential-trapezoidal behavior stated in
Corollary~\ref{cor:local-truncation} below.

Finally, if \(A_{R,t}\equiv0\), then \(R_t^{\mathrm{den}}=I_P\); any split
backend formed from zero right-generator factors also gives the identity right
action. The update becomes
\[
H_t
=
L_tH_{t-1}
+
(1-\lambda_t)\Delta_t L_tU_{t-1}
+
\lambda_t\Delta_t U_t.
\]
This is the corresponding one-sided group-tied diagonal-left selective update.
If, in addition, \(P=1\) and \(\lambda_t=1\), then
\[
H_t=L_tH_{t-1}+\Delta_t U_t,
\]
which is the one-channel exponential-Euler selective update associated with the
diagonal-left recurrence. For \(\lambda_t\neq1\), the neutral right-action limit
is still one-sided, but it uses the two-point source rule above rather than the
standard right-endpoint exponential-Euler source rule.
\end{proof}

\subsection{Local truncation behavior of the source quadrature}
\label{app:source-quadrature-truncation}

Theorem~\ref{thm:discrete-cell} uses a two-point quadrature rule for the dense
Duhamel source integral. This appendix-level corollary records the local
quadrature details omitted from Section~\ref{sec:discrete-shippo-cell} for
space. It concerns source quadrature relative to the step-frozen continuous
dynamics, not the exact block-affine scan of the resulting discrete recurrence.

\begin{corollary}[Dense-backend local truncation behavior under frozen coefficients]
\label{cor:local-truncation}
Assume \(R_t=R_t^{\mathrm{den}}=\exp(\Delta_tA_{R,t})\), assume \(a_t\) and
\(A_{R,t}\) are frozen on \([t-\Delta_t,t]\), and assume the Duhamel integrand
\(G\) is sufficiently smooth. Also assume that the discrete endpoint source
samples represent the continuous endpoint values,
\[
U_{t-1}=U(t-\Delta_t),
\qquad
U_t=U(t).
\]
For \(t=1\) with hard-wired \(U_0=0\) and \(\beta_1\neq 0\), this means assuming
\(U(0)=0\) or an explicit zero boundary source for the left endpoint. Let
\(H_t^\star\) denote the exact step map \eqref{eq:appC-duhamel-sylvester}, and let
\(H_t^{\mathrm{den}}\) denote the dense discrete update from
Theorem~\ref{thm:discrete-cell}. Then:

\begin{enumerate}
\item For any \(\lambda_t\in[0,1]\),
\[
\|H_t^{\mathrm{den}}-H_t^\star\|
\le
\frac{\Delta_t^2}{2}
\sup_{s\in[0,\Delta_t]}\|G'(s)\|.
\]
Thus the method is locally first-order accurate in \(\Delta_t\).

\item If \(\lambda_t=\frac12\) and \(G\in C^2\), then
\[
\|H_t^{\mathrm{den}}-H_t^\star\|
\le
\frac{\Delta_t^3}{12}
\sup_{s\in[0,\Delta_t]}\|G''(s)\|.
\]
Thus the method is locally second-order accurate in \(\Delta_t\).

\item More generally, if
\[
\left|\lambda_t-\frac12\right|\le c\Delta_t
\]
for a constant \(c\), and \(G\in C^2\), then
\[
\|H_t^{\mathrm{den}}-H_t^\star\|
\le
\Delta_t^3\left(
\frac{1}{12}\sup_{s\in[0,\Delta_t]}\|G''(s)\|
+
c\sup_{s\in[0,\Delta_t]}\|G'(s)\|
\right).
\]
Thus \(\lambda_t=\frac12+O(\Delta_t)\) preserves the same local
\(O(\Delta_t^3)\) quadrature-error order.
\end{enumerate}
\end{corollary}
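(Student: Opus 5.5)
The plan is to reduce everything to the error of a two-point quadrature rule applied to the Duhamel integral. Subtracting the exact step map \eqref{eq:appC-duhamel-sylvester} from the dense discrete update $H_t^{\mathrm{den}}=L_tH_{t-1}R_t^{\mathrm{den}}+\widehat U_t^{\mathrm{den}}$ of Theorem~\ref{thm:discrete-cell}, the homogeneous term $L_tH_{t-1}R_t^{\mathrm{den}}$ cancels exactly. This is because both use the same frozen exponentials. The endpoint identifications $G(0)=L_tU_{t-1}R_t^{\mathrm{den}}$ and $G(\Delta_t)=U_t$ then hold under the stated sampling assumption, including the boundary convention $U(0)=0$ at $t=1$. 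Together these give
\[
H_t^{\mathrm{den}}-H_t^\star
=
(1-\lambda_t)\Delta_t G(0)+\lambda_t\Delta_t G(\Delta_t)-\int_0^{\Delta_t}G(s)\,ds .
\]
Every item is therefore a statement about the scalar-weighted quadrature error $E_\lambda:=\Delta_t[(1-\lambda)G(0)+\lambda G(\Delta_t)]-\int_0^{\Delta_t}G$ for a matrix-valued smooth $G$. I will work in any norm and use only the triangle inequality and Taylor's theorem with integral remainder, since the mean-value form is not available for matrix-valued functions.

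For item 1, I will write $E_\lambda=\int_0^{\Delta_t}\bigl[(1-\lambda)(G(0)-G(s))+\lambda(G(\Delta_t)-G(s))\bigr]ds$. Then I bound $\|G(0)-G(s)\|\le s\sup\|G'\|$ and $\|G(\Delta_t)-G(s)\|\le(\Delta_t-s)\sup\|G'\|$. Integrating gives $\bigl((1-\lambda)+\lambda\bigr)\frac{\Delta_t^2}{2}\sup\|G'\|$, which is exactly the claimed bound.

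For item 2, I will use the Peano-kernel form of the trapezoidal error. Integrating by parts twice gives $E_{1/2}=\frac12\int_0^{\Delta_t}s(\Delta_t-s)G''(s)\,ds$, which can be checked directly by integrating by parts. Since the kernel $s(\Delta_t-s)\ge0$ integrates to $\Delta_t^3/6$, this yields $\|E_{1/2}\|\le\frac{\Delta_t^3}{12}\sup\|G''\|$. This step needs the most care. Vector-valued functions admit no mean-value shortcut, so I must verify the kernel identity explicitly rather than quote the scalar trapezoidal error formula.

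For item 3, I will decompose $E_\lambda=E_{1/2}+(\lambda-\tfrac12)\Delta_t\bigl(G(\Delta_t)-G(0)\bigr)$. The first term is bounded by item 2. The second is bounded by $|\lambda-\tfrac12|\,\Delta_t\cdot\Delta_t\sup\|G'\|\le c\Delta_t^3\sup\|G'\|$. Summing gives the stated estimate.

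A final remark will address the assumption that $G$ is smooth. This follows when $U$ is smooth, because $G$ is a product of $U$ with the entire functions $s\mapsto e^{(\Delta_t-s)\mathrm{Diag}(a_t)}$ and $s\mapsto e^{(\Delta_t-s)A_{R,t}}$. That is why the hypothesis is placed on $G$ directly.
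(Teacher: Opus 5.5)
Your proposal is correct and follows essentially the same route as the paper. Both reduce to the quadrature error $\int_0^{\Delta_t}G-\Delta_t\bigl[(1-\lambda_t)G(0)+\lambda_t G(\Delta_t)\bigr]$, bound item 1 as a convex combination of the two one-sided rectangle errors, and get item 3 from $(\lambda_t-\tfrac12)\Delta_t\bigl(G(\Delta_t)-G(0)\bigr)$. The only difference is in item 2: you verify the trapezoidal remainder through the explicit Peano-kernel identity $\tfrac12\int_0^{\Delta_t}s(\Delta_t-s)G''(s)\,ds$, whereas the paper simply cites the standard trapezoidal bound for Banach-space-valued $C^2$ functions.
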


\begin{proof}[Proof of Corollary~\ref{cor:local-truncation}]
Let
\[
I(G):=\int_0^{\Delta_t}G(s)\,ds,
\qquad
Q_\lambda(G):=(1-\lambda_t)\Delta_t G(0)+\lambda_t\Delta_t G(\Delta_t).
\]
By the definitions of \(H_t^\star\) and \(H_t^{\mathrm{den}}\),
\[
H_t^\star-H_t^{\mathrm{den}}=I(G)-Q_\lambda(G).
\]

For the first claim, write
\[
I(G)-Q_\lambda(G)
=
(1-\lambda_t)\bigl(I(G)-\Delta_tG(0)\bigr)
+\lambda_t\bigl(I(G)-\Delta_tG(\Delta_t)\bigr).
\]
Using
\[
G(s)-G(0)=\int_0^s G'(r)\,dr
\]
gives
\[
\left\|I(G)-\Delta_tG(0)\right\|
\le
\int_0^{\Delta_t}s\,ds\,
\sup_{r\in[0,\Delta_t]}\|G'(r)\|
=
\frac{\Delta_t^2}{2}
\sup_{r\in[0,\Delta_t]}\|G'(r)\|.
\]
Similarly,
\[
\left\|I(G)-\Delta_tG(\Delta_t)\right\|
\le
\frac{\Delta_t^2}{2}
\sup_{r\in[0,\Delta_t]}\|G'(r)\|.
\]
Since \(\lambda_t\in[0,1]\), the first bound follows.

For \(\lambda_t=\frac12\), \(Q_\lambda\) is the trapezoidal rule. The standard
trapezoidal remainder for a Banach-space-valued \(C^2\) function gives
\[
\|I(G)-Q_{1/2}(G)\|
\le
\frac{\Delta_t^3}{12}
\sup_{s\in[0,\Delta_t]}\|G''(s)\|.
\]

For the final claim, write
\[
Q_{\lambda_t}(G)-Q_{1/2}(G)
=
\left(\lambda_t-\frac12\right)
\Delta_t\bigl(G(\Delta_t)-G(0)\bigr).
\]
Therefore, if
\[
\left|\lambda_t-\frac12\right|\le c\Delta_t,
\]
then
\[
\|Q_{\lambda_t}(G)-Q_{1/2}(G)\|
\le
c\Delta_t^2\int_0^{\Delta_t}\|G'(s)\|\,ds
\le
c\Delta_t^3\sup_{s\in[0,\Delta_t]}\|G'(s)\|.
\]
Combining this with the trapezoidal bound proves the result.
\end{proof}

\begin{remark}[Split-backend replacement error]
\label{rem:split-backend-replacement-error}
Let \(H_t^\star\) denote the exact one-step solution of the dense frozen ODE
\eqref{eq:appC-duhamel-sylvester}.  Write
\(R_t^{\mathrm{den}}:=\exp(\Delta_tA_{R,t})\), and let
\(R_t^{\mathrm{split}}\) be the right action produced by a split backend.  For the
same one-step input \(H_{t-1},U_{t-1},U_t\), define
\[
\begin{aligned}
H_t^{\mathrm{den}}
&:=L_tH_{t-1}R_t^{\mathrm{den}}
  +\beta_tL_tU_{t-1}R_t^{\mathrm{den}}+\gamma_tU_t,\\
H_t^{\mathrm{split}}
&:=L_tH_{t-1}R_t^{\mathrm{split}}
  +\beta_tL_tU_{t-1}R_t^{\mathrm{split}}+\gamma_tU_t.
\end{aligned}
\]
Then
\[
\begin{aligned}
H_t^{\mathrm{split}}-H_t^{\mathrm{den}}
&=L_t\bigl(H_{t-1}+\beta_tU_{t-1}\bigr)
  \bigl(R_t^{\mathrm{split}}-R_t^{\mathrm{den}}\bigr),\\
\|H_t^{\mathrm{split}}-H_t^\star\|
&\le
\|L_t\|\,\bigl(\|H_{t-1}\|+|\beta_t|\|U_{t-1}\|\bigr)
\|R_t^{\mathrm{split}}-R_t^{\mathrm{den}}\|
+\|H_t^{\mathrm{den}}-H_t^\star\|.
\end{aligned}
\]
The first term is the split-backend replacement error, while the second term is
the dense-backend source-quadrature error controlled in
Corollary~\ref{cor:local-truncation}. For a first-order Lie--Trotter backend,
Lemma~\ref{lem:lie-trotter} gives
\(\|R_t^{\mathrm{split}}-R_t^{\mathrm{den}}\|=O(\Delta_t^2)\) under bounded
factor generators. Thus, for bounded one-step inputs and bounded \(L_t\), the
full one-step discrepancy to the dense frozen ODE is generically
\(O(\Delta_t^2)\), even when the dense source quadrature has
\(O(\Delta_t^3)\) local error.
\end{remark}

\section{Formal Diagnostics and Experimental Protocol}
\label{app:formal-diagnostics}

This appendix specifies the controlled diagnostics used in
Section~\ref{sec:experiments}. The purpose of these experiments is narrow:
to separate high-rank source/write capacity from future right transport of
memory that has already been written. The diagnostics are synthetic by design,
but they remove direct source-injection shortcuts and expose the mechanism that
is central to the paper.

\subsection{Diagnostic scope}
\label{app:formal-manifest}

The paper-facing diagnostics have four roles. The paired noncommutative
transport diagnostic is the main separation test: it isolates whether high-rank
current-step source writes can replace future right transport of already-written
memory. The right-transport parameterization/intervention study is also main
evidence: it tests oracle, learned, and selective right transports together with
the \(R_t\!\to I\) evaluation-time intervention. The transported-projection
prediction task is auxiliary, because it checks a related transported-projection
quantity but does not include the paired-difference intervention metric. The
group-local audit is a limitation diagnostic, used only to clarify how the
scan-compatible group-local restriction differs from full right transport.

\subsection{Paired noncommutative transport task}
\label{app:paired-transport-task}

The primary diagnostic tests whether high-rank current-step source writes can
replace future right transport of memory written earlier. Each example first
writes a payload vector \(v\in\mathbb{R}^d\), then applies two operation tokens.
The paired examples share the same payload and operation parameters but reverse
the operation order, \(ab\) versus \(ba\). In the minimal setting used for the
main diagnostic, the two right actions are generated by the elementary shears
\begin{equation}
  R_a(\alpha) = I + \alpha e_1 e_2^{\top},
  \qquad
  R_b(\beta) = I + \beta e_2 e_3^{\top},
  \label{eq:app-paired-generators}
\end{equation}
so that
\begin{equation}
  R_a(\alpha)R_b(\beta)-R_b(\beta)R_a(\alpha)
  = \alpha\beta e_1 e_3^{\top}.
  \label{eq:app-paired-commutator}
\end{equation}
The paired target is
\begin{equation}
  \Delta_{\mathrm{true}}
  = v^{\top}\bigl(R_aR_b-R_bR_a\bigr),
  \label{eq:app-paired-delta}
\end{equation}
and the model prediction \(\Delta_{\mathrm{pred}}\) is the difference between
its outputs on the two paired examples. The main metric is
\begin{equation}
  \operatorname{Pair}\Delta\operatorname{NMSE}
  =
  \frac{\lVert \Delta_{\mathrm{pred}}-\Delta_{\mathrm{true}}\rVert_2^2}
       {\max\{\lVert \Delta_{\mathrm{true}}\rVert_2^2,\epsilon_{\mathrm{den}}\}},
  \qquad \epsilon_{\mathrm{den}}=10^{-8}.
  \label{eq:app-pair-delta-nmse}
\end{equation}
A value near one means that the model predicts essentially no paired
noncommutative difference; a value near zero means that it recovers the
transport-order difference. The denominator floor is fixed before training and
is used only to avoid numerical instability on near-zero paired differences.

The key constraint is that source injection is allowed only at WRITE
tokens. Operation tokens do not inject payload-dependent source. Thus a
no-right rank-\(r\) source/write model can increase the rank of what it writes at the
current step, but it cannot implement a later operation that right-multiplies
already-written channel coordinates. This is the intended separation from
SHiPPO-style right transport.

The task schematic and aggregate results are shown in
Figure~\ref{fig:main-evidence}; this appendix gives the formal task definition
and sampling details.

\paragraph{Sampling.}
For the paired-transport runs, \(d=4\), the state size is \(N=16\), and the
operation family has \(K=2\) generators. Payloads are sampled as
\(v\sim \mathcal{N}(0,I_d/d)\), and operation coefficients are sampled uniformly
from \([-0.5,0.5]\). The paired evaluation uses the same \(v,\alpha,\beta\) for
the two orders, so that the measured difference isolates the order-dependent
commutator term.

\subsection{Model families and interventions}
\label{app:model-families}

All formal diagnostics use a controlled matrix-state family with memory state
\(H_t\in\mathbb{R}^{N\times d}\). The no-right rank-\(r\) source/write baseline uses
\begin{equation}
  H_t = L_t H_{t-1} + \widehat U_t^{(r)},
  \qquad
  \widehat U_t^{(r)} = \frac{1}{\sqrt r}B_tX_t^{\top},
  \label{eq:app-no-right}
\end{equation}
where increasing \(r\) increases the rank of the current source update. This is
the controlled high-rank write/read baseline.

SHiPPO-style variants add a right transport:
\begin{equation}
  H_t = L_t H_{t-1}R_t + \widehat U_t.
  \label{eq:app-shippo-right}
\end{equation}
The right action is either oracle-provided, learned as a Lie-style generator
parameterization, or generated by a selective controller. We evaluate
oracle-\(R\), learned-\(R\) initialized from true, zero, or random generators,
and selective-\(R\). For right-transport models we also report an evaluation-time
\(R_t\!\to\!I\) intervention: the right action is replaced by identity while all
other learned weights remain fixed. If the paired-difference signal disappears
under this intervention, the behavior is mediated by right transport rather than
by a static readout or source-only shortcut.

\subsection{Training and evaluation protocol}
\label{app:training-protocol}

All reported values are means and sample standard deviations over independent
training seeds. Training uses AdamW \citep{loshchilov2019decoupled} with learning rate \(3\times10^{-4}\),
weight decay \(10^{-2}\), gradient clipping at \(1.0\), normalized MSE loss,
and evaluation every 1000 steps. We report the final training step; no
validation-based checkpoint selection is used for the paper-facing tables.
The transported-projection prediction task uses generator seed 321,
coefficient range 0.15, and \(\varepsilon_{\mathrm{proj}}=0.05\) in the task
generator. The paired-transport diagnostics use coefficient range 0.5.
Table~\ref{tab:formal-training-settings} records the main settings.

\begin{table}[!htbp]
\centering
\small
\setlength{\tabcolsep}{4pt}
\renewcommand{\arraystretch}{1.10}
\caption{Training and evaluation settings for the formal diagnostics. All runs
use \(20{,}000\) training steps. We use \(n=5\) seeds except for the
group-local audit, which uses \(n=3\). The ``Eval batches'' column records the
number of evaluation batches.}
\label{tab:formal-training-settings}
\begin{tabular}{@{}L{0.28\linewidth}ccL{0.48\linewidth}@{}}
\toprule
Diagnostic & Batch & \makecell[c]{Eval\\batches} & Key settings \\
\midrule
Paired noncommutative transport
& 128 & 16
& \(d=4\), \(N=16\), \(K=2\), depth 2; WRITE-only source gate; pair-loss weight 5.0; no-right ranks \(r\in\{1,2,4,8\}\). \\

Right-transport parameterizations/intervention
& 128 & 16
& Same paired-transport task; compares no-right \(r=8\), oracle-\(R\), learned-\(R\), and selective-\(R\). \\

Transported-projection prediction
& 32 & 8
& \(d=16\), \(N=16\), \(K=16\), length 256; skew--nilpotent generator family; source rank \(r=4\), fixed left operator. \\

Group-local audit
& 128 & 8
& Paired-transport task with group sizes 2 and 4; oracle-\(R\) and learned-\(R\) true initialization; audit batch size 512. \\
\bottomrule
\end{tabular}
\end{table}
\FloatBarrier

\subsection{Full numerical summaries}
\label{app:full-formal-results}

Tables~\ref{tab:paired-transport}--\ref{tab:transported-projection} give the
numerical values underlying Figure~\ref{fig:main-evidence} and the auxiliary
transported-projection result. All entries in these three tables are means
\(\pm\) sample standard deviations over \(n=5\) independent training seeds.

\begin{table}[!htbp]
\centering
\footnotesize
\setlength{\tabcolsep}{2.2pt}
\renewcommand{\arraystretch}{1.12}
\caption{Paired noncommutative transport diagnostic. Entries are means \(\pm\) sample standard deviations over \(n=5\) seeds.}
\label{tab:paired-transport}
\label{tab:e0-formal-separation}
\begin{tabular}{@{}L{0.29\linewidth}cccc@{}}
\toprule
Model &
\makecell[c]{Eval\\NMSE \(\downarrow\)} &
\makecell[c]{Pair \(\Delta\)\\NMSE \(\downarrow\)} &
\makecell[c]{Pair \(\Delta\)\\\(R^2\) \(\uparrow\)} &
\makecell[c]{\(R\!\to\!I\) Pair \(\Delta\)\\NMSE \(\downarrow\)} \\
\midrule
Oracle solver & \ms{0}{0} & \ms{0}{0} & \ms{1}{0} & -- \\
No-right \(r=1\) &
\ms{4.145\!\times\!10^{-4}}{5.33\!\times\!10^{-5}} &
\ms{1}{4.07\!\times\!10^{-7}} &
\ms{-1.52\!\times\!10^{-3}}{2.165\!\times\!10^{-3}} & -- \\
No-right \(r=2\) &
\ms{3.568\!\times\!10^{-4}}{4.38\!\times\!10^{-5}} &
\ms{1}{3.22\!\times\!10^{-7}} &
\ms{-1.52\!\times\!10^{-3}}{2.166\!\times\!10^{-3}} & -- \\
No-right \(r=4\) &
\ms{3.389\!\times\!10^{-4}}{4.43\!\times\!10^{-5}} &
\ms{1}{3.11\!\times\!10^{-7}} &
\ms{-1.52\!\times\!10^{-3}}{2.165\!\times\!10^{-3}} & -- \\
No-right \(r=8\) &
\ms{2.797\!\times\!10^{-4}}{2.31\!\times\!10^{-5}} &
\ms{1}{2.02\!\times\!10^{-7}} &
\ms{-1.52\!\times\!10^{-3}}{2.165\!\times\!10^{-3}} & -- \\
SHiPPO oracle-\(R\) &
\ms{4.377\!\times\!10^{-3}}{9.788\!\times\!10^{-3}} &
\ms{2.44\!\times\!10^{-7}}{4.61\!\times\!10^{-7}} &
\ms{1}{4.66\!\times\!10^{-7}} & \ms{1}{0} \\
SHiPPO learned-\(R\) (true init) &
\ms{7.019\!\times\!10^{-4}}{1.18\!\times\!10^{-3}} &
\ms{2.11\!\times\!10^{-5}}{2.86\!\times\!10^{-5}} &
\ms{1}{2.86\!\times\!10^{-5}} & \ms{1}{0} \\
\bottomrule
\end{tabular}
\end{table}

\begin{table}[!htbp]
\centering
\footnotesize
\setlength{\tabcolsep}{2.2pt}
\renewcommand{\arraystretch}{1.12}
\caption{Right-transport parameterizations and evaluation-time intervention.
Entries are means \(\pm\) sample standard deviations over \(n=5\) seeds. Rows
marked \(\dagger\) are repeated from Table~\ref{tab:paired-transport}.}
\label{tab:right-transport-parameterizations}
\label{tab:rdesign}
\begin{tabular}{@{}L{0.32\linewidth}cccc@{}}
\toprule
Model &
\makecell[c]{Eval\\NMSE \(\downarrow\)} &
\makecell[c]{Pair \(\Delta\)\\NMSE \(\downarrow\)} &
\makecell[c]{Pair \(\Delta\)\\\(R^2\) \(\uparrow\)} &
\makecell[c]{\(R\!\to\!I\) Pair \(\Delta\)\\NMSE \(\downarrow\)} \\
\midrule
No-right \(r=8\)\textsuperscript{\(\dagger\)} &
\ms{2.797\!\times\!10^{-4}}{2.31\!\times\!10^{-5}} &
\ms{1}{2.02\!\times\!10^{-7}} &
\ms{-1.52\!\times\!10^{-3}}{2.165\!\times\!10^{-3}} & -- \\
SHiPPO oracle-\(R\)\textsuperscript{\(\dagger\)} &
\ms{4.377\!\times\!10^{-3}}{9.788\!\times\!10^{-3}} &
\ms{2.44\!\times\!10^{-7}}{4.61\!\times\!10^{-7}} &
\ms{1}{4.66\!\times\!10^{-7}} & \ms{1}{0} \\
SHiPPO learned-\(R\) (true init)\textsuperscript{\(\dagger\)} &
\ms{7.019\!\times\!10^{-4}}{1.18\!\times\!10^{-3}} &
\ms{2.11\!\times\!10^{-5}}{2.86\!\times\!10^{-5}} &
\ms{1}{2.86\!\times\!10^{-5}} & \ms{1}{0} \\
SHiPPO learned-\(R\) (zero init) &
\ms{3.65\!\times\!10^{-5}}{7.95\!\times\!10^{-5}} &
\ms{1.87\!\times\!10^{-6}}{2.57\!\times\!10^{-6}} &
\ms{1}{2.60\!\times\!10^{-6}} & \ms{1}{0} \\
SHiPPO learned-\(R\) (random init) &
\ms{4.260\!\times\!10^{-4}}{5.462\!\times\!10^{-4}} &
\ms{6.58\!\times\!10^{-6}}{8.08\!\times\!10^{-6}} &
\ms{1}{8.11\!\times\!10^{-6}} & \ms{1}{0} \\
SHiPPO selective-\(R\) &
\ms{1.235\!\times\!10^{-3}}{1.713\!\times\!10^{-3}} &
\ms{5.58\!\times\!10^{-5}}{8.12\!\times\!10^{-5}} &
\ms{0.9999}{8.13\!\times\!10^{-5}} & \ms{1}{0} \\
\bottomrule
\end{tabular}
\vspace{0.3em}
\begin{minipage}{0.96\linewidth}
\footnotesize The \(R_t\!\to\!I\) intervention is applicable only to
right-transport models; ``--'' denotes not applicable.
\end{minipage}
\end{table}

\begin{table}[!htbp]
\centering
\footnotesize
\setlength{\tabcolsep}{3pt}
\renewcommand{\arraystretch}{1.08}
\caption{Transported-projection prediction auxiliary result. Entries are means \(\pm\) sample standard deviations over \(n=5\) seeds.}
\label{tab:transported-projection}
\label{tab:e2-transported-projection}
\begin{tabular}{@{}L{0.42\linewidth}cc@{}}
\toprule
Model & Eval NMSE \(\downarrow\) & Eval \(R^2\) \(\uparrow\) \\
\midrule
No-right \(r=4\) & \(1.073\!\times\!10^{-3} \pm 1.471\!\times\!10^{-4}\) & \(0.9989 \pm 1.471\!\times\!10^{-4}\) \\
SHiPPO learned-\(R\) (true init) & \(5.82\!\times\!10^{-5} \pm 6.45\!\times\!10^{-5}\) & \(0.9999 \pm 6.45\!\times\!10^{-5}\) \\
SHiPPO oracle-\(R\) & \(6.571\!\times\!10^{-4} \pm 3.808\!\times\!10^{-4}\) & \(0.9993 \pm 3.808\!\times\!10^{-4}\) \\
\bottomrule
\end{tabular}
\end{table}
\FloatBarrier

\subsection{Group-local audit and limitation}
\label{app:group-audit}

The main separation claim is made for full right transport. The
scan-compatible realization in Section~3 uses group-local right transport as a
computational restriction, so we audit grouping in the minimal paired-transport
setting. Since \(d=4\), group size 4 is equivalent to full right transport.
Group size 2 forces the effective update to use block-local slices of the
generated right action.

Table~\ref{tab:app-group-audit} reports the group-local audit; this is a
limitation diagnostic rather than part of the main full-transport separation
claim.

\begin{table}[!htbp]
\centering
\small
\setlength{\tabcolsep}{4pt}
\renewcommand{\arraystretch}{1.06}
\caption{Group-local audit. Entries use \(n=3\) seeds. Pair \(\Delta\) NMSE is
averaged over seeds; raw off-block ratio is mean \(\pm\) sample standard
 deviation. Group size 4 corresponds to full right transport for this \(d=4\)
 diagnostic.}
\label{tab:app-group-audit}
\begin{tabular}{@{}L{0.34\linewidth}C{0.12\linewidth}C{0.22\linewidth}C{0.24\linewidth}@{}}
\toprule
Model & Group &
\makecell[c]{Pair \(\Delta\)\\NMSE \(\downarrow\)} &
\makecell[c]{Raw off-block\\ratio} \\
\midrule
SHiPPO learned-\(R\) & 2 & \(1.60\!\times\!10^{-5}\) & \(0.1305 \pm 0.0277\) \\
SHiPPO learned-\(R\) & 4 & \(3.78\!\times\!10^{-5}\) & \(0 \pm 0\) \\
SHiPPO oracle-\(R\) & 2 & \(1\) & \(0.1015 \pm 0.0006\) \\
SHiPPO oracle-\(R\) & 4 & \(6.70\!\times\!10^{-8}\) & \(0 \pm 0\) \\
\bottomrule
\end{tabular}
\end{table}
\FloatBarrier

This audit should not be interpreted as evidence that group-local transport
faithfully implements full transport. Rather, it shows that learned front-end,
source, and readout maps may partially recode the diagnostic under a mismatched
grouping. The formal separation claim is therefore made for the full-transport
setting; group-local transport is a computational realization whose expressivity
depends on the grouping.

\subsection{Excluded exploratory diagnostics}
\label{app:excluded-diagnostics}

We also explored broader associative-recall diagnostics, span-capacity pilots,
and program-like wrappers. We do not use these runs as evidence for the
mechanistic separation studied here: broad associative-recall tasks can admit
shortcuts based on high-rank current-step writes, span-capacity pilots do not include
paired-difference or intervention metrics, and the current program-like wrappers
introduce additional variable-tracking difficulty beyond the transport
mechanism. These experiments motivated the paired noncommutative transport
diagnostic but are not part of the paper-facing evidence.



\section{Transport-MQAR Details}
\label{app:transport-mqar-details}

This appendix records the Transport-MQAR generator, metrics, model geometry,
training protocol, full result tables, and controller counterfactuals used for
Section~\ref{subsec:transport-mqar-main}. Transport-MQAR and the
controller-suffix intervention provide diagnostic evidence for learned
right-action pathways in an autoregressive transported-recall setting.

\subsection{Task, metrics, and generator}
\label{app:transport-mqar-task}

Transport-MQAR is a finite-field, right-transport variant of multi-query associative recall (MQAR) \citep{arora2024zoology}. Each
example is generated over \(\mathbb F_{31}\) with 256 keys and four-coordinate
values. A binding event stores a key--value pair; an operation event applies an
invertible right action to all stored values; and a query event asks for the
currently transported value of a key. The configured training length is 512, and
final evaluations use lengths \(128,512,2048,4096\). We report coordinate
accuracy, the fraction of target coordinates predicted correctly, and exact
accuracy, the fraction of queries for which all four coordinates are correct.
The main runs use a nonreducible library of 13 finite-field operations.

\begin{table}[H]
\centering
\small
\caption{Transport-MQAR generator configuration for the reported primary runs.}
\label{tab:app-transport-mqar-generator-config}
\begin{tabular}{@{}L{0.36\linewidth}L{0.58\linewidth}@{}}
\toprule
Field & Value \\
\midrule
Finite field & \(\mathbb F_{31}\) \\
Target coordinates & \(p=4\) \\
Number of keys & 256 \\
Configured training length & 512 \\
Evaluation lengths & 128, 512, 2048, 4096 \\
Generator mode & \texttt{nonreducible} \\
Event probabilities & \(p_{\mathrm{op}}=0.50,\ p_{\mathrm{bind}}=0.22,\ p_{\mathrm{query}}=0.28\) \\
Number of operation generators & 13 \\
Evaluation examples & 640 per length and seed unless noted otherwise \\
Training loss & coordinate-wise cross-entropy at query positions \\
\bottomrule
\end{tabular}
\end{table}

\subsection{Controls and realization geometry}
\label{app:transport-mqar-geometry}

GRU \citep{cho2014learning} and Transformer \citep{vaswani2017attention} are same-width sequence baselines. Free enc/dec adds static
input/output basis flexibility around a no-right SHiPPO-style memory. No-right
preserves the matrix-state geometry but fixes \(R_t=I\). DirectGen-SingleExp is
an intra-family right-action realization: it keeps the group-local matrix-state
recurrence
\[
    H_{t,g}=L_{t,g}H_{t-1,g}R_{t,g}+\widehat U_{t,g},
\]
but directly emits a group-local generator \(A_{t,g}\in\mathbb R^{P_g\times
P_g}\) and sets
\[
    R_{t,g}=\exp(\Delta_{t,g}A_{t,g}).
\]
StructGen-Split uses a structured right-generator library and a split product of
factor exponentials. Thus the DirectGen-SingleExp comparison is an intra-family
comparison between right-action realizations, and it changes both the generator
family and the discrete right-action backend.

\begin{table}[H]
\centering
\small
\caption{Model geometry and parameter counts in the primary Transport-MQAR comparison. All models use \(d_{\rm model}=128\).}
\label{tab:app-transport-mqar-model-geometry}
\resizebox{\linewidth}{!}{
\begin{tabular}{lrrrrrlr}
\toprule
Model & \(e\) & \(D_{\rm cell}\) & \(N\) & \(P_g\) & \(G\) & Role & Params \\
\midrule
GRU & -- & -- & -- & -- & -- & recurrent baseline & 0.50M \\
Transformer & -- & -- & -- & -- & -- & attention baseline & 0.89M \\
\midrule
Free enc/dec & 2 & 256 & 32 & 4 & 64 & static-basis control & 5.11M \\
No-right & 2 & 256 & 32 & 4 & 64 & right-action ablation & 4.98M \\
DirectGen-SingleExp & 1 & 128 & 32 & 4 & 32 & direct-generator right action & 1.52M \\
StructGen-Split & 2 & 256 & 32 & 4 & 64 & structured split right action & 6.03M \\
\bottomrule
\end{tabular}}
\end{table}

\subsection{Training protocol}
\label{app:transport-mqar-protocol}

Final out-of-distribution evaluation lengths are not used for model selection.
For SHiPPO-family variants, hyperparameters are selected using validation data
at the configured training length only. All main Transport-MQAR results are
means and sample standard deviations over \(n=3\) independent training seeds.
Training uses AdamW \citep{loshchilov2019decoupled} with learning rate \(5\times10^{-4}\), weight decay 0.01,
batch size 16, 5000 steps, evaluation every 250 steps, and gradient clipping at
1.0. Matched-capacity controls use a separate shared stress-test budget and are
capacity checks rather than the primary same-width comparison.

\subsection{Full primary results}
\label{app:full-transport-mqar-results}

Table~\ref{tab:app-transport-mqar-primary} reports the full length sweep behind
Table~\ref{tab:transport-mqar-summary}. The main-text interpretation is based on
this table: StructGen-Split improves coordinate accuracy relative to no-right
and static-basis ablations, while DirectGen-SingleExp remains stronger on exact
accuracy. Thus the results support the usefulness of a right-action pathway
while leaving the preferred discrete right-action realization open.

\begin{table}[H]
\centering
\small
\setlength{\tabcolsep}{4pt}
\renewcommand{\arraystretch}{1.08}
\caption{Primary Transport-MQAR length sweep. Entries are means \(\pm\) sample
standard deviations over \(n=3\) seeds.}
\label{tab:app-transport-mqar-primary}
\label{tab:app-transport-mqar-coord-primary}
\label{tab:app-transport-mqar-exact-primary}
\begin{tabular}{@{}lcccc@{}}
\toprule
\multicolumn{5}{@{}c@{}}{Coordinate accuracy} \\
\midrule
Model & 128 & 512 & 2048 & 4096 \\
\midrule
GRU & \ms{0.1841}{0.0012} & \ms{0.1150}{0.0021} & \ms{0.0982}{0.0011} & \ms{0.0952}{0.0017} \\
Transformer & \ms{0.2357}{0.0145} & \ms{0.0986}{0.0099} & \ms{0.0506}{0.0032} & \ms{0.0416}{0.0012} \\
Free enc/dec & \ms{0.1897}{0.0030} & \ms{0.1194}{0.0003} & \ms{0.0990}{0.0041} & \ms{0.0916}{0.0100} \\
No-right & \ms{0.1950}{0.0030} & \ms{0.1235}{0.0008} & \ms{0.1053}{0.0004} & \ms{0.1019}{0.0000} \\
DirectGen-SingleExp & \ms{0.1966}{0.0041} & \ms{0.1248}{0.0009} & \ms{0.1066}{0.0006} & \ms{0.1032}{0.0007} \\
StructGen-Split & \ms{0.2115}{0.0028} & \ms{0.1346}{0.0007} & \ms{0.1140}{0.0002} & \ms{0.1104}{0.0010} \\
\bottomrule
\end{tabular}

\vspace{0.75em}

\begin{tabular}{@{}lcccc@{}}
\toprule
\multicolumn{5}{@{}c@{}}{Exact accuracy} \\
\midrule
Model & 128 & 512 & 2048 & 4096 \\
\midrule
GRU & \ms{0.0535}{0.0021} & \ms{0.0239}{0.0026} & \ms{0.0167}{0.0021} & \ms{0.0159}{0.0024} \\
Transformer & \ms{0.0648}{0.0120} & \ms{0.0166}{0.0035} & \ms{0.0043}{0.0010} & \ms{0.0021}{0.0006} \\
Free enc/dec & \ms{0.0446}{0.0023} & \ms{0.0207}{0.0003} & \ms{0.0120}{0.0023} & \ms{0.0094}{0.0042} \\
No-right & \ms{0.0501}{0.0022} & \ms{0.0242}{0.0001} & \ms{0.0170}{0.0004} & \ms{0.0158}{0.0003} \\
DirectGen-SingleExp & \ms{0.0851}{0.0020} & \ms{0.0468}{0.0017} & \ms{0.0375}{0.0019} & \ms{0.0355}{0.0005} \\
StructGen-Split & \ms{0.0795}{0.0021} & \ms{0.0439}{0.0016} & \ms{0.0342}{0.0018} & \ms{0.0329}{0.0013} \\
\bottomrule
\end{tabular}
\end{table}

\subsection{Controller-suffix counterfactual}
\label{app:transport-mqar-controller-counterfactual}

For StructGen-Split, the controller emits an additional 1024-dimensional suffix
associated with the split-flow right-transport coordinates. We zero this suffix
at evaluation time while keeping all trained weights fixed. Table~\ref{tab:app-zero-extra-controller-head}
shows that the trained predictor depends on these additional coordinates. This
counterfactual does not identify the learned transport geometry or prove that
the model implements a specific noncommutative algorithm.

\begin{table}[H]
\centering
\small
\setlength{\tabcolsep}{4pt}
\renewcommand{\arraystretch}{1.08}
\caption{Evaluation-time counterfactual for the StructGen-Split controller suffix. Means over three seeds.}
\label{tab:app-zero-extra-controller-head}
\begin{tabular}{@{}lrrrrrr@{}}
\toprule
& \multicolumn{3}{c}{Coordinate accuracy} & \multicolumn{3}{c}{Exact accuracy} \\
\cmidrule(lr){2-4}\cmidrule(lr){5-7}
Length & normal & zeroed & \(\Delta\) & normal & zeroed & \(\Delta\) \\
\midrule
128  & 0.2115 & 0.1936 & -0.0179 & 0.0795 & 0.0436 & -0.0359 \\
512  & 0.1346 & 0.1250 & -0.0096 & 0.0439 & 0.0239 & -0.0200 \\
2048 & 0.1140 & 0.1074 & -0.0066 & 0.0342 & 0.0185 & -0.0157 \\
4096 & 0.1104 & 0.1044 & -0.0060 & 0.0329 & 0.0176 & -0.0154 \\
\bottomrule
\end{tabular}
\end{table}

\section{Additional Related Work and Positioning}
\label{app:related-positioning}

We organize related work by the source of the recurrent dynamics and by where channel or matrix-valued interaction enters the memory. In sequence modeling, choosing an ODE or recurrence family is itself a restrictive prior: it biases which finite-dimensional causal memory operators are representable, which stability constraints are natural, and which execution schemes are compatible with the model. Parameterization, initialization, and discretization then determine how a trainable layer explores and implements that family. SHiPPO contributes at the dynamics-selection level: it derives a channel-interacting Sylvester ODE from a transported online approximation problem, and then studies scan-compatible parameterizations of that ODE.

Relative to nearby recurrent, SSM, and matrix-state model families, SHiPPO's main distinction is that channel interaction is tied to transported online-projection semantics. It differs from ODE/CDE and stable RNN families by deriving the ODE from projection coefficients rather than choosing a generic or stability-motivated vector field \citep{chen2018neuralode,kidger2020neuralcde,chang2019antisymmetricrnn,erichson2021lipschitzrnn}. It differs from objective-derived recurrences by transporting the polynomial projection objective, yielding closed Sylvester coefficient dynamics \citep{voelker2019lmu,gu2020hippo,gu2022trainhippo,liu2025longhorn}. It differs from structured and selective SSMs by changing the memory ODE induced by the online-memory prior before choosing a scan-compatible parameterization \citep{gu2021s4,gu2022s4d,smith2023s5,gu2023mamba,dao2024transformersssms,lahoti2026mamba3}. It differs from linear attention, DeltaNet, GLA, GDN, and matrix-memory RNNs by assigning projection-coefficient and transported-channel-frame semantics to the two matrix axes \citep{schlag2021fastweights,yang2024deltanet,yang2024gla,yang2025gateddeltanet,beck2024xlstm,qin2024hgrn2,mishra2026m2rnn}. Finally, Proposition~\ref{prop:collapse} provides an internal collapse criterion: in the scan-compatible group-local realization, simultaneously reducible right transports are equivalent, after a fixed channel-basis change, to independent scalar or blockwise transported banks. This criterion connects SHiPPO to structured-transition and state-tracking work that studies expressivity and efficient execution under restricted transition families \citep{merrill2024illusion,terzic2025pdssm,siems2025deltaproduct,walker2025slice}.

\paragraph{Dynamics families as restrictive priors.}
Continuous-time sequence models use differential equations as modeling primitives. Neural ODEs parameterize the derivative of the hidden state by a neural network and compute outputs through ODE solvers, while Latent ODE and Neural CDE models extend this viewpoint to irregularly sampled or partially observed time series \citep{chen2018neuralode,rubanova2019latentode,kidger2020neuralcde}. Other recurrent architectures choose more structured ODE forms to impose stability, bounded-gradient, oscillatory, or timescale priors: AntisymmetricRNNs, coRNNs, Lipschitz RNNs, Liquid Time-constant Networks, Liquid-S4, and LinOSS are examples of this dynamics-as-prior viewpoint \citep{chang2019antisymmetricrnn,rusch2021cornn,erichson2021lipschitzrnn,hasani2021ltc,hasani2022liquids4,rusch2024linoss}. Layer-wise nonlinear SSMs and stable SSM parameterizations further show that expressivity and memory behavior depend on the chosen dynamics and its stable parameterization \citep{wang2023layerwise,wang2023stablessm}. Recent dynamical-systems frameworks also compare attention, SSMs, and RNNs in a common representation \citep{sieber2024dsf}. SHiPPO is related at the level of treating dynamics as prior, but differs in the source of the dynamics: its ODE is not a generic learned vector field or a stability-motivated continuous-time RNN, but the coefficient dynamics induced by a transported online projection problem.

\paragraph{Analytically and objectively derived memory dynamics.}
SHiPPO is closest in spirit to models whose recurrent updates are derived from an objective or analytic memory principle rather than postulated as an architecture. The Legendre Memory Unit is an important precursor: it derives a continuous-time recurrent memory cell whose ODE state represents a sliding history window in a Legendre basis \citep{voelker2019lmu}. HiPPO generalizes this projection-memory viewpoint into an online polynomial projection framework, deriving coefficient dynamics for compressing the revealed history \citep{gu2020hippo,gu2022trainhippo}. Recent work such as HiPPO Zoo revisits this line by making orthogonal-polynomial memory mechanisms explicit and interpretable \citep{goffinet2026hippozoo}. SHiPPO is complementary: it does not add a collection of explicit polynomial-basis mechanisms, but transports the channel frame of the online approximation problem itself, yielding Sylvester right-action coefficient dynamics. Other recent work derives recurrent layers from online-learning or test-time learning objectives: Longhorn views SSMs as amortized online learners, TTT layers update hidden states through self-supervised learning steps, and MesaNet derives recurrent layers from locally optimal in-context regression objectives \citep{liu2025longhorn,sun2025ttt,vonoswald2026mesanet}. SHiPPO shares the objective-derived design philosophy, but its state is a transported approximation-coefficient matrix, and its right-action term is the transport required by the chosen approximation objective.

\paragraph{Structured and selective SSMs: recurrence family versus parameterization.}
Deep SSMs illustrate that similar state-equation templates can yield different models depending on parameterization, initialization, and discretization. S4 uses a structured parameterization of continuous-time state matrices for long-sequence modeling, while DSS and S4D simplify this structure to diagonal state spaces while retaining much of the empirical benefit through appropriate parameterization and initialization \citep{gu2021s4,gupta2022dss,gu2022s4d}. LRU similarly shows that linearization, diagonalization, initialization, and normalization can recover much of the performance of deep SSMs in an RNN block \citep{orvieto2023lru}. S5 moves from many independent SISO systems to a MIMO SSM layer, and H3 introduces multiplicative interactions between SSM outputs and input projections to address recall and comparison capabilities in language modeling \citep{smith2023s5,fu2023h3}. Selective SSMs such as Mamba make SSM parameters input-dependent and design hardware-aware scans; Mamba-2/SSD clarifies the semiseparable structure behind such recurrences; Mamba-3 further modifies the recurrence through discretization, complex dynamics, and MIMO updates; and recent analyses study the role of input selectivity in approximation, memorization, and associative recall \citep{gu2023mamba,dao2024transformersssms,lahoti2026mamba3,huang2025inputselectivity}. Other work studies preferential or procedural biases within a chosen SSM family, including perturb-then-diagonalize, Hankel/Markov parameterization, spectral or frequency-bias tuning, uncertainty-aware initialization, autocorrelation-aware initialization, and mimetic initialization \citep{yu2023ptd,yu2024hope,yu2024frequency,solozabal2025spectralbias,lienen2025unhippo,liu2025autocorrelation,trockman2024mimetic}. These works modify how a chosen dynamics family is parameterized, initialized, or trained. SHiPPO instead changes the memory ODE selected by the online-memory prior before choosing a scan-compatible parameterization.

\paragraph{Matrix memories, fast weights, and gated linear recurrences.}
Many efficient recurrent models introduce channel interaction through architectural or algorithmic mechanisms. Fast-weight and delta-rule views of linear attention interpret matrix-valued recurrent states as associative memories programmed by outer-product or correction updates \citep{schlag2021fastweights,yang2024deltanet}. Gated Linear Attention and implicit-attention views of gated-linear RNNs maintain two-dimensional associative memories and introduce data-dependent gates \citep{yang2024gla,zimerman2024implicitattention}. Gated DeltaNet combines Mamba-style gating with the delta rule \citep{yang2025gateddeltanet}, and Kimi Linear introduces Kimi Delta Attention (KDA), an extension of Gated DeltaNet with finer-grained gating and efficient chunkwise execution \citep{kimiteam2025kimilinear}. Matrix-memory recurrent models such as xLSTM/mLSTM and HGRN2 increase memory capacity through exponential gating, covariance-style memory, or outer-product state expansion \citep{beck2024xlstm,qin2024hgrn2}. Recent matrix-to-matrix and weight-space RNNs further explore matrix-valued or weight-valued hidden states \citep{mishra2026m2rnn,nzoyem2025warprnn}. SHiPPO is close to these works at the level of state shape, but the axes have different semantics: the left axis indexes online approximation coefficients, while the right axis is a transported channel frame. The two-sided update is therefore not introduced as an untied matrix-state recurrence; it is the discrete realization of a transported coefficient ODE.

\paragraph{Structured transitions and state-tracking expressivity.}
A separate line of work studies which transition structures preserve expressivity while remaining parallelizable. Formal-language and state-tracking analyses identify both strengths and limitations of common SSMs, selective SSMs, diagonal or positive-eigenvalue recurrences, and gated variants \citep{sarrof2024formal,merrill2024illusion,cirone2024deepselectivessm,terzic2025regular,grazzi2024negative,shakerinava2026diagonal,alsmann2026temporallogics}. Several architectural responses enrich transition structure while trying to preserve efficient execution: fixed-point RNNs interpolate from diagonal to dense recurrences, bilinear state transitions use hidden--input multiplicative updates, adaptive unitary SSMs use input-dependent skew/unitary dynamics, DeltaProduct uses products of generalized Householder transformations, SLiCE studies structured input-dependent transition matrices, and PD-SSM uses structured sparse transitions for FSA state tracking \citep{movahedi2025fixedpoint,ebrahimi2025bilinear,karuvally2025adaptiveunitary,siems2025deltaproduct,walker2025slice,terzic2025pdssm}. Recent coefficient-dynamics views also analyze sequence models through the dynamics of coefficients used to combine past value vectors \citep{sieber2025coefficientdynamics}. SHiPPO is complementary to this literature. It does not propose a general structured-transition framework; it identifies the two-sided transition induced by transported online-memory semantics. Our collapse result shows that, in the scan-compatible group-local realization, simultaneously reducible right transports are functionally equivalent to static mixing followed by independent scalar or blockwise transported banks, motivating non-reducible right-transport families.

\paragraph{Moving-frame interpretation and non-reduction.}
For a realized transport path, SHiPPO admits an exact moving-frame HiPPO
factorization, but not an arbitrary encoder--decoder reduction: the history
encoder, coefficient decoder, metric, and Sylvester gauge term are tied by the
same transport ODE. With input-dependent controllers, this interpretation is
pathwise rather than a fixed input-independent factorization.


\end{document}